\newif\ifdraft\draftfalse
\let\classAND\AND
\let\AND\relax
\let\AND\classAND
\newcommand{\RR}{\mathbb{R}} 
\newcommand{\NN}{\mathbb{N}} 
\newcommand{\robx}{x_r}
\newcommand{\conx}{x_c}
\newcommand{\xz}{x_0} 
\newcommand{\thetaz}{\theta_0} 
\newcommand{\newx}{x'} 
\newcommand{\newtheta}{\theta'} 
\newcommand{\pred}{\hat{\theta}} 
\newcommand{\regret}{r} 
\newcommand{\xstar}{{x^*}} 
\DeclareMathOperator*{\argmax}{arg\,max}
\DeclareMathOperator*{\argmin}{arg\,min}
\DeclareMathOperator{\sign}{sgn}
\theoremstyle{plain}
\newtheorem{theorem}{Theorem}[section]
\newtheorem{proposition}[theorem]{Proposition}
\newtheorem{lemma}[theorem]{Lemma}
\theoremstyle{definition}
\newtheorem{definition}[theorem]{Definition}
\newtheorem{assumption}[theorem]{Assumption}
\newtheorem{observation}[theorem]{Observation}
\theoremstyle{remark}
\newcommand{\sj}[1]{{\ifdraft\color{purple}Shahin: {#1}\fi}}
\newcommand{\vg}[1]{{\ifdraft\color{orange}Vasilis: {#1}\fi}}
\newcommand{\edit}[1]{{\color{black}{#1}}}
\title{Learning-Augmented Robust Algorithmic Recourse}
\author{
      \name \hspace{-1mm}Kshitij Kayastha \email kk985@drexel.edu \vspace{-0.13in}\\
      \addr Department of Computer Science, 
      Drexel University, USA
      \AND
      \name Vasilis Gkatzelis \email gkatz@drexel.edu \\
      \addr Department of Computer Science, 
      Drexel University, USA\\
      Archimedes, Athena Research Center, Greece
      \AND
      \name Shahin Jabbari \email shahin@drexel.edu\\
      \addr Department of Computer Science, 
      Drexel University, USA
}
\begin{document}

\maketitle
\begin{abstract}
Algorithmic recourse provides individuals who receive undesirable outcomes from machine learning systems with minimum-cost improvements to achieve a desirable outcome. However, machine learning models often get updated, so the recourse may not lead to the desired outcome. The robust recourse framework chooses recourses that are less sensitive to adversarial model changes, but this comes at a higher cost. To address this, we initiate the study of learning-augmented algorithmic recourse and evaluate the extent to which a designer equipped with a prediction of the future model can reduce the cost of recourse when the prediction is accurate (consistency) while also limiting the cost even when the prediction is inaccurate (robustness). We propose a novel algorithm, study the robustness-consistency trade-off, and analyze how prediction accuracy affects performance.    
\end{abstract}

\section{Introduction}
\label{sec:intro}
Machine learning models are deployed even in sensitive domains such as lending or hiring, e.g., financial institutions use these models to determine whether someone should receive a loan. Given the major impact of these decisions on people's lives, a plethora of work in responsible machine learning aims to make these models fair~\citep{BerkHJKR18, BarocasHN19, HardtPS16, ZafarVGG17}, transparent~\citep{LakkarajuBL16, Rudin19}, and explainable~\citep{RibeiroSG16, LundbergL17, SmilkovTKVW17}. A notable line of work along this direction, called \emph{algorithmic recourse}~\citep{WachterMR18, UstunSL19}, provides each individual who received an undesirable label (e.g., one whose loan request was denied) with a minimum-cost improvement to achieve the desired label.

An important weakness of much of the work on algorithmic recourse is the assumption that models are fixed and do not change. In practice, models are periodically updated to reflect changes in data or environment, causing a recourse to become invalid, i.e., following it may not yield a desirable outcome~\citep{Dominguez-OlmedoKS22}. To alleviate this problem,~\citet{UpadhyayJL21} proposed a framework that is \emph{robust} to adversarial model changes and provided an algorithm, ROAR, to compute robust recourses. However, guaranteeing robustness in such a setting leads to
significantly increased cost~\citep{PawelczykDHKL23}.

Our main goal is to design more practical recourse algorithms that balance robustness and cost. To achieve this goal, we move beyond the adversarial setting and leverage the learning-augmented framework~\citep{MitzenmacherV20}, which has been used in a surge of recent work to overcome the limitations of adversarial (i.e., worst-case) analysis. Specifically, rather than assuming that the designer has no information regarding the model changes, we assume they can at least formulate some predictions regarding these changes. However, these predictions are unreliable, so our goal is to 
compute recourses that are near-optimal if the predictions are accurate (consistency), while maintaining good performance even in the \edit{worst case}, i.e., even when the predictions are highly inaccurate (robustness). 

\textbf{Our Results} 
We first adapt the learning-augmented framework and the objectives of consistency and robustness to algorithmic recourse, and we provide a computationally efficient algorithm aiming to optimize these two objectives, as well as their convex combinations. We then formally prove that our algorithm is always optimal for robustness and consistency for any generalized linear model. Note that this is a non-convex problem and, to the best of our knowledge, this is the first optimal algorithm for any robust recourse problem (prior work used algorithms which, as we show, return local rather than global optima). Therefore, this is a contribution to the broader robust recourse literature, beyond the learning-augmented setting. We then experimentally evaluate the algorithm's performance on a variety of both linear and non-linear models. 

In our experiments, we use our algorithm to evaluate the achievable trade-offs between consistency and robustness across different datasets, models, and predictions. In all of these settings, our algorithm returns multiple recourses that Pareto dominate ROAR's recourses, i.e., they simultaneously achieve better robustness and consistency. 
Moving beyond consistency and robustness (which correspond to the extreme cases of perfect predictions and adversarial predictions), we evaluate the performance of our algorithm given near-accurate predictions. Specifically, we evaluate the algorithm's ``smoothness'' as a function of the prediction error and measure how smoothness depends on the extent to which the algorithm ``trusts'' the prediction. Finally, we compare our algorithm to prior work for the combinations of validity and cost. Our results indicate that our algorithm achieves higher validity and, in many cases, even without suffering a (much) higher cost.
\subsection{Related Work}
\label{sec:related}

The emerging literature on the interpretability and explainability of machine learning systems mainly advocates for two main approaches. The first approach aims to build inherently simple or interpretable models such as decision lists~\citep{LakkarajuBL16} or generalized additive models~\citep{WangRDLKM17, YangRS17}. These approaches provide \emph{global} explanations for the deployed models. The second approach attempts to explain the decisions of complex black-box models (such as deep neural networks) only on specific inputs~\citep{RibeiroSG16, LundbergL17,SmilkovTKVW17, SundararajanTY17, SelvarajuCDVPB17, AgarwalJA+21, KohL17, BarshanBD20, EhyaiSS24}.  These approaches provide a \emph{local} explanation of the model and are sometimes referred to as post-hoc explanations.

Recourse is a post-hoc counterfactual explanation that aims to provide the lowest cost modification that changes the prediction for a given input with an undesirable prediction under the current model~\citep{WachterMR18, UstunSL19, LooverenK21, RawalL20, SlackHLS21, PawelczykDHKL23, GargNS25}. Since its introduction, different formulations have been used to model the optimization problem in recourse (see~\citep{VermaDH20} for an overview).~\citet{WachterMR18}~and~ \citet{PawelczykDHKL23} considered score-based classifiers and defined modifications to help instances achieve the desired scores.  On the other hand, for binary classifiers,~\citet{UstunSL19} required the modification to result in the desired label. Roughly speaking, the first setting can be viewed as a relaxation of the second setting, and we follow the second formulation in our work.

The follow-up works on the problem study several other aspects such as focusing on specific models such as linear models~\citep{UstunSL19} or decision-trees~\citep{KanamoriTKI24, BewleyAM+24}, understanding the setting and its implicit assumptions and implications~\citep{BarocasSR20, VenkatasubramanianA20, FokkemaGE24, GaoL23}, attainability or actionability~\citep{JoshiKV+19, PawelczykBK20b, KarimiBBV20, UstunSL19}, imperfect causal knowledge~\citep{KarimiKSV20}, fairness in terms of cost of implementation for different subgroups~\citep{GuldoganZSP023, GuptaNR+19, HeidariNG19}, repeated dynamics~\citep{FonsecaBABS23,BellFA+24,EhyaiSS24}, temporal data~\citep{BuligaDG+25} and providing model-agnostic explanations for recourse by computing responsiveness scores~\citep{CheonWFU25}. Extending our work to account for these different aspects of recourse is left for future work.

The most closely related work to us is by \citet{UpadhyayJL21}, which initiated the study of robust recourse and proposes an algorithm called RObust Algorithmic Recourse (ROAR). We study the same framework for our robustness analysis. \citet{NguyenBN+22} proposed a new framework for robust recourse that uses a different objective than ROAR and focuses on data shifts instead of model shifts. They call their framework \emph{Robust Bayesian Recourse (RBR)} and we refer to their algorithm with the same abbreviation. The RoCourseNet algorithm~\citep{GuoJC+23} also provided robust recourse, though a direct comparison with this algorithm is not possible, as it is an end-to-end approach, i.e., it simultaneously optimizes for the learned model and robust recourse while the initial model is fixed in our approach, just like in~\citep{UpadhyayJL21, NguyenBN+22}. Similar to the literature on algorithmic recourse, the literature on robust algorithmic recourse has also studied specialized settings (for different model classes, cost functions, and model changes) and additional desiderata such as fairness~\citep{LeofanteP24,YetukuriHVUL24, JiangL0T24, NguyenBN23, DuttaLMTM22, MochaourabSG+21}. Neither approach uses predictions, though we compare our algorithm with both ROAR and RBR when studying the trade-off between validity and cost of robust recourse in the absence of predictions. Very recently,~\citet{TurbalVS25} provides provably robust algorithms for recourse under predictive multiplicity~\citep{MarxCU20} when the set of models corresponds to an ellipsoidal approximation of the Rashomon set~\citep{Breiman01} (i.e., models that are similar in performance on the data distribution~\citep{SemenovaR19, XinZC+22}). However, in our setting, the set of adversarial models might perform significantly differently on the data distribution. Even more recently~\citet{KyawKJ25} introduced algorithms for robust recourse under linear models and $L^p$-bounded model changes for the case of $p < \infty$ and showed that this restriction on the adversarial model changes can cause the price of recourse to decrease compared to our adversarial model.  See \citep{JiangLR+24} for a recent survey on robust recourse.

In addition to the above works,~\citet{PawelczykLBK23} studies how model updates due to the ``right to be forgotten\edit{''} can affect recourse validity. \citet{Dominguez-OlmedoKS22} showed that minimum cost recourse solutions are provably not robust to adversarial perturbations in the model and then presented robust recourse solutions for linear and differentiable models.~\citet{BlackWF22} observe that recourse in deep models can be invalid by small perturbations and suggest that the model's Lipschitzness at the counterfactual point is the key to preserving validity.~\citet{HammanNMMD23} proposed the notion of ``naturally-occurring\edit{''} model change, and provide recourse with theoretical guarantees on the validity of the recourse.  

Recourse is also closely related to adversarial training or robust machine learning~\citep{MadryMSTV18, WongK18, AthalyeC018}. \citet{PawelczykAJUL22} studied the connections between various recourse formulations and adversarial training literature. In fact, ROAR~\citep{UpadhyayJL21} builds on gradient-based methods originally developed for adversarial training. The convergence of such algorithms has been studied extensively under various assumptions (see, e.g.,~\citep{WangM0YZG19}). Theoretical guarantees for adversarial training typically rely on specific data distributions and model classes. For instance, with data drawn from a mixture of Gaussians, the optimal robust classifiers are linear~\citep{li2019inductive}. Moreover,~\citet{AwasthiDV19} propose an algorithm for learning robust linear classifiers under the realizability assumption and establish hardness results for robust learning of degree-2 polynomial threshold functions. Beyond linear models, theoretical analyses have also been extended to other classes such as decision trees~\citep{vos2021efficient, vos2021roct}. To our knowledge, however, none of these algorithms directly address our setting.

The learning-augmented framework is applied to a wide variety of settings, aiming to provide a refined understanding of the performance guarantees that are achievable beyond the worst case. Unlike approaches in robust optimization that utilize uncertainty sets~\citep{RobustOptimization}, the learning augmented framework does not impose any assumptions on the quality of the prediction. One of the main application domains is the design of algorithms (e.g., online algorithms~\citep{LykourisV18,PSK18}), but it has also been used in data structures design~\citep{KBCDP18}, mechanisms interacting with strategic agents~\citep{ABGOT22,XL22}, and privacy-preserving methods for processing sensitive data~\citep{Khodak0DV23}. This is already a vast and rapidly growing literature; see \citep{alps} for a frequently updated and organized list of related papers. 
\section{Preliminaries}
\label{sec:prelim}
Consider a predictive model $f_{\theta}: \mathcal{X} \to \mathcal{Y}$, parameterized by $\theta \in \Theta \subseteq \RR^d$, which maps instances (e.g., loan applicants) from a feature space $\mathcal{X}\subseteq \RR^d$ to an outcome space $\mathcal{Y} = \{0,1\}$. The values of 0 and 1 represent undesirable and desirable outcomes (e.g., loan denial or approval), respectively.
If a model $f_{\thetaz}$ yields an undesirable outcome for some instance $\xz\in \mathcal{X}$, i.e., $f_{\thetaz}(\xz)=0$, the goal is to suggest an \emph{optimal} recourse: the least costly way to modify $\xz$ (how an applicant should strengthen their application) so that the resulting instance $\newx$ would achieve the desirable outcome under $f_{\thetaz}$, i.e., $f_{\thetaz}(\newx)=1$. Given a cost function $c:\mathcal{X}\times\mathcal{X}\to \RR_{+}$ quantifying the cost $c\left(\newx,\xz\right)$ required to modify $\xz$ to $\newx$, the recourse is defined as the following optimization problem~\citep{UpadhyayJL21}:
\begin{equation}
    \label{eq:recourse}
    \min_{\newx\in \mathcal{X}} \ell\left(f_{\thetaz}(\newx),1\right)+\lambda \cdot c\left(\newx,\xz\right),
\end{equation}
where $\ell: \RR \times \RR \to \RR_+$ is a convex loss function that is decreasing in its first argument (such as binary cross entropy or squared loss) and captures the extent to which the condition $f_{\thetaz}(\newx)=1$ is violated and $\lambda\geq 0$ is a regularizer that balances the degree of violation from the desirable outcome and the cost of modifying $\xz$ to $\newx$. The regularizer $\lambda$ can be decreased gradually until the desired outcome is reached. In this work, following the approach of \citep{UpadhyayJL21, RawalL20}, we assume the cost function is the $L^1$ distance, i.e. $c(x, x')=\|x-x'\|_1$. 

We denote the \emph{price} of a recourse $\newx$ for a model $\theta$ using
\begin{equation}
J(\xz, \newx, \theta, \lambda) =\ell\left(f_{\theta}(\newx),1\right)+\lambda \cdot c(x', x_0).
\end{equation}
For simplicity, we write the price as $J(\newx, \theta)$ instead.

Equation~\eqref{eq:recourse} assumes that the parameters of the model remain the same over time, but in practice, predictive models may be periodically retrained and updated~\citep{UpadhyayJL21}. These updates can cause a recourse that is valid in the original model (i.e., one that would lead to the desirable outcome in that model) to become invalid in the updated model~\citep{DuttaLMTM22, BlackWF22}. It is, therefore, natural to require a recourse solution whose validity is robust to (slight) changes in the model parameters.

In line with prior work on robust recourse~\citep{UpadhyayJL21, BlackWF22}, we assume that the parameters of the updated model can be any $\newtheta\in \Theta_\alpha$, where $\Theta_\alpha=\{\theta : \|\theta-\thetaz\|_{\infty}\leq \alpha\}\subseteq \Theta$ is a ``neighborhood'' around the parameters, $\thetaz$, of the original model, defined using the $L^\infty$ distance and a \emph{known} parameter $\alpha$. Given $\thetaz$ and $\Theta_\alpha$, the \emph{robust} solution would be to choose a recourse $\robx$ that minimizes the price assuming the parameters of the updated model $\newtheta\in \Theta_\alpha$ are chosen adversarially, i.e.,
\begin{equation}
\label{eq:xr}
\robx \in \arg \min_{x'\in \mathcal{X}}~\max_{\newtheta\in \Theta_\alpha} J(\newx,\newtheta).
\end{equation}

\section{Learning-Augmented Framework}
\label{sec:learning-augmented}

Choosing the robust recourse $x_r$ according to $\eqref{eq:xr}$ optimizes the price against an adversarially chosen $\theta'\in \Theta_\alpha$, but this price may be much higher than the optimal price if we knew $\theta'$. This is due to the overly pessimistic assumption that the designer has no information regarding the realized $\theta'\in \Theta_\alpha$. To overcome similarly pessimistic results in other domains, the learning-augmented framework~\citep{MitzenmacherV20} provides a more refined analysis by assuming the designer is equipped with an \emph{unreliable prediction} and then seeks to achieve near-optimal performance whenever the prediction is accurate while simultaneously maintaining some robustness even if the prediction is arbitrarily inaccurate.

To adapt the learning-augmented framework to algorithmic recourse, we assume that the designer can generate, or is provided with, a possibly unreliable prediction $\pred\in \Theta_\alpha$ regarding the model's parameters after the model change. For example, in lending, a prediction can be inferred by information on whether the lender would tighten or loosen its policy over time, or can convey more information in the form of an exact prediction for the future model. If the designer trusts the accuracy of prediction $\pred$, then an optimal solution is to choose a recourse $x_c$ consistent with $\pred$ i.e., 
\begin{equation}
\label{eq:xc}
    \conx \in \arg\min_{x'\in \mathcal{X}} J(x', \pred).
\end{equation}
Since the prediction is unreliable, following it blindly can lead to poor performance. In the learning-augmented framework, the \emph{robustness} and \emph{consistency} measures are used to evaluate performance.

\begin{definition}[Robustness]
\label{def:robustness}
Given a parameter $\alpha$, the \emph{robustness} of a recourse $\newx\in\mathcal{X}$ is 
\begin{equation}
\label{eq:robustness}
R(\newx, \alpha) = \max_{\newtheta\in \Theta_\alpha} J(\newx,\newtheta) -\max_{\newtheta\in \Theta_\alpha} J(\robx,\newtheta),
\end{equation}
where $\robx$ is defined in Equation~\eqref{eq:xr}.
\end{definition}
The robustness evaluates the worst-case price of $\newx$ against an adversarial change of the model and then compares it to the price of $\robx$. The robustness is always at least $0$ (achieved by $\newx=\robx$), and lower is more desirable. While prior work~\citep{UpadhyayJL21} measures robustness in absolute terms, we evaluate it relative to $x_r$ to enable a direct comparison between robustness and consistency.

\begin{definition}[Consistency]
\label{def:consistency}
Given a prediction $\pred\in \Theta_\alpha$, the consistency of a recourse $\newx\in \mathcal{X}$ is  
\begin{equation}
\label{eq:consistency}
C(\newx, \pred) = J(\newx,\pred)-J(\conx,\pred),
\end{equation}
where $\conx$ is defined in Equation~\eqref{eq:xc}.
\end{definition}
The consistency is always at least zero (achieved by $\newx=\conx$), and lower is more desirable.

Choosing $\robx$ guarantees an optimal robustness of $0$, but can lead to poor consistency. Choosing $\conx$ guarantees an optimal consistency of $0$, but can lead to poor robustness. 
We study the trade-off between robustness and consistency by studying the following problem:
\begin{equation}
\min_{\newx}  \beta \cdot R(\newx, \alpha) + (1-\beta) \cdot C(\newx, \pred),
\label{eq:rob-cons-trade}
\end{equation}
where $\beta \in [0,1]$. Solving for varying $\beta$s results in recourses ranging from the optimal robust recourse at $\beta =1$ to the optimal consistent recourse at $\beta=0$.
\subsection{Computing Robust/Consistent Recourses}
\label{sec:theory}

We next provide Algorithm~\ref{alg:l1-linf} to optimize Objective~\eqref{eq:rob-cons-trade}. We introduce a few additional notations. We use $\sign(s)$ which is 1 when $s>0$, $0$ when $s=0$, and $-1$ when $s<0$. When applied to a vector, the $\sign$ is applied element-wise. For an integer $n\in \NN$,  $[n]:=\{1, \ldots, n\}$.

\begin{algorithm}[ht!]
\caption{RobustnessConsistencyTradeoff\label{alg:l1-linf}}
\textbf{Input}: $\xz$, $f_{\thetaz}$, $\ell$, $c$, $\alpha$, $\beta$, $\pred$ \enspace \textbf{Output}: $\newx$
\begin{algorithmic}[1] 
\STATE $\theta \gets $ linear approximation of $f_{\thetaz}$ at $\xz$
\STATE Initialize $\newx\gets \xz$ and 
\textsc{Active}=$[d]$ 
\FOR{ $i \in [d]$ }\label{step:forstart}
    \STATE Initialize worst-case $\newtheta[i]$ based on $\edit{\newx}[i]$ and $\theta[i]$
    \STATE $\textsc{Active}\gets \textsc{Active}\setminus\{i\}$ if $\newx_i$ cannot improve objective~\eqref{eq:rob-cons-trade}
\ENDFOR\label{step:forend}
\WHILE{$\textsc{Active}\neq \emptyset$}\label{step:whilestart}
    \STATE $i, \Delta \gets$ \textsc{FindOptimalDimensionAndUpdate}($\xz$, $\theta'$, $\ell$, $c$, $\beta$, $\pred$, $\newx$)\\ \label{step:dimension-update} 
    \IF{$\Delta=0$}
        \STATE break\hfill{$\triangleright$ Terminate} \label{step:terminate-delta}
    \ENDIF
    \IF{$\sign(\newx[i]+\Delta) = \sign(\newx[i])$}
        \STATE $\newx[i]\gets \newx[i]+\Delta$ \hfill{$\triangleright$ Update $\newx[i]$}\label{step:ifstart}
    \ELSE
        \STATE $\newx[i] \gets 0$ \hfill{$\triangleright$ Update but only until it reaches 0}\label{step:elsestart}
        \IF{$|\theta[i]| > \alpha$}
            \STATE $\newtheta[i] \gets \theta[i] + \alpha \cdot \sign(\xz[i])$ \hfill{$\triangleright$ Modify $\newtheta$} \label{step:adv_change1}\label{step:elseend}
        \ELSE
            \STATE $\textsc{Active}\gets \textsc{Active}\setminus\{i\}$\label{step:remove1}
        \ENDIF
    \ENDIF
\ENDWHILE
\RETURN $\newx$
\end{algorithmic}
\end{algorithm}

\begin{algorithm}[ht!]
\caption{\textsc{FindOptimalDimensionAndUpdate}\label{find:opt}}
\textbf{Input}: $\xz$, $\newtheta$, $\ell$, $c$, $\beta$, $\pred$, and $\newx$ \enspace \textbf{Output}: $i, \Delta$
\begin{algorithmic}[1] 
\STATE $C \gets \vec{0}$, $O \gets$ $\edit{\vec{-\infty}}$ \hfill{$\triangleright$ Keep track of change of $\newx$ and objective in each dimension}
\FOR{ $i \in [d]$ }
    \STATE $K(x)= \beta \cdot J(x,\newtheta) + (1-\beta) \cdot J(x, \pred)$ \hfill{$\triangleright$ Similar to objective~\eqref{eq:rob-cons-trade} without $\max$ over $\newtheta$ and constants} \label{eq:delta-alg1}
    \STATE $x^*\in\arg\min_{x: x[j]=\newx[j] \forall j\ne i} K(x)$ \hfill{$\triangleright$ Minimizer of $K$ if only dimension $i$ could change}\label{eq:1d-opt}
    \IF{
    \edit{$x^*[i] < x'[i] \leq x_0[i]$ or $x^*[i] > x'[i] \geq x_0[i]$}}
        \STATE $O[i]\gets K(\newx)-K(x^*)$ \hfill{$\triangleright$ Update if change is consistent with prior changes in $\newx[i]$} 
        \STATE $C[i] \gets x^*[i]-\newx[i]$
    \ENDIF
\ENDFOR
\STATE $i\gets\arg\edit{\max}_{k\in[d]} O[k]$,$\Delta \gets C[i]$\hfill{$\triangleright$ Dimension $i$ which most decreases $K$ and the change $\Delta$ in dimension}
\RETURN $i, \Delta$
\end{algorithmic}
\end{algorithm}

\edit{
Algorithm~\ref{alg:l1-linf} starts by approximating $f_{\theta}$ locally at $\xz$ with a linear model e.g., using LIME~\citep{RibeiroSG16}. The algorithm then computes the worst-case model $\theta'$ for the ``default'' recourse of $\newx=\xz$ (lines~\ref{step:forstart}-\ref{step:forend}). 
Then, facing $\theta'$, the algorithm greedily modifies $\newx$ while simultaneously updating $\newtheta$ to ensure that it remains the worst-case model for $x'$ (lines~\ref{step:whilestart}-\ref{step:remove1}). 
More specifically, in each iteration of the while loop, for the current $\newtheta$, the algorithm first calls the subroutine $\textsc{FindOptimalDimensionAndUpdate}$ (line~\ref{step:dimension-update}), implemented in Algorithm~\ref{find:opt}. This subroutine searches over each dimension and returns the dimension $i$, and the change $\Delta$ in that dimension that minimizes the objective~\eqref{eq:rob-cons-trade}, assuming $\theta'$ and all other coordinates of $x'$ are fixed.
Suppose the $\Delta$ returned by Algorithm~\ref{find:opt} is 0, meaning no improvement can be made on any dimension, then Algorithm~\ref{alg:l1-linf} terminates  (line~\ref{step:terminate-delta}). If $|\Delta|>0$, but applying this change to $x'[i]$ does not cause $x'[i]$ to change sign (i.e., $x'[i]+\Delta$ has the same sign as $x'[i]$), then the update is applied; in that case, the adversarial model $\thetaz$ does not need to be updated (line~\ref{step:ifstart}). On the other hand, if the recommended change flips the sign of $x'[i]$, this would cause the adversarial response to change as well. The algorithm instead applies this change all the way up to $x'[i]=0$ (i.e., up until the sign change), which maintains the same adversarial response. Then, $\theta'$ is updated with $\newtheta[i] \gets \theta[i] + \alpha \cdot \sign(\xz[i])$ (lines~\ref{step:elsestart}-\ref{step:elseend}), so that any subsequent changes of $x'[i]$ beyond $x'[i]=0$ are calculated with respect to the correct adversarial response. If, this update of $\theta'$ causes the sign of $\theta'[i]$ to change (i.e., if $|\theta[i]|>\alpha$), then no further change in this dimension can yield an improvement on our objective and $i$ is removed from the \textsc{Active} set (line~\ref{step:remove1}).}

Algorithm~\ref{alg:l1-linf} is efficient since (1) the linear approximation of $f_{\thetaz}$ can be computed in polynomial time~\citep{RibeiroSG16}, and (2) the while loop runs for $O(d)$ iterations and the running time of each iteration is dominated by the runtime of subroutine $\textsc{FindOptimalDimensionAndUpdate}$. This subroutine runs in polynomial time (e.g., by running at most $d$ one-dimensional gradient descents~\citep{BV2014}). See Appendix~\ref{sec:theory-appx} for more details, and Appendix~\ref{sec:app-larger-data} for the scalability experiments.

We analyze the theoretical properties of Algorithm~\ref{alg:l1-linf} by focusing on generalized linear models. A model $f_\theta$ is generalized linear if $f_\theta(x):= g \circ h_\theta(x)$, where $h_\theta: \mathcal{X}\to \RR$ is a linear function and $g: \RR \to [0,1]$ is a non-decreasing function mapping the outputs of $h_\theta$ to probabilities e.g., setting $g$ to the sigmoid recovers logistic regression. Our main result is as follows. 

\begin{theorem}
\label{thm:opt-l1-linf}
Suppose $f_\theta$ is a generalized linear model, $\beta\in\{0,1\}$, and the single-dimensional convex optimization in line~\ref{eq:1d-opt} of Algorithm~\ref{find:opt} can be solved in polynomial time. Then Algorithm~\ref{alg:l1-linf} returns a recourse $\newx\in\arg\min_{x}  \beta R(x, \alpha) + (1-\beta)C(x, \pred)$ in polynomial time.
\end{theorem}
\edit{
\begin{proof}[Proof Sketch]
We provide a sketch of the proof for the case of $\beta=1$ (i.e., for optimal robustness), and defer the full proof and the analysis of the case of $\beta=0$ (i.e., for optimal consistency) to Appendix~\ref{sec:theory-main-result-proof}. 

We start with Observation~\ref{obs:optimization}, which points out that we can assume $J(x', \theta')$ is a function only of the cost $\|x'- \xz\|_1$ of $x'$ and its inner product with its worst-case (adversarial) model, $x'\cdot \theta'$. Specifically, it is a linear increasing function of the former and a convex decreasing function of the latter. Using this observation, we view the problem as choosing a recourse $x'$ and suffering a cost of $\|x'- \xz\|_1$, aiming to maximize $x'\cdot \theta'$, and an adversary then chooses $\theta'$ aiming to minimize this inner product. The actual form of $J(\cdot)$ determines the extent to which we may want to trade off the cost of recourse $x'$ for an increase in the inner product, but the convexity with respect to the inner product suggests a decreasing marginal gain with respect to the latter. 

To better understand the structure of the adversarial response, Lemma~\ref{lem:adversary} shows that for any given recourse $x$, the adversarial model $\newtheta$ is essentially equal to $\thetaz-\alpha\cdot \sign(x)$. In other words, the adversary shifts each coordinate $i$ by $\alpha$ (the maximum shift that is allowed while remaining within $\Theta_\alpha$), and the direction is determined by whether $x[i]$ is positive or negative. As a result, we can assume that the adversarial choice of $\theta'$ for each dimension $i$ is always either $\thetaz+\alpha$ or $\thetaz-\alpha$. This also allows us to partition $\mathcal{X}$, the space of all possible recourses, into regions that would face the same adversary. Specifically, if $x$ and $x'$ are such that $\sign(x)=\sign(x')$, then their worst-case model is the same. 

Our algorithm starts from $\xz$ and gradually changes it until it reaches the robust recourse $\robx$. To determine its first step, the algorithm computes the worst-case model $\theta'$ for $\xz$ using the formula provided by Lemma~\ref{lem:adversary}. If we were to assume that the adversary would remain fixed at $\theta'$ no matter how we change the recourse, then computing the optimal recourse would be easy: we would identify the dimension $i$ with the largest $|\theta'[i]|$ value, and we would change only this dimension of $\xz$ (This would in turn simplify how $i$ in line~\ref{step:dimension-update} can be computed for the specific case of $\beta=1$. See Appendix~\ref{sec:theory-main-result-proof}). This is optimal because changing some other dimension $j$ by $\Delta$ would increase our cost by $\Delta$ and increase the inner product by $\Delta\cdot |\theta'[j]|$ against a fixed adversary. How much we should change that dimension would then be determined by solving the optimization problem in line~\ref{step:dimension-update}. However, if this change ``flipped'' the sign of $x'[i]$, this would also change the adversary, potentially compromising the optimality of $x'$.

Our proof shows that a globally optimal recourse can be computed by myopically optimizing $\newx$ until the adversary needs to be updated, and then repeating the same process until the marginal gain in the inner product is outweighed by the marginal increase in cost. This is partly due to the fact (shown in Lemma~\ref{lem:sequence}) that the order in which this myopic approach considers the dimension of the recourse to change is optimal, exhibiting a decreasing sequence of $|\theta'[i]|$ values. 
\end{proof}
}

Note that $\beta=1$ corresponds to computing the optimal robust recourse. Even for generalized linear models, the objective function in Equation~\eqref{eq:rob-cons-trade} is non-convex when $\beta=1$ (see Appendix~\ref{sec:app-non-convex}). Hence, prior gradient-based approaches~\citep{UpadhyayJL21,NguyenBN+22} cannot guarantee optimality. To the best of our knowledge, Algorithm~\ref{alg:l1-linf} is the first optimal algorithm for any robust recourse formulation. This optimality guarantee relies on the assumption that the single-dimensional convex problem in line~\ref{eq:1d-opt} of Algorithm~\ref{find:opt} can be solved in polynomial time. This can be done, for example, when the loss function $\ell$ is known by deriving first-order conditions and verifying optimality conditions. In general, gradient-based approaches can be used to converge to an optimal solution, in which case the solution provided by Algorithm~\ref{alg:l1-linf} will also converge to the optimal robust or consistent recourse.

The idea of approximating a non-linear function locally has also been used in prior work~\citep{UpadhyayJL21, RawalL20}. In Section~\ref{sec:exp}, we empirically compare the performance of our algorithm to prior work in linear and non-linear settings. To handle feasibility constraints or categorical features, the recourse of Algorithm~\ref{alg:l1-linf} can be post-processed (e.g., by projection) to guarantee feasibility~\citep{UpadhyayJL21, NguyenBN+22, GuoJC+23}. See Appendix~\ref{sec:app-actionable}.

\section{Experiments}
\label{sec:exp}

In this section, we describe the datasets, implementation details, and present our findings. Our code and data are available at \url{https://github.com/kshitij-kayastha/LearningAugmentedRobustRecourse}.

\noindent\textbf{Datasets}
\label{sec:dataset}
We experiment on synthetic and real-world data. 
For the synthetic dataset, we follow \citet{UpadhyayJL21} to generate 1000 data points in 2-d. For each data point, we first sample a label $y\in\{0,1\}$ uniformly at random. We then sample the instance corresponding to this label from a Gaussian distribution $\mathcal{N}(\mu_y, \Sigma_y)$. We set $\mu_0 = [-2, -2]$, $\mu_1 = [+2, +2]$, and $\Sigma_0=\Sigma_1 = 0.5 \mathbb{I}$. See Figure~1(a) in~\citep{UpadhyayJL21}. 
We also use two real datasets: (a) The German Credit dataset~\citep{germanuci}, which consists of 1000 data points, each with 7 features containing information about a loan applicant (age, marital status, income, and credit duration), and binary labels good (1) or bad (0) determine the creditworthiness.  
(b) The Small Business Administration dataset~\citep{sba}, which contains the small business loans approved by the State of California from 1989 to 2004. The dataset includes 1159 data points, each with 28 features containing information about the business (business category, zip code, and number of jobs created by the business), and the binary labels indicate whether the small business has defaulted on the loan (0) or not (1). For real-world data, we normalize the features.  We use the datasets to learn the initial model $\thetaz$. See Appendix~\ref{sec:app-larger-data} for experiments on a larger dataset.

\noindent\textbf{Implementation Details}
\label{sec:implementation-details}
We use 5-fold cross-validation in all experiments: 4 folds to train the initial model $\thetaz$ and the remaining fold to compute recourse. Recourse is only computed for instances with label (0) under $\thetaz$, and we report averages over folds and test instances in all experiments. We used logistic regression as our linear model and trained it using Scikit-Learn. As our non-linear model, we used a \edit{3-layer} neural network with 50, 100, and 200 nodes in each successive layer (same as~\citep{UpadhyayJL21}). The network uses ReLU activation functions, binary cross-entropy loss, and Adam optimizer, and is trained for 100 epochs using PyTorch. See Appendix~\ref{sec:app-experimental-details} for additional implementation details. 

To generate recourse, we implemented Algorithm~\ref{alg:l1-linf} with different $\beta$ values. For non-linear models, we first approximate them locally with LIME~\citep{RibeiroSG16} (same as~\citep{UpadhyayJL21}). We used the code from~\citep{UpadhyayJL21}~and~\citep{NguyenBN+22} for ROAR and RBR's implementation as baselines. We next describe our parameter choices. We use binary cross-entropy as the loss function $\ell$ and $L^1$ distance as the cost function $c$. We measure closeness using the $L^{\infty}$ norm for model parameters. In Section~\ref{sec:findings}, we follow the same procedure as in~\citep{UpadhyayJL21} for selecting $\alpha$ and $\lambda$. We fix an $\alpha$ and greedily search for $\lambda$ that maximizes the recourse validity under the original model $\thetaz$. We study the effect of varying $\alpha$ and $\lambda$ in Section~\ref{sec:findings-comp}~and~Appendix~\ref{sec:app-para-effects}. In each experiment, we specify how $\pred$ is selected. See our code and Appendix~\ref{sec:app-experimental-details} for more details. 

\begin{figure*}[ht!]
        \centering
       \begin{subfigure}[b]{0.4\textwidth}
           \centering
           \includegraphics[width=\textwidth]{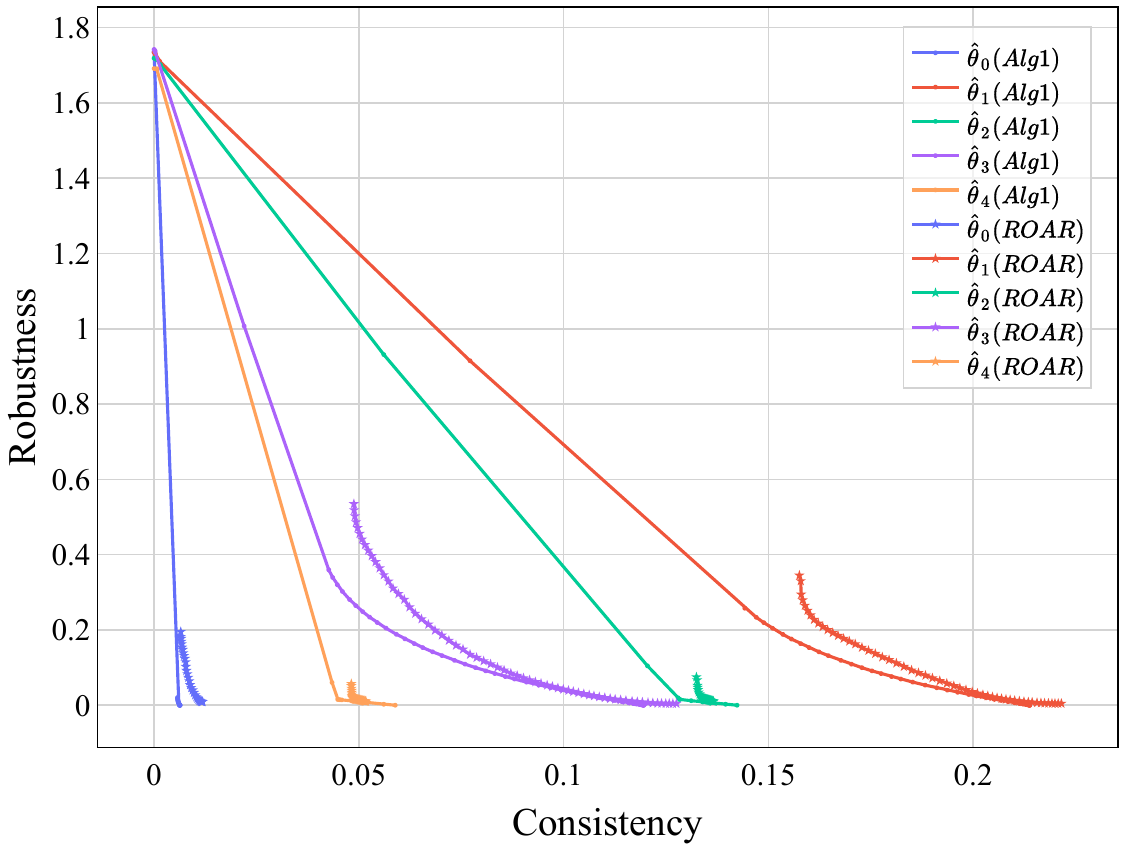}
           \caption{Synthetic Dataset, Logistic Regression}
           \label{fig:fig_tradeoff_synthetic}
       \end{subfigure}
            \begin{subfigure}[b]{0.4\textwidth}
           \centering
           \includegraphics[width=\textwidth]{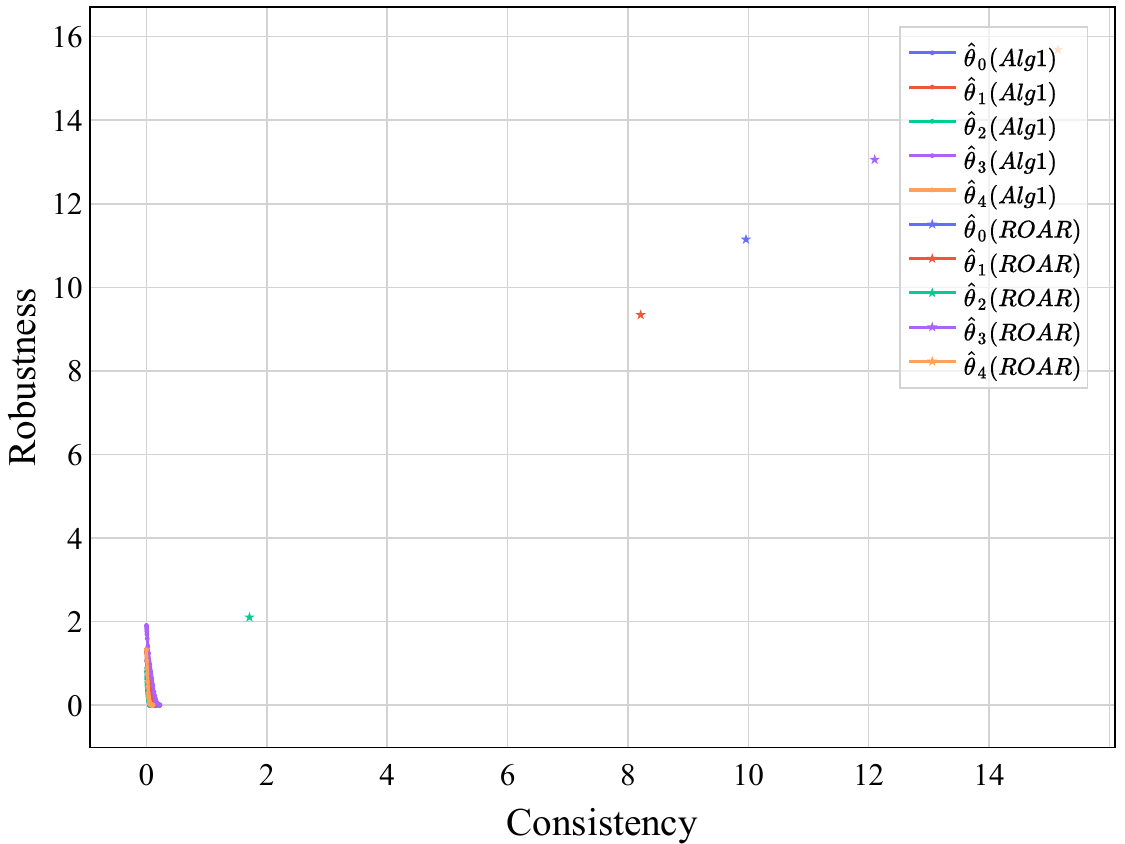}
           \caption{Synthetic Dataset, Neural Network}
           \label{fig:fig_tradeoff_nn_synthetic}
       \end{subfigure}
       \begin{subfigure}[b]{0.4\textwidth}
           \centering
           \includegraphics[width=\textwidth]{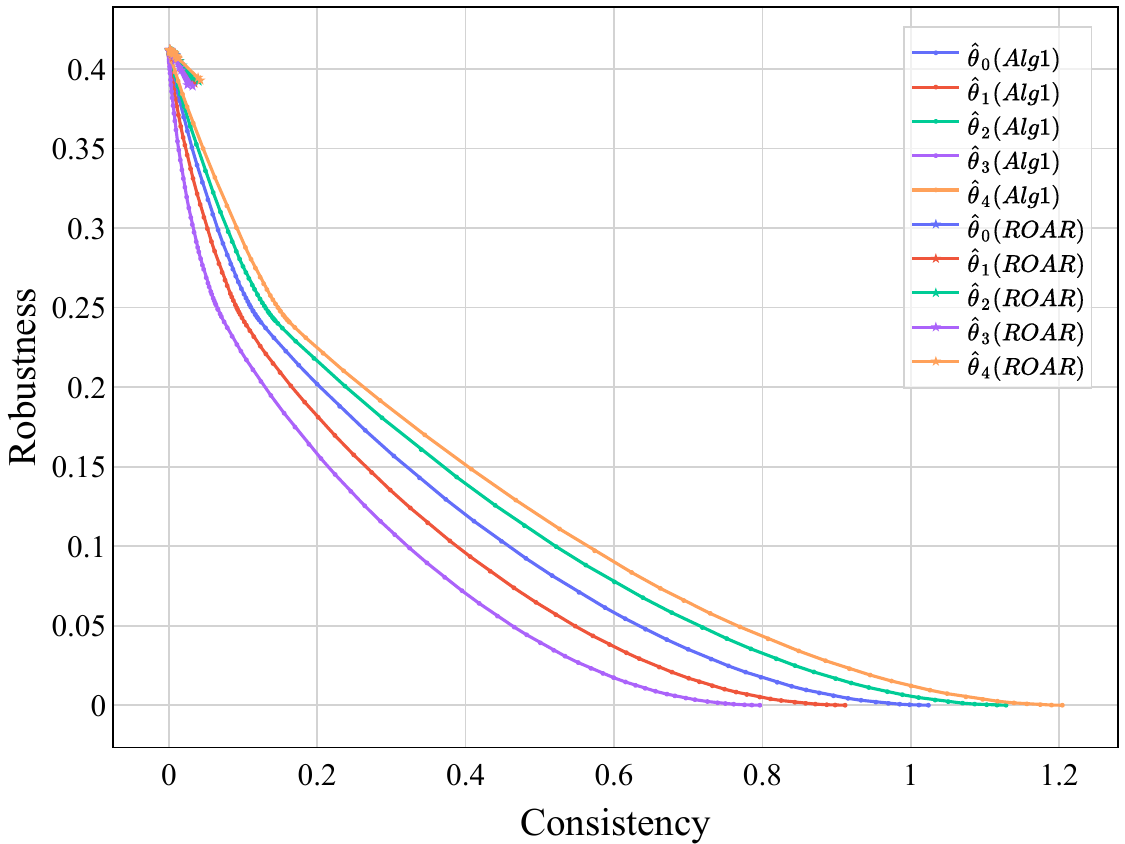}
           \caption{German Credit Dataset, Logistic Regression}
           \label{fig:fig_tradeoff_correction}
       \end{subfigure}
       \begin{subfigure}[b]{0.4\textwidth}
           \centering
           \includegraphics[width=\textwidth]{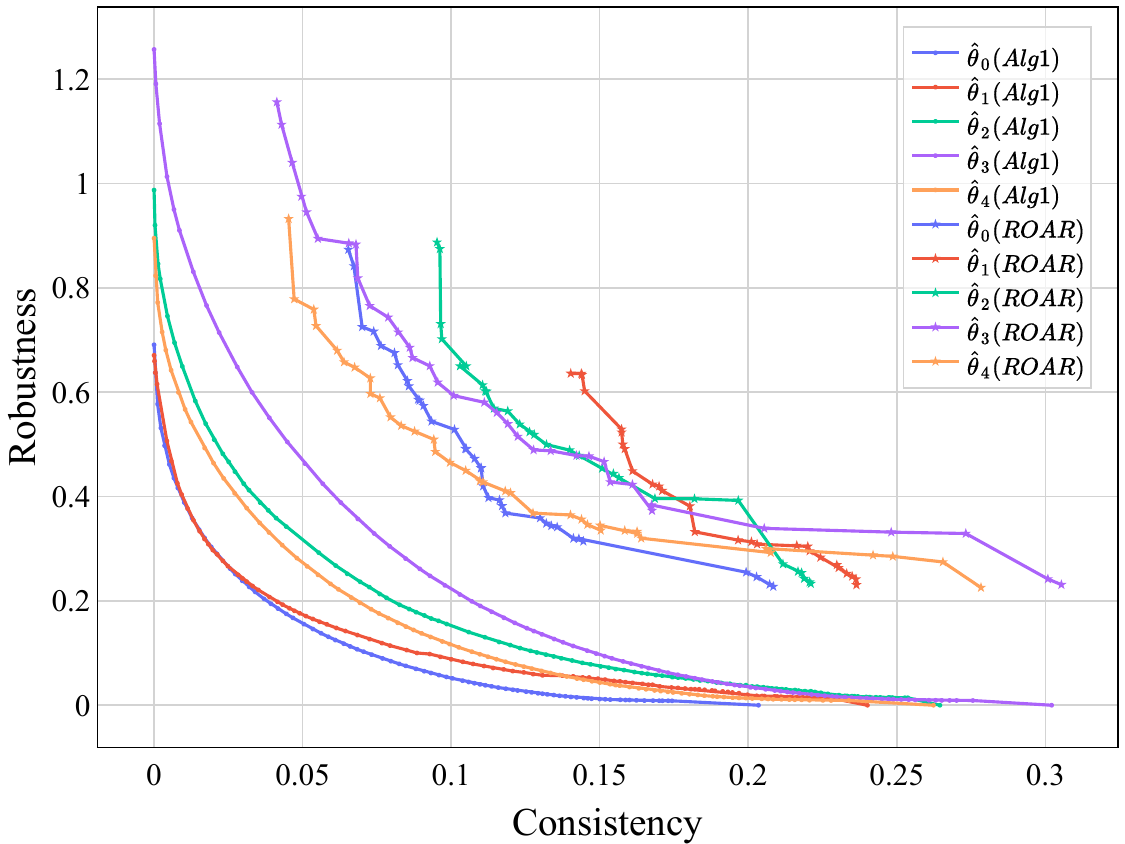}
           \caption{German Credit Dataset, Neural Network}
           \label{fig: fig_tradeoff_nn_correction}
       \end{subfigure}
       \begin{subfigure}[b]{0.4\textwidth}
           \centering
           \includegraphics[width=\textwidth]{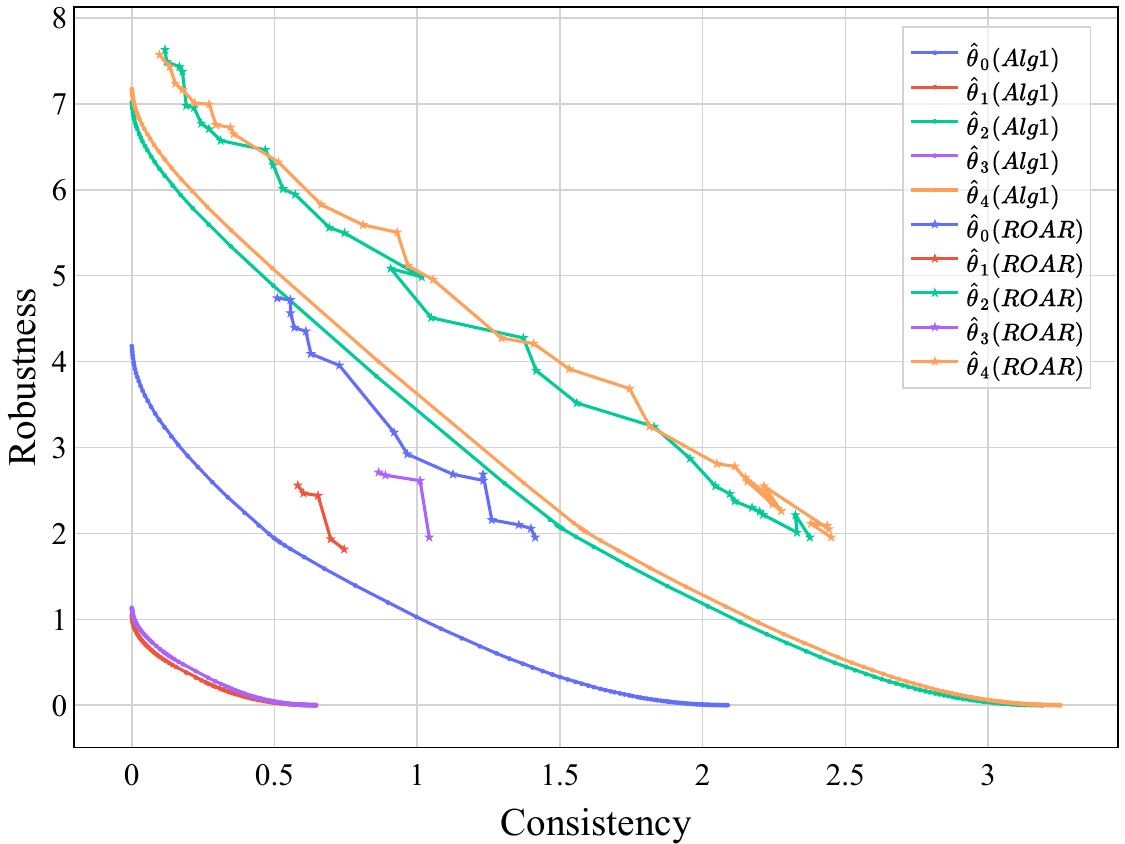}
           \caption{Small Business Dataset, Logistic Regression}
           \label{fig:fig_tradeoff_temporal}
       \end{subfigure}
       \begin{subfigure}[b]{0.4\textwidth}
           \centering
           \includegraphics[width=\textwidth]{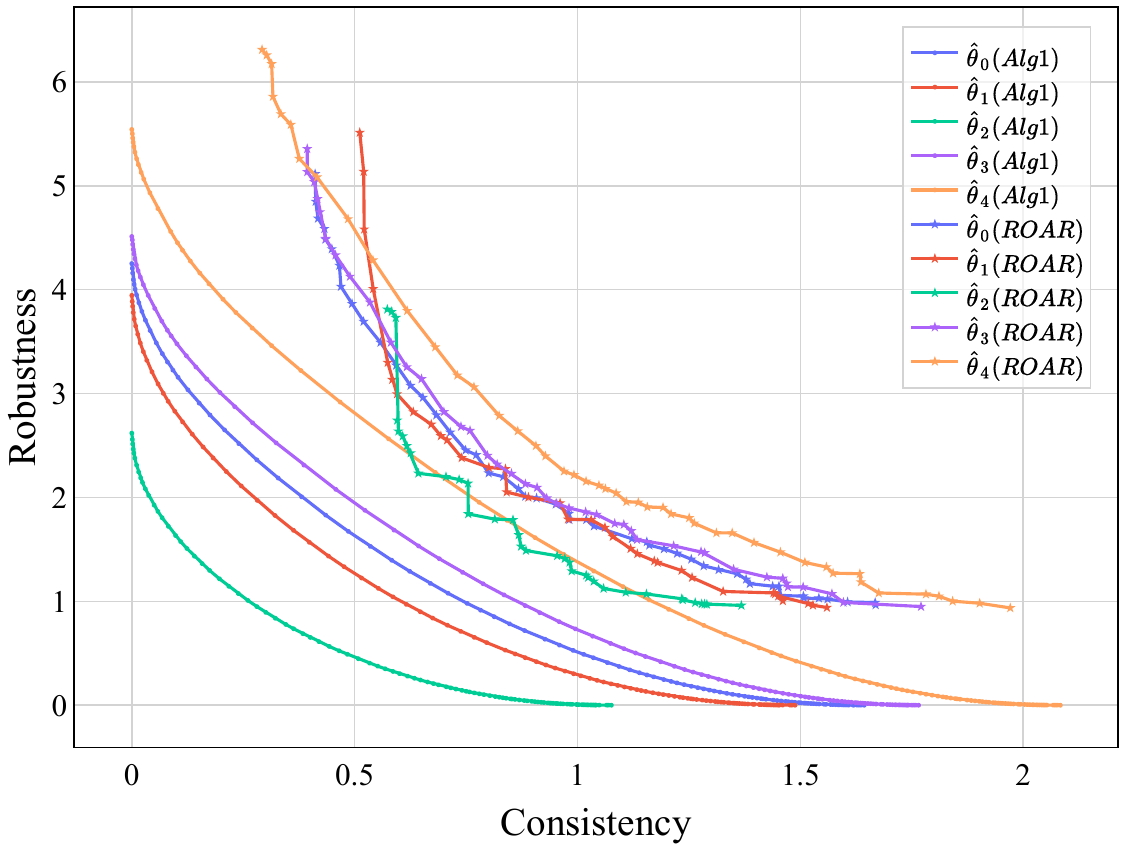}
           \caption{Small Business  Dataset, Neural Network}
           \label{fig:fig_tradeoff_nn_temporal}
      \end{subfigure}
          \caption{The trade-off between robustness and consistency for $\alpha=0.5$: rows and columns correspond to different datasets and models as indicated in the sub-caption. In each subfigure, each curve shows the trade-off for different predictions for our algorithm and ROAR. } \label{fig:trade-off}
\end{figure*}

\subsection{Findings: Learning-Augmented Setting}
\label{sec:findings}

\paragraph{Robustness-Consistency Trade-off}
To study the trade-off between consistency and robustness, we generated 5 predictions. For logistic regression models, we generated 4 perturbations by adding or subtracting $\alpha$ in each dimension of $\thetaz$. For neural network models, we added the perturbation to the LIME approximation of $f_{\thetaz}$. Along with $\thetaz$, these form the 5 model parameters we used as predictions. We use $\alpha=0.5$ for the trade-off results (see Appendix~\ref{sec:app-para-effects} for different $\alpha$s). For each given prediction $\pred$, we run Algorithm~\ref{alg:l1-linf} and a modification of ROAR~\citep{UpadhyayJL21} as a baseline for varying $\beta \in [0,1]$ and compute the robustness and consistency of the solution using Equations~\eqref{eq:robustness}~and~\eqref{eq:consistency}. 

In Figure~\ref{fig:trade-off}, the rows and columns correspond to different datasets and models. In each sub-figure, curves show the robustness-consistency trade-off of recourses by Algorithm~\ref{alg:l1-linf} for different predictions (indicated by different colors) and ROAR (same color for each prediction by with lines that include $\star$. 

For each curve, the bottom right point corresponds to $\beta=1$ (the optimal robust recourse). Given the optimality of Algorithm~\ref{alg:l1-linf} for linear models, the robustness is $0$, though empirically our algorithm achieves near 0 robustness for non-linear models too. However, these recourses might have different consistencies depending on the prediction. Similarly, the top left point of each curve corresponds to $\beta=0$ (the optimal consistent recourse) with a consistency of 0, which might have different robustness. ROAR does not achieve either optimal robustness or optimal consistency.\footnote{The optimization problem in Equation~\ref{eq:rob-cons-trade} is convex but non-differentiable at $\beta=0$ for linear models. Hence, gradient-based methods with fixed step sizes, such as ROAR, are not guaranteed to find the optimal solution.} 

Our algorithm Pareto dominates ROAR, simultaneously achieving better robustness and consistency for all $\beta$ values. Furthermore, at $\beta=1$, the robustness of ROAR's recourses is substantially worse compared to ours (e.g., ROAR achieves a robustness of 17.64 on the neural network models learned on synthetic data (Figure~\ref{fig:fig_tradeoff_nn_synthetic})). ROAR is also much slower than our algorithm (See Section~\ref{sec:findings-comp}). We generally observe that the poor consistency of the robust solutions can be decreased substantially with a small increase in robustness. This is also true for decreasing the robustness of the consistent solutions, albeit to a smaller degree.

\begin{figure*}[ht!]
        \centering
       \begin{subfigure}[b]{0.4\textwidth}
           \centering
           \includegraphics[width=\textwidth]{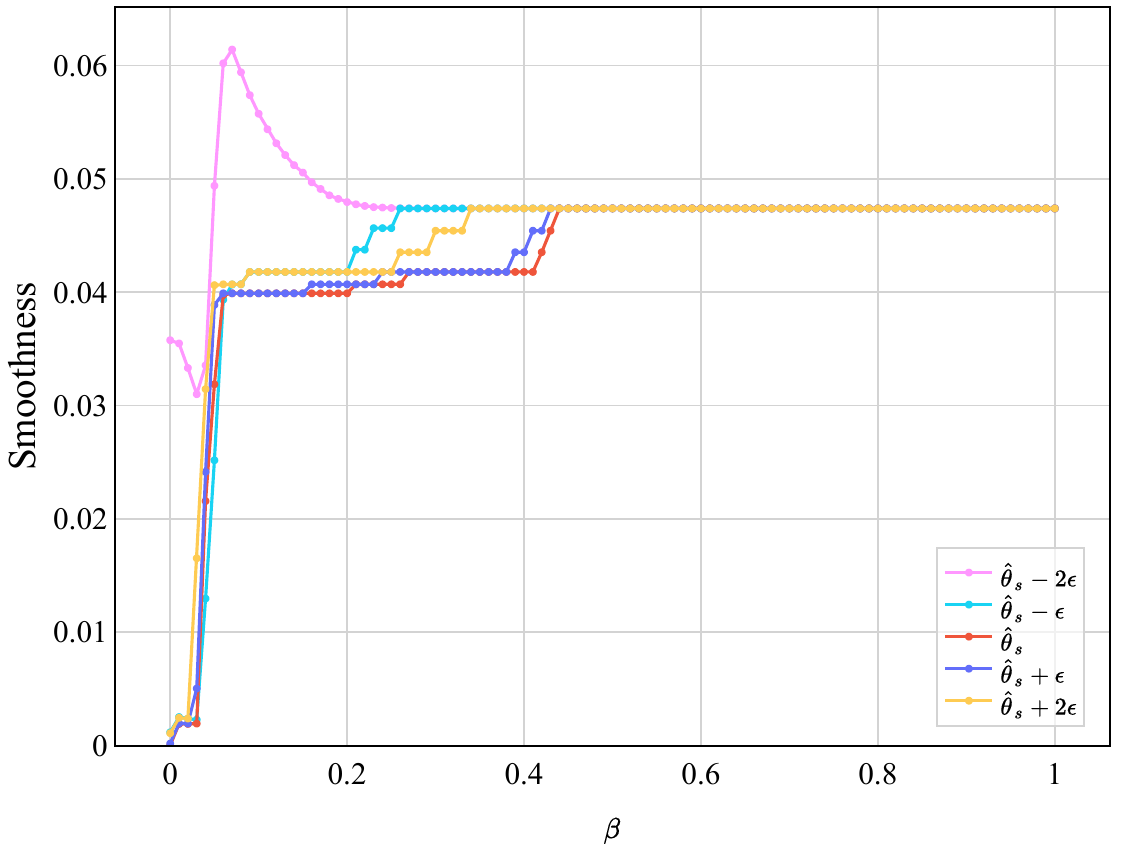}
           \caption{Synthetic Dataset, Logistic Regression}
        \label{fig:predictive_performance_lr_synthetic}
       \end{subfigure}
            \begin{subfigure}[b]{0.4\textwidth}
           \centering
           \includegraphics[width=\textwidth]{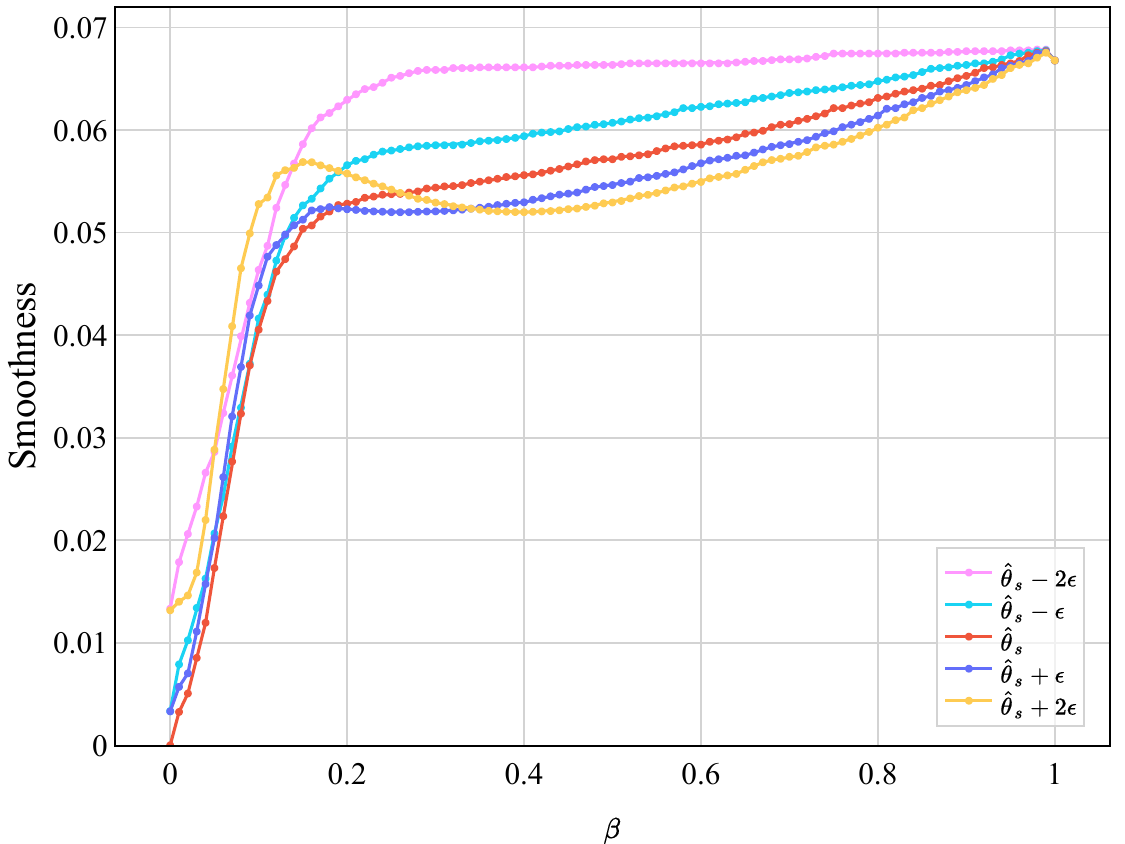}
           \caption{Synthetic Dataset, Neural Network}
           \label{fig:predictive_performance_nn_synthetic}
       \end{subfigure}
       \begin{subfigure}[b]{0.4\textwidth}
           \centering
           \includegraphics[width=\textwidth]{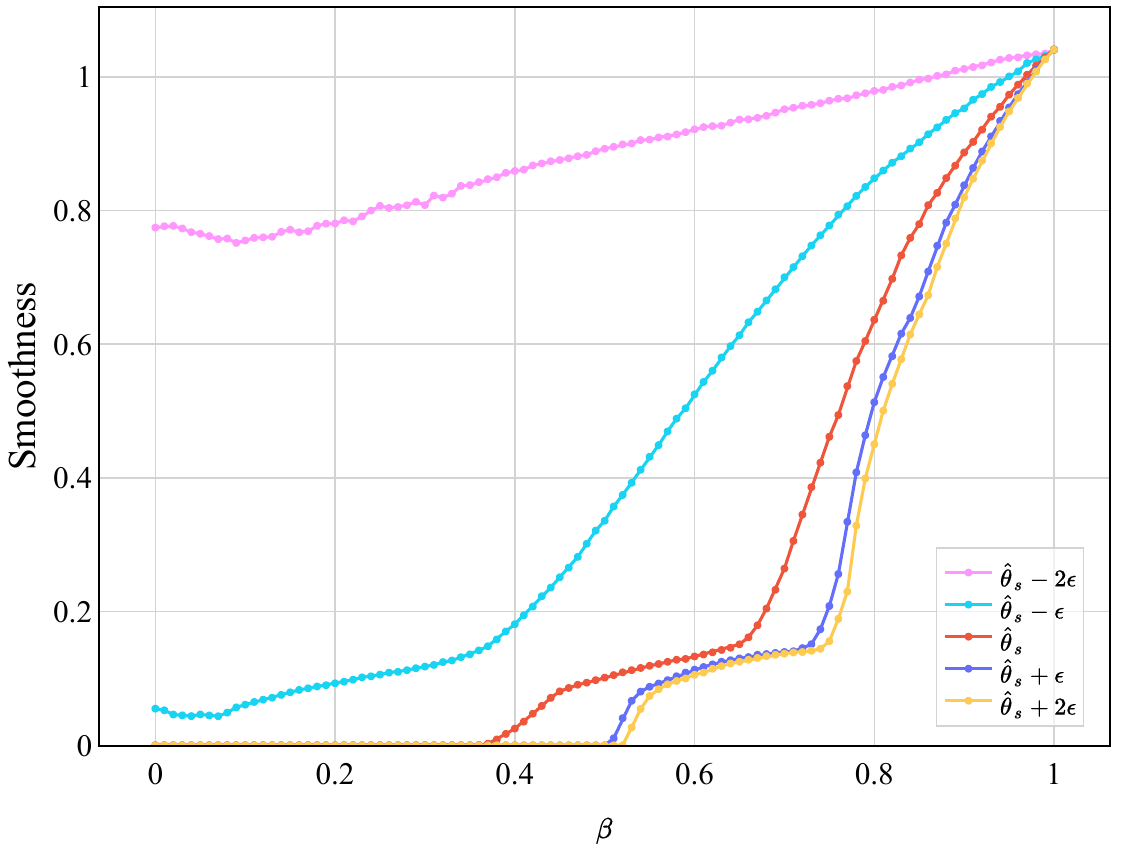}
           \caption{German Credit Dataset, Logistic Regression}
           \label{fig:predictive_performance_lr_german}
       \end{subfigure}
       \begin{subfigure}[b]{0.4\textwidth}
           \centering
           \includegraphics[width=\textwidth]{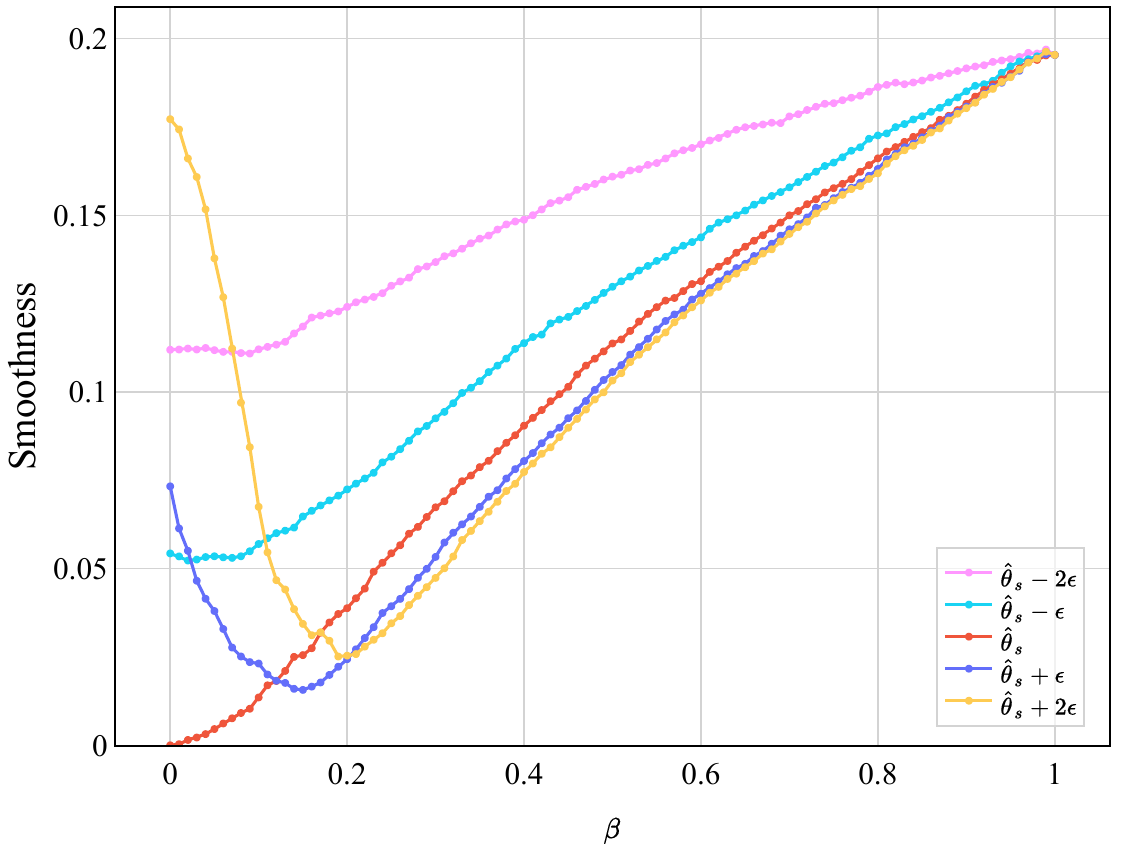}
           \caption{German Credit Dataset, Neural Network}
           \label{fig:predictive_performance_nn_german}
       \end{subfigure}
       \begin{subfigure}[b]{0.4\textwidth}
           \centering
           \includegraphics[width=\textwidth]{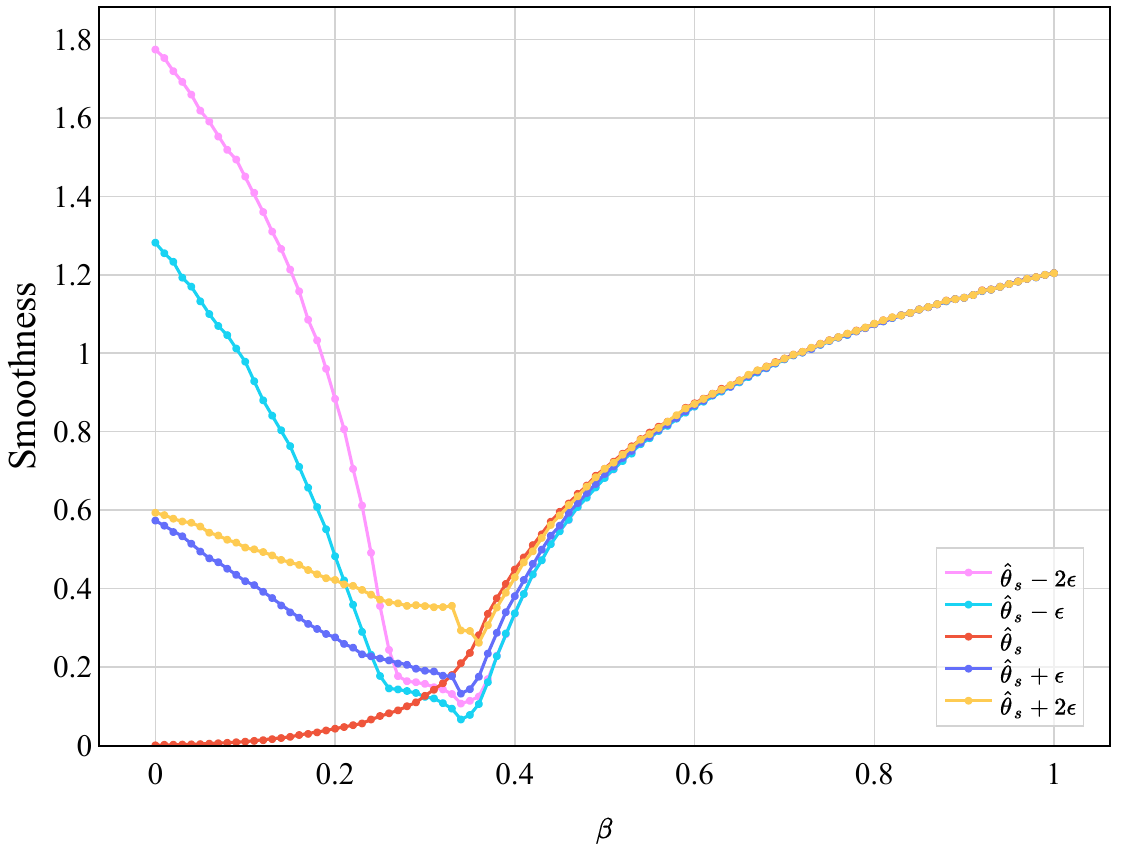}
           \caption{Small Business Dataset, Logistic Regression}
           \label{fig:predictive_performance_lr_sba}
       \end{subfigure}
       \begin{subfigure}[b]{0.4\textwidth}
           \centering
           \includegraphics[width=\textwidth]{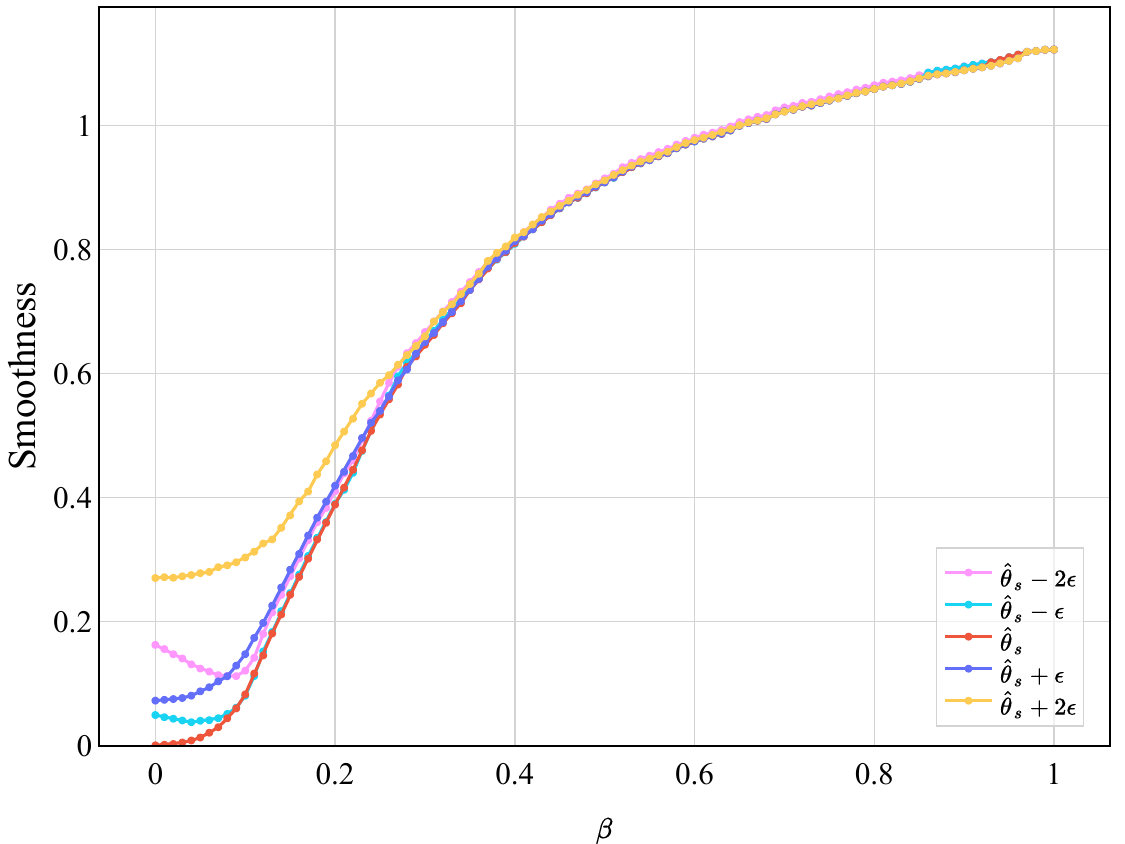}
           \caption{Small Business  Dataset, Neural Network}
           \label{fig:predictive_performance_nn_sba}
      \end{subfigure}
          \caption{The smoothness for predictions with different accuracies: rows and columns correspond to different datasets and models as indicated in the sub-caption. In each subfigure, curves correspond to different predictions and track the smoothness as a function of $\beta$ for the given prediction.\label{fig:smoothness}}
\end{figure*}

\paragraph{Smoothness}
In this section, we study how prediction error can affect the recourse quality. In particular, for each dataset-model pair, we first compute the \emph{correct prediction} $\pred_*$, corresponding to the realized future model (by either shifting the data or temporal changes in data collection~\citep{UpadhyayJL21}). We then generate incorrect predictions by perturbing each coordinate of the correct prediction by different values in $\{+\epsilon, -\epsilon, +2\epsilon, -2\epsilon\}$, corresponding to the amount of error. The $\epsilon$ values depend on the dataset and model and are chosen to ensure that all the predictions are in $\Theta_\alpha$ for $\alpha=1$. 

Given a prediction $\pred$ and a parameter $\beta\in [0, 1]$, Algorithm~\ref{alg:l1-linf} generates a recourse. If $\beta=1$, it ignores the prediction, and if $\beta=0$ it fully trusts it; values of $\beta \in (0,1)$ strike a balance between these two extremes. To measure the performance as a function of the prediction error we define a metric called \emph{smoothness}: $J(\newx(\beta, \pred), \pred_*) - J(\newx(\pred_*), \pred_*),$ where $\newx(\beta, \pred)$ is the computed recourse, $\pred_*$ is the correct prediction and $\newx(\pred_*)$ is the consistent recourse for the correct prediction. The smoothness is non-negative and it is 0 if the learner is provided with the correct prediction ($\pred = \pred_*$) and fully trusts it ($\beta = 0$). Lower smoothness values correspond to better performance despite the error.

In Figure~\ref{fig:smoothness}, the rows and columns correspond to different datasets and models. In each sub-figure, curves show the smoothness of Algorithm~\ref{alg:l1-linf} for different predictions as a function of $\beta$. There are 5 lines in each subfigure corresponding to the correct prediction and the perturbations. Focusing on $\beta=0$, we observe that smoothness increases sharply as a function of the prediction ``error'', since the algorithm fully trusts the (incorrect) prediction. As $\beta \to 1$, the smoothness of all predictions converges to the same value, since the algorithm disregards them. In some cases, this convergence occurs at smaller $\beta$ values (Figure~\ref{fig:predictive_performance_lr_sba}), but other cases require $\beta$ to be very close to 1 (Figure~\ref{fig:predictive_performance_lr_synthetic}). While the smoothness monotonically increases with $\beta$ when using the correct prediction, using incorrect predictions results in interesting non-monotone behavior and even leads to improved performance compared to using the correct prediction (Figure~\ref{fig:predictive_performance_lr_synthetic}). In Appendix~\ref{sec:exp-app-baseline-smooth}, we provided baselines using a variant of ROAR, though the generated recourses generally have higher smoothness.

\begin{figure*}[ht!]
        \centering
       \begin{subfigure}[b]{0.4\textwidth}
           \centering
           \includegraphics[width=\textwidth]{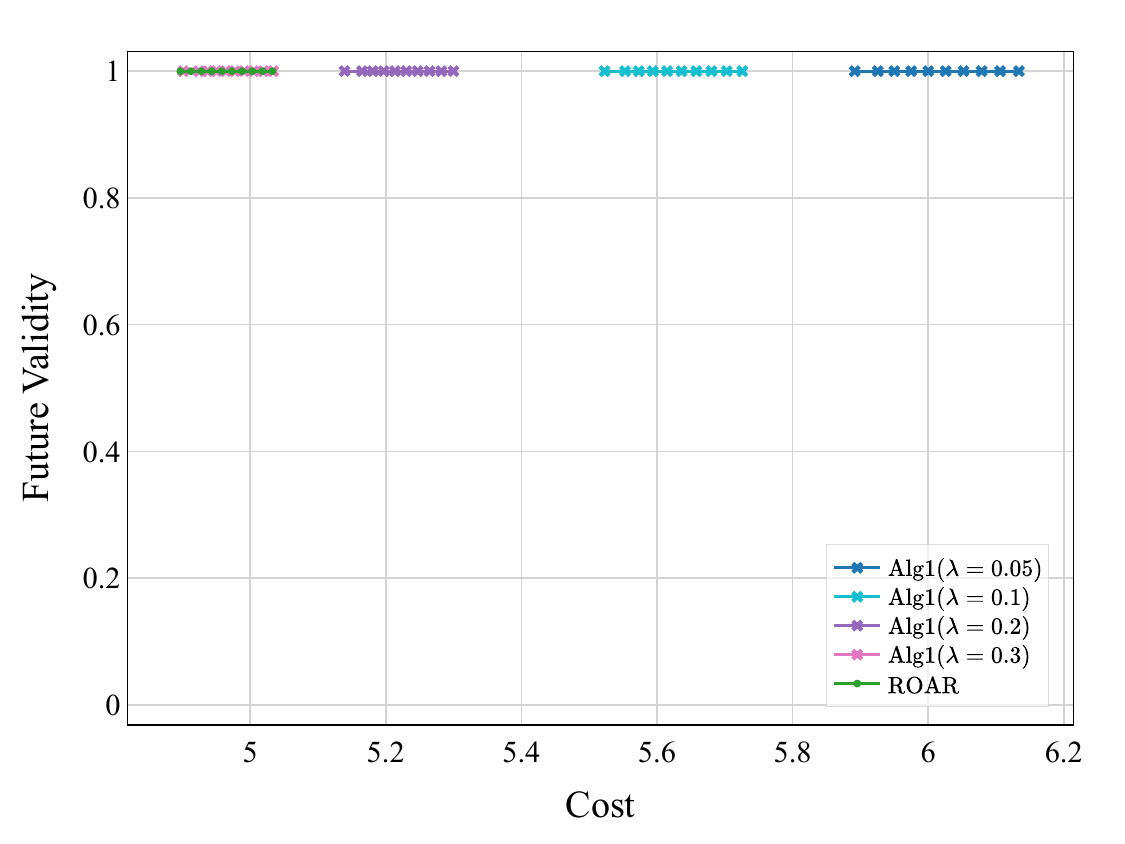}
           \caption{Synthetic Dataset, Logistic Regression}
           \label{fig:cost_validity_tradeoff_lr_synthetic_future}
       \end{subfigure}
            \begin{subfigure}[b]{0.4\textwidth}
           \centering
           \includegraphics[width=\textwidth]{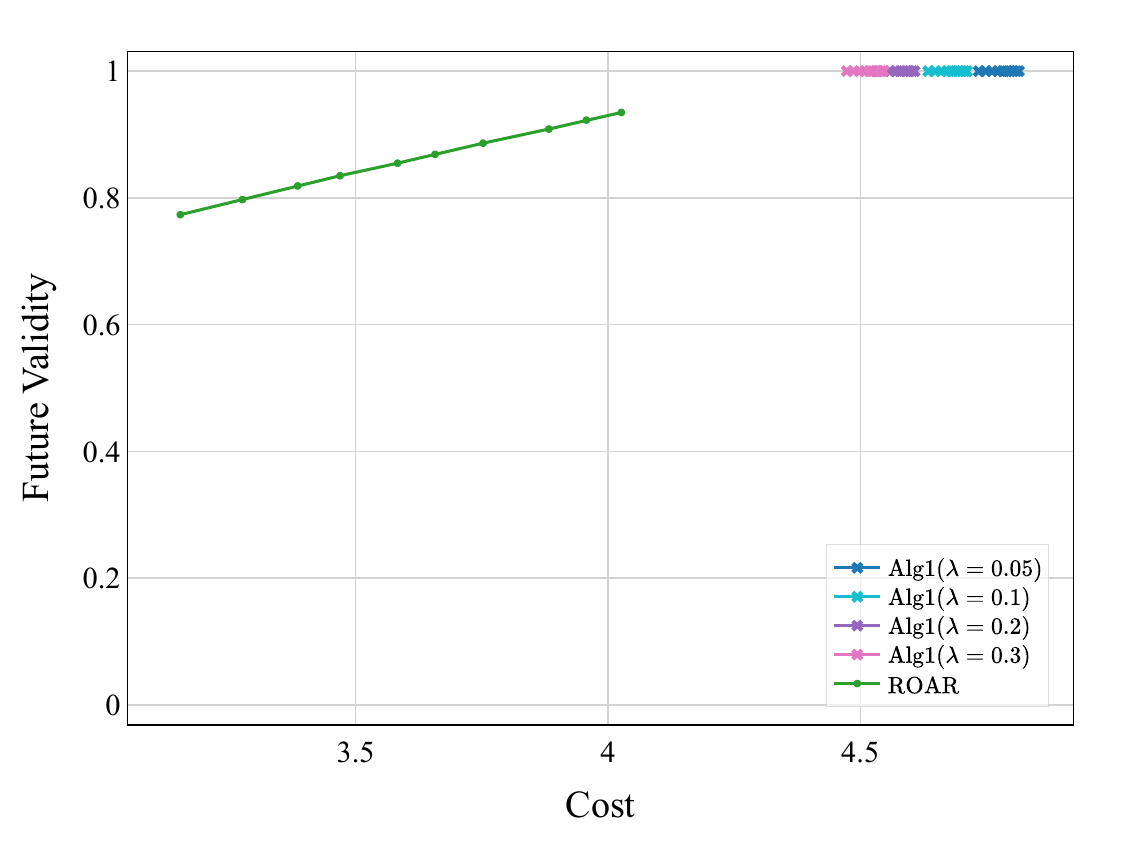}
           \caption{Synthetic Dataset, Neural Network}
           \label{fig:cost_validity_tradeoff_nn_synthetic_future}
       \end{subfigure}
       \begin{subfigure}[b]{0.4\textwidth}
           \centering
           \includegraphics[width=\textwidth]{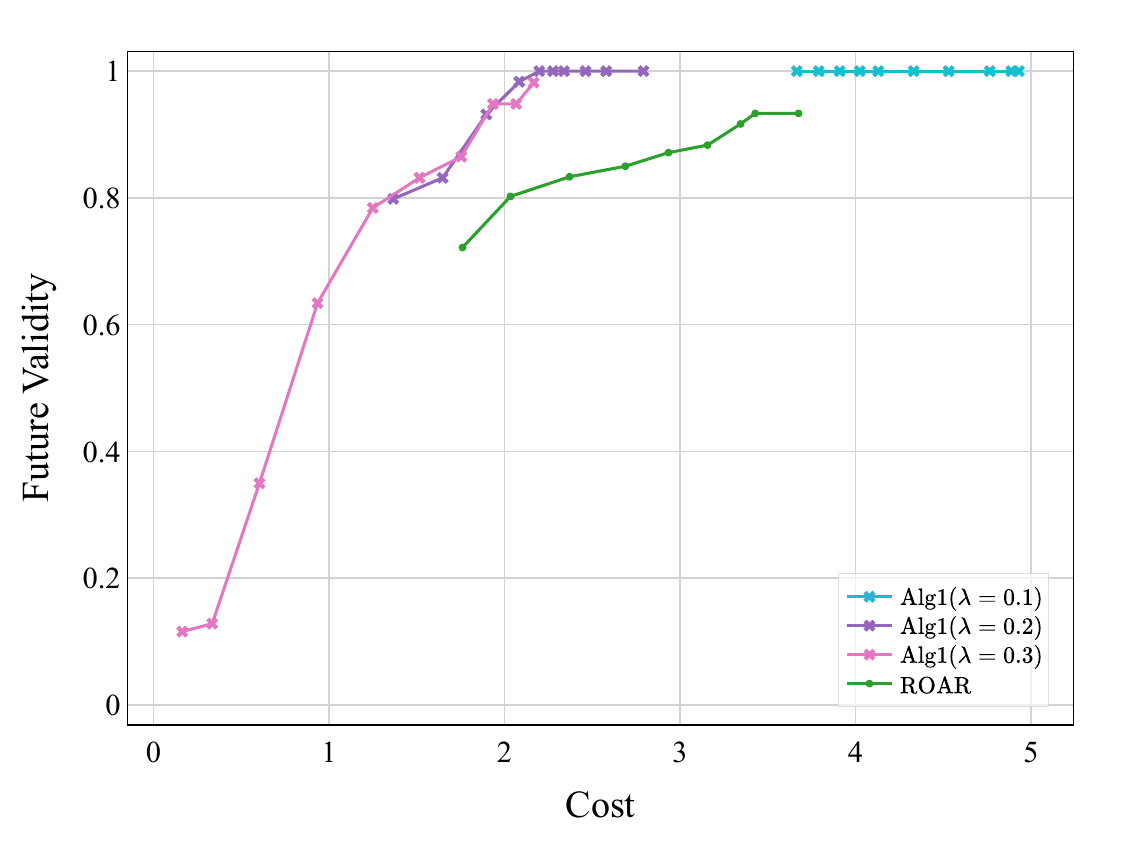}
           \caption{German Credit Dataset, Logistic Regression}
           \label{fig:cost_validity_tradeoff_lr_german_future}
       \end{subfigure}
       \begin{subfigure}[b]{0.4\textwidth}
           \centering
           \includegraphics[width=\textwidth]{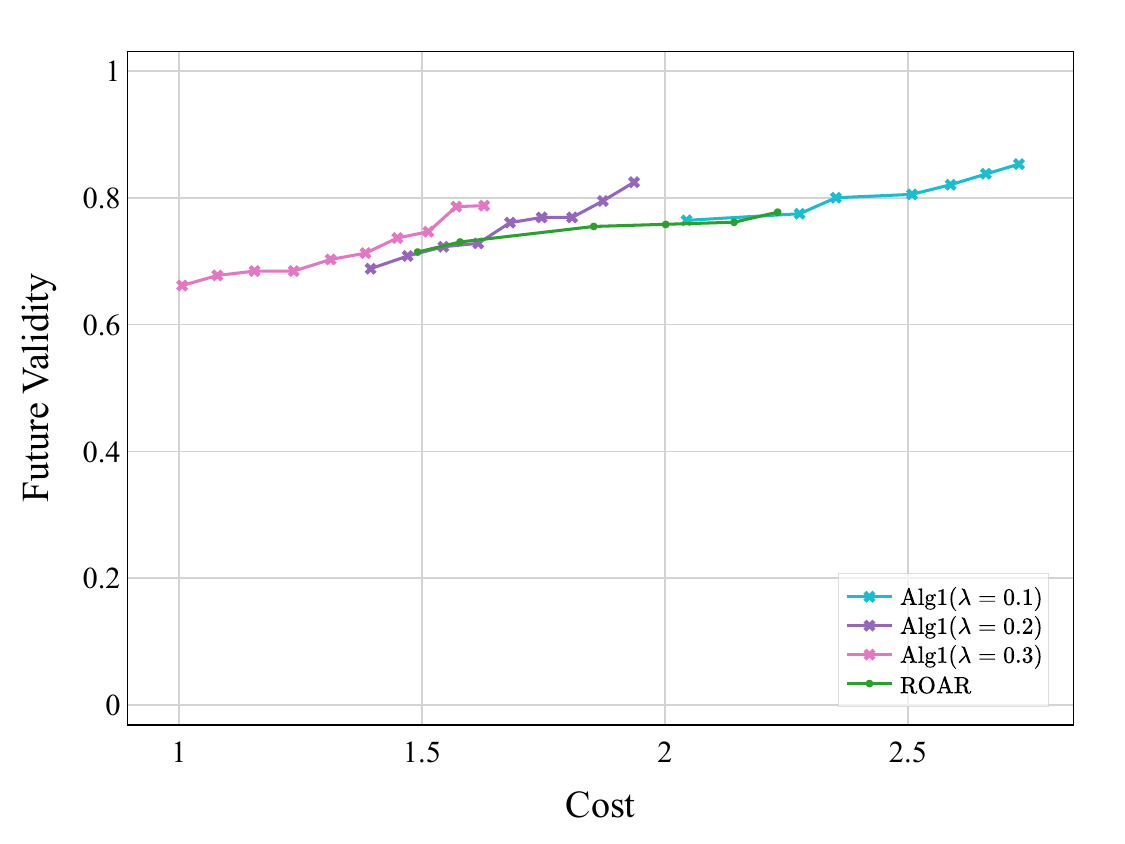}
           \caption{German Credit Dataset, Neural Network}
           \label{fig:cost_validity_tradeoff_nn_german_future}
       \end{subfigure}
       \begin{subfigure}[b]{0.4\textwidth}
           \centering
           \includegraphics[width=\textwidth]{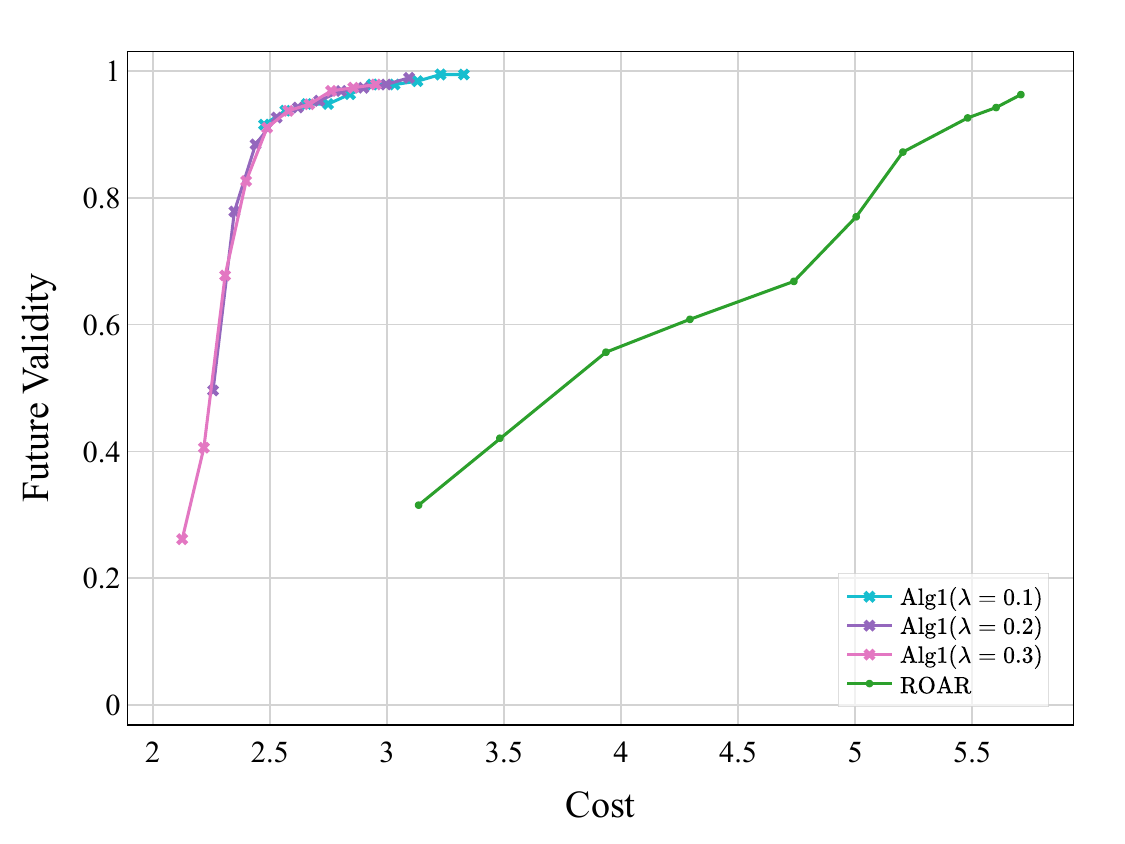}
           \caption{Small Business Dataset, Logistic Regression}
           \label{fig:cost_validity_tradeoff_lr_sba_future}
       \end{subfigure}
       \begin{subfigure}[b]{0.4\textwidth}
           \centering
           \includegraphics[width=\textwidth]{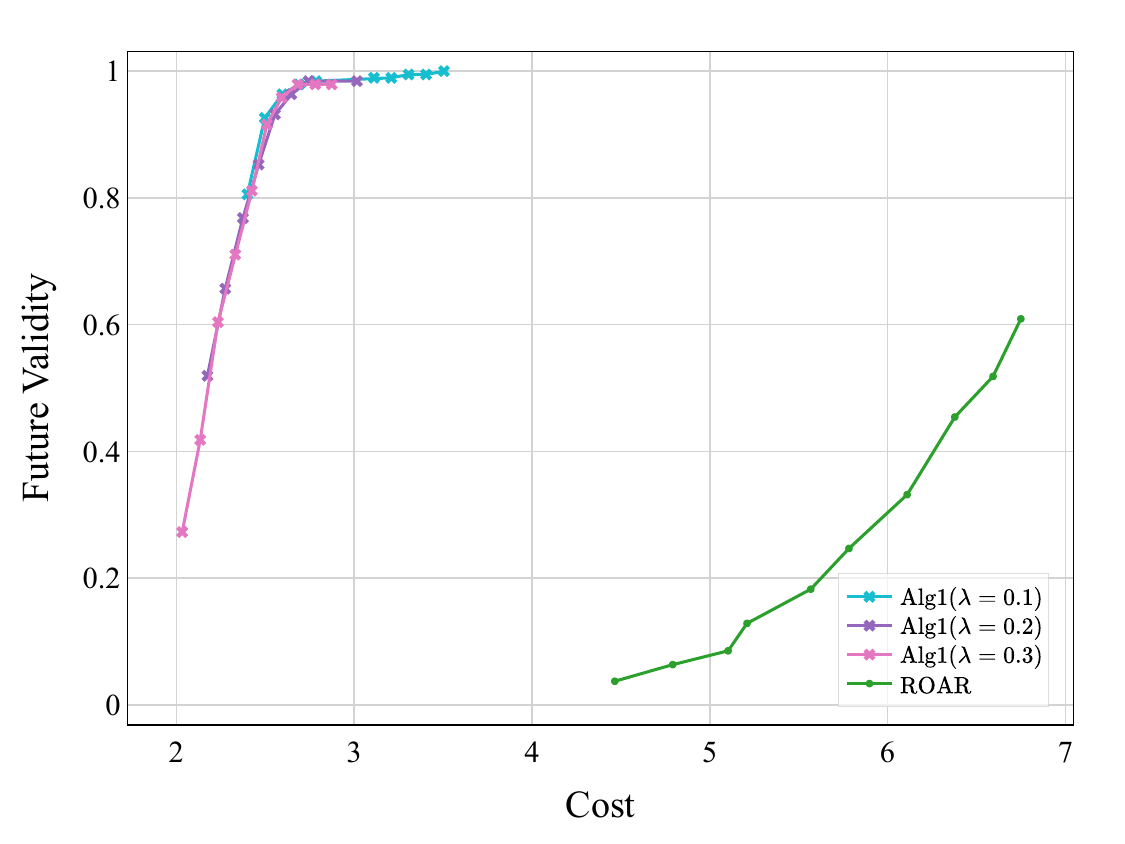}
           \caption{Small Business  Dataset, Neural Network}
           \label{fig:cost_validity_tradeoff_nn_future}
      \end{subfigure}
          \caption{The trade-off between future validity and cost: rows and columns correspond to different datasets and models. In each subfigure, curves show the trade-off for different algorithms.} 
 \label{fig:cost-validity-trade-off-future}
\end{figure*}

\subsection{Findings: Computing Robust Recourse}
\label{sec:findings-comp}
By setting $\beta=1$, our algorithm can be used to compute robust recourse. In this section, we perform a more detailed comparison with ROAR by breaking down the robustness to understand the effect of each term in Equation~\eqref{eq:recourse}. The first term is a proxy for \emph{validity} and the second term is the cost of modifying $\xz$. To compute validity, as opposed to computing a possibly different model for each instance, as is done till now, we compute a single model for the entire dataset. Following prior work~\citep{UpadhyayJL21, NguyenBN+22}, this model is obtained by training a model on a shifted version of the datasets. We call the validity with respect to this new model as \emph{future validity}. See Appendix~\ref{sec:app-experimental-details}.

Figure~\ref{fig:cost-validity-trade-off-future} depicts the trade-off between future validity and cost for all datasets and models. In each curve, we varied $\alpha \in [0.02, 0.2]$ in increments of 0.02. We used 4 different $\lambda$s: 0.05, 0.1, 0.2, and 0.3, and the trade-off for each $\lambda$ is plotted with a different color. To avoid overcrowding, we only included the results of $\lambda = 0.1$ for ROAR. Using $\lambda=0.05$ does not change the trade-off for ROAR, and using $\lambda=0.2,$ or $0.3$ degrades the validity even further. For our algorithm, sometimes different $\lambda$ values create similar trade-offs, in which case we only include the results for one of them. Also note that our algorithm runs much faster (10-100x) than ROAR for all parameters. See Appendix~\ref{sec:app-experimental-details}.

Figure~\ref{fig:cost-validity-trade-off-future} indicates that our algorithm often Pareto dominates ROAR, i.e., it achieves higher validity with a smaller cost. While the validity of our algorithm often approaches 1 for smaller $\lambda$ values, this is not the case for ROAR. In addition, the validity is lower for both approaches when using neural network models. Moreover, consistent with \citet{PawelczykDHKL23}'s observation, the cost of recourse can increase significantly for validity values close to 1.  

\edit{Finally, the LIME approximation can be unreliable, affecting the quality of the trade-off generated by our algorithm. In Appendix~\ref{sec:exp-validity-appx}, we provide further comparisons with RBR~\citep{NguyenBN+22}, an approach designed to address this shortcoming. We observe that our algorithm performs similarly to or better than RBR even when the quality of approximation is not perfect, and only when this quality is extremely low (German dataset) does its performance become dominated by RBR. Even in such cases, our algorithm still achieves a high validity (while requiring a higher cost). See Appendix~\ref{sec:exp-validity-appx} for more details.}
\vspace{-1mm}
\section{Conclusion and Discussion}
\label{sec:conclusion}
The robust recourse literature has focused on many aspects: different model classes, costs, model changes, and optimization approaches resulting in formulations that require different solutions (see~\citep{JiangL0T24}). Furthermore, adopting the learning-augmented framework introduces additional modeling aspects, such as prediction, definitions of robustness and consistency, and measures of their trade-off. We initiated the study of the learning-augmented robust recourse and followed the assumptions in the closest prior work to allow for a direct comparison. We highlight several extensions. 

First, while our framework can handle customizable weights for different inputs, using any norm as the cost function implies that the features can be modified independently. Some prior work considers these dependencies and the \emph{actionability} of recourse~\citep{KarimiBBV20, JoshiKV+19, PawelczykBK20b}. We leave these as future work. Second, our notion of robustness and consistency measures the performance of the algorithm against optimal solutions in an additive manner, similar to \emph{regret}~\citep{learninggamesbook}.  This comparison can also be made multiplicatively, similar to \emph{competitive ratio}~\citep{MitzenmacherV20}, which we leave as future work. Third, as is common in the learning-augmented framework~\citep{AgarwalJA+21, MitzenmacherV20}, we assumed the prediction about the updated model is explicitly given. A natural way to compute such a prediction, in practice, is through performativity~\citep{PerdomoZMH20}. For example, the implementation of recourse can cause the distribution shift, and the designer can use this knowledge to form a prediction for the anticipated future model~\citep{KonigFF+25}. However, this prediction can be imperfect since individuals might not exactly implement the recourse~\citep{FonsecaBABS23}. Moreover, in practice, the feedback might be \emph{weaker} or even \emph{noisy}~\citep{BechavodPWZ22}. Incorporating these into our framework is an exciting future work direction. 
We also leave a potential theoretical analysis \edit{or study new robustness frameworks} to future work.

\section*{Acknowledgment} We thank Kaidi Xu and Phone Kyaw for insightful discussions. Vasilis Gkatzelis was partially supported by the NSF CAREER award CCF-2047907 and the NSF grant CCF-2210502.

\bibliography{bib}
\bibliographystyle{plainnat}
\appendix
\section{ROAR}
\label{sec:appendix-ROAR}

For completeness, we provide the details of ROAR~\citep{UpadhyayJL21} in this section. ROAR is inspired by the vast literature on adversarial training (see e.g.,~\citep{MadryMSTV18} and utilizes Danskin's Theorem~\citep{Danskin67}) to compute the gradients (with respect to $\newx$). The pseudocode is provided in Algorithm~\ref{alg:roar}. ROAR requires the function $J$ to be differentiable for $\newx$. Furthermore, ROAR relies on the ability to compute a maximizer of $J$ for $\newtheta$. When $J$ is differentiable for $\newtheta$, a local maximum can be computed with projected gradient ascent. 
\begin{algorithm}[ht!]
\caption{RObust Algorithmic Recourse (ROAR)\label{alg:roar}}
\textbf{Input}: 
$\xz$, $\thetaz$, $\ell$, $c$, $\alpha$, $L^p$,  $\eta$ (learning rate)\\
\textbf{Output}: $\newx$ 
\begin{algorithmic}[1] 
\STATE Initialize $\newx \gets \xz$. \hfill{$\triangleright$ Initialization for the robust solution}
\STATE Initialize $g \gets \vec{0}$. \hfill{$\triangleright$ Initialization for gradients}
\STATE {\textbf{Repeat}}\\
\STATE \hspace{5mm}$\newtheta \gets \argmax_{\theta: \|\theta-\thetaz\|_p \leq \alpha}J(\newx, \theta)$ \hfill{$\triangleright$ Maximizer of $J$ with respect to $\theta$ for the current $\newx$}
\STATE \hspace{5mm} $g\gets \nabla_{\newx}J(\newx, \newtheta)$ \hfill{$\triangleright$ The gradient of $J$ with respect to $\newx$ for the current $\newtheta$}
\STATE \hspace{5mm} $\newx\gets\newx - \alpha g$ \hfill{$\triangleright$ A gradient descent step to update $\newx$}
\STATE \textbf{Until} convergence
\RETURN $\newx$.
\end{algorithmic}
\end{algorithm}

\section{Omitted Details from Section~\ref{sec:theory}}
\label{sec:theory-appx}

\subsection{Non-convexity of Robust Recourse for Linear Models}
\label{sec:app-non-convex}

\edit{
\begin{proposition}
Suppose $f$ is a generalized linear model, $c$ is the $L^1$ norm, and $\ell(.,1)$ is a convex loss function that is decreasing in its first argument. Then, there exist choices for $\ell$ and $\xz$ such that the objective in Equation~\ref{eq:xr} is non-convex in $x$.
\end{proposition}
}
\begin{proof}
Note that at $\beta=1$, Equation~\eqref{eq:rob-cons-trade} is the same as Equation~\eqref{eq:robustness}. Furthermore, the set of solutions to Equation~\ref{eq:robustness} and Equation~\eqref{eq:xr} is identical since the only difference between the two equations is the additional second term in Equation~\eqref{eq:robustness}, which is a constant.

We provide a concrete example that makes it easy to verify the non-convexity of the optimization problem in Equation~\eqref{eq:xr} even for linear models. Consider an instance in one dimension where $\xz = [1,1]$ (note that the second dimension is the unchangeable intercept), $\thetaz = [0,0]$, $\ell$ is squared loss, $\alpha = 0.5$, and $\lambda = 1$. For any recourse, $\robx = [x,1]$ (note that the intercept cannot change), the worst-case $\newtheta$ is of the form $[0.5 \edit{\sign}(x), -0.5]$ since $\alpha$ is 0.5 and $\thetaz$ is 0 in both dimensions. The cost of recourse for $\robx$ can be written as $1/\left(e^{0.5x \edit{\sign}(x)-0.5}\right)^2 + |x-1|$. Plotting this one-dimensional function proves that this function is not convex.
\end{proof}

\subsection{Proof of Theorem~\ref{thm:opt-l1-linf}}
\label{sec:theory-main-result-proof}
To prove Theorem~\ref{thm:opt-l1-linf} and verify the optimality of Algorithm~\ref{alg:l1-linf} for generalized linear models and $\beta\in\{0,1\}$, we first make some observations and prove some useful lemmas.

First of all, at $\beta=1$, Equation~\eqref{eq:rob-cons-trade} is the same as Equation~\ref{eq:robustness}. Furthermore, the set of solutions to Equation~\eqref{eq:robustness} and Equation~\eqref{eq:xr} is identical since the only difference between the two equations is the additional second term in Equation~\eqref{eq:robustness}, which is a constant.
Moreover, at $\beta=0$, Equation~\eqref{eq:rob-cons-trade} is the same as Equation~\eqref{eq:consistency}. Furthermore, the set of solutions to Equation~\ref{eq:consistency} and Equation~\eqref{eq:xr} with $\alpha=0$ is identical since the only difference between the two equations at $\alpha=0$ is that the second term is a constant. Hence, to prove Theorem~\ref{thm:opt-l1-linf}, it suffices to show that Algorithm~\ref{alg:l1-linf} will compute the optimal solution for Equation~\eqref{eq:xr} when $f_{\thetaz}$ is a generalized linear model i.e., the $\newx$ returned by Algorithm~\ref{alg:l1-linf} satisfies $\newx \in \arg \min_{x\in \mathcal{X}}~\max_{\newtheta\in \Theta_\alpha} J(x,\newtheta)$.

To simplify the exposition, we rewrite a simplified version of Algorithm~\ref{alg:l1-linf} for this specific case of $\beta=1$ and generalized linear models and flesh out all the omitted details. We present this simplification in Algorithm~\ref{alg:l1-linf2}. The summary of the simplifying steps is as follows:
(1) The pre-processing step (lines~\ref{step:forstart}-\ref{step:forend} in Algorithm~\ref{alg:l1-linf}) are expanded to distinguish between when $\xz$ is initially 0 at any coordinate or not. If  $\xz[i]$ is 0 in any coordinate and the adversarial $\theta$ can change sign (i.e., $\thetaz[i] < \alpha$), the optimal choice for $\newx[i]$ would be 0. Instead of waiting for line~\ref{step:remove1} in Algorithm~\ref{alg:l1-linf} to detect this, the simplified implementation removes dimension $i$ from the $\text{ACTIVE}$ set to improve the running time. (2) Instead of calling the subroutine \textsc{FindOptimalDimensionAndUpdate} (line~\ref{step:dimension-update} of Algorithm~\ref{alg:l1-linf}), the simplified implementation computes the dimension $i$ by finding the dimension in the $\text{ACTIVE}$ set that have the highest $\newtheta$ value since any change of a fixed amount provides the most bang-per-buck in that dimension (line~\ref{step:dimension}). Then line~\ref{step:Delta} of Algorithm~\ref{alg:l1-linf2} computes an identical calculation to line~\ref{eq:delta-alg1} in Algorithm~\ref{alg:l1-linf} to compute the best change $\Delta$ in the dimension that needs to be updated. (3) And finally, Algorithm~\ref{alg:l1-linf2} does not check for $\Delta$ being 0 and terminates when the update in one dimension can be done without changing the sign (line~\ref{step:terminate}). This is simply to speed up the running time, since, as we show in the proof, without termination, $\Delta$ would be 0 in the next iteration.


So to prove Theorem~\ref{thm:opt-l1-linf}, it suffices to show that Algorithm~\ref{alg:l1-linf} will compute the optimal solution for Equation~\eqref{eq:xr} when $f_{\thetaz}$ is a generalized linear model. Without loss of generality, throughout this section, we will be assuming that $\thetaz[i]\neq \thetaz[j]$ for any two dimensions $i\neq j$. This can be easily guaranteed by an arbitrarily small perturbation of these values without having any non-trivial impact on the model, but all of our results hold even without this assumption; it would just introduce some requirement for tie-breaking that would make the arguments slightly more tedious. Furthermore, we use $e_i$ to denote a $d$-dimensional unit vector with all zeros except for the $i$-th coordinate, which has a value of one. 

\begin{algorithm}[ht!]
\caption{Detailed Description of Algorithm~\ref{alg:l1-linf} for $\beta = 1$ and generalized linear model $f_{\thetaz}$ \label{alg:l1-linf2}
}
\textbf{Input}  : $\xz$, $\thetaz$, $\ell$, $c$, $\alpha$\\
\textbf{Output}: $\newx$ 
\begin{algorithmic}[1] 
\STATE Initialize $\newx\gets \xz$
\STATE Initialize \textsc{Active}=$[d]$ \hfill{$\triangleright$ Set of coordinates to update}
\FOR{ $i \in [d]$ }
    \IF{$\xz[i]\neq 0$} 
        \STATE Initialize $\newtheta[i]\gets \thetaz[i] - \alpha \cdot \sign (\xz[i])$ \hfill{$\triangleright$ Initialization for $\newtheta$ (the worst-case model)}
    \ELSE
        \IF{$|\thetaz[i]|> \alpha$} 
            \STATE Initialize $\newtheta[i]\gets \thetaz[i] - \alpha \cdot \sign (\thetaz[i])$
        \ELSE
            \STATE \textsc{Active} $\gets$ \textsc{Active} $\setminus \{i\}$ \hfill{$\triangleright$ Remove the coordinate that cannot improve $J$}
        \ENDIF
    \ENDIF
\ENDFOR
\WHILE{$\textsc{Active}\neq \emptyset$}
    \STATE $i \gets \argmax_{j\in \textsc{Active}}  |\theta'[j]|$\hfill{$\triangleright$ Next coordinate to update}\label{step:dimension}
    \STATE $\Delta \gets \argmin_{\Delta} J(\newx+\Delta e_i, \newtheta)-J(\newx, \newtheta)$ \hfill{$\triangleright$ Compute the best update for the selected coordinate} \label{step:Delta}
    \IF{$\sign(\newx[i]+\Delta) = \sign(\newx[i])$}
        \STATE $\newx[i]\gets \newx[i]+\Delta$ \hfill{$\triangleright$ Apply the update and terminate}
        \STATE break    \label{step:terminate}
    \ELSE
        \STATE $\newx[i] \gets 0$ \hfill{$\triangleright$ Update the coordinate but only until it reaches 0}
        \IF{$|\thetaz[i]| > \alpha$}
            \STATE $\newtheta[i] \gets \thetaz[i] + \alpha \cdot \sign(\xz[i])$ \hfill{$\triangleright$ Modify $\newtheta$ accordingly} \label{step:adv_change}
        \ELSE
            \STATE \textsc{Active} $\gets$ \textsc{Active} $\setminus \{i\}$
        \ENDIF
    \ENDIF
\ENDWHILE
\RETURN $\newx$
\end{algorithmic}
\end{algorithm}

\begin{observation}\label{obs:optimization}
For a fixed set of parameter values, the problem of optimizing robustness in our setting can be captured as computing a recourse $\robx$ aiming to minimize the value of a function $J(\cdot)$ whose value depends only on the distance cost of $\newx$, i.e.,  $\|\newx-\xz\|_1$, and its inner product with an adversarially chosen $\newtheta\in \Theta_\alpha$. Formally, our goal is to compute a recourse $\robx$ such that:
\[\robx\in\arg\min_{\newx\in \mathcal{X}}\max_{\newtheta\in \Theta_\alpha}J(\|\newx-\xz\|_1, ~ \newx\cdot \newtheta).\]
Also, $J(\cdot)$ is a linear increasing function of $\|\newx-\xz\|_1$ and a convex decreasing function of $\newx\cdot \newtheta$.
\end{observation}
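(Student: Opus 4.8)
The plan is to exploit the generalized-linear structure $f_\theta=g\circ h_\theta$ to rewrite $J$ as a sum of two terms, each depending on exactly one of the two target quantities, and then read off the claimed monotonicity and convexity term by term. First I would decompose the objective. By definition $J(\newx,\newtheta)=\ell(f_{\newtheta}(\newx),1)+\lambda\cdot\norm{\newx-\xz}_1$, and since $h_{\newtheta}$ is a linear map $\mathcal{X}\to\RR$ parameterized by $\newtheta\in\RR^d$ we have $h_{\newtheta}(\newx)=\newtheta\cdot\newx$ (with any intercept absorbed as a fixed coordinate, as in the example earlier in this appendix), so $f_{\newtheta}(\newx)=g(\newtheta\cdot\newx)$. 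Writing $s:=\newx\cdot\newtheta$ and $\phi(s):=\ell(g(s),1)$ gives $J(\newx,\newtheta)=\phi(\newx\cdot\newtheta)+\lambda\,\norm{\newx-\xz}_1$. The loss term depends on $(\newx,\newtheta)$ only through the inner product $s$ and the cost term only through $\norm{\newx-\xz}_1$, which establishes both the claimed dependence and the stated form $\robx\in\arg\min_{\newx\in\mathcal{X}}\max_{\newtheta\in\Theta_\alpha}J(\norm{\newx-\xz}_1,\ \newx\cdot\newtheta)$.

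Next I would verify the monotonicity claims one term at a time. Holding $s$ fixed, $J$ equals the affine function $\lambda\,\norm{\newx-\xz}_1$ plus a constant, and since $\lambda\ge 0$ this is linear and (weakly) increasing in the distance, giving the first property. Holding the distance fixed instead, $J$ equals $\phi(s)$ up to an additive constant, so it remains to control $\phi$. That $\phi$ is decreasing in $s$ is immediate: $g$ is non-decreasing and $\ell(\cdot,1)$ is decreasing in its first argument, so the composition $\phi=\ell(\cdot,1)\circ g$ is decreasing.

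The main obstacle is the convexity of $\phi$. This does \emph{not} follow from the stated hypotheses via the generic scalar composition rule, which would require $g$ to be concave, whereas $g$ is only assumed non-decreasing. I would therefore lean on the generalized-linear structure rather than a black-box composition argument: when $g$ is a canonical inverse-link and $\ell(\cdot,1)$ is the matching negative log-likelihood, $\phi$ is convex by the standard fact that the log-partition function of an exponential family is convex in the natural parameter. Concretely, for binary cross-entropy with the sigmoid link one computes $\phi(s)=-\log\sigma(s)=\log(1+e^{-s})$, which is manifestly convex, and I would check the remaining loss/link pairs used in the paper directly (this is the step where the real content lies, since convexity can be delicate for non-canonical combinations). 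Once $\phi$ is established to be convex decreasing and the cost term is linear increasing, the two asserted structural properties of $J(\cdot)$ follow, completing the observation.
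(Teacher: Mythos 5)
Your decomposition is exactly the intended reading of this statement: the paper states Observation~\ref{obs:optimization} without any proof, and the identity $J(\newx,\newtheta)=\phi(\newx\cdot\newtheta)+\lambda\norm{\newx-\xz}_1$ with $\phi(s):=\ell(g(s),1)$ is the only content behind the ``depends only on the distance and the inner product'' claim, so on that front you and the paper coincide. Your treatment of the two monotonicity claims is also fine. Where you genuinely add something is the convexity of $\phi$: you are right that it does \emph{not} follow from the hypotheses stated in Section~\ref{sec:prelim} ($\ell$ convex and decreasing in its first argument, $g$ merely non-decreasing), since the composition rule would require $g$ concave. This is not a hypothetical worry: for squared loss with the sigmoid link, $\phi(s)=(\sigma(s)-1)^2=\sigma(-s)^2$ has $\phi''(s)=-2v^2(1-v)(3v-2)$ with $v=\sigma(-s)$, which is negative for $s<-\log 2$, so $\phi$ is not convex even though the paper lists squared loss as an admissible $\ell$. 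The paper silently assumes convexity of the composed map (it is essential to Cases 2 and 3 of the proof of Theorem~\ref{thm:opt-l1-linf}), and your fix --- verifying it directly for the canonical pair actually used, $\phi(s)=\log(1+e^{-s})$ for binary cross-entropy with sigmoid --- is the correct one; the honest reading of the observation is that convexity of $s\mapsto\ell(g(s),1)$ should be taken as an additional standing assumption on the pair $(\ell,g)$ rather than a consequence of the preliminaries.
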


Observation~\ref{obs:optimization} provides an alternative interpretation of the problem: by choosing a recourse $\newx$, we suffer a cost $\|\newx-\xz\|_1$ and the adversary then chooses a $\newtheta$ aiming to minimize the value of the inner product $\newx\cdot \newtheta$. This implies that for a given $\newx$, a choice of $\theta'$ is not optimal for the adversary unless it minimizes this inner product. Also, it implies that among all choices of $\newx$ with the same cost $\|\newx-\xz\|_1$, the optimal one has to maximize the inner product $\newx\cdot \newtheta$ with the adversarially chosen $\theta'$. We use this fact to prove that a recourse $\newx$ is not a robust choice by providing an alternative recourse with the same cost and a greater dot product.

Our first lemma provides additional structure regarding the optimal adversarial choice in response to any given recourse $x$.

\begin{lemma}\label{lem:adversary}
For any recourse $x$, the adversarial response $\theta'=\arg\max_{\theta\in \Theta_\alpha} J(x,\theta)$ is such that $\theta'[i]=\thetaz[i]+\alpha$ for each dimension $i$ such that $x[i]<0$ and $\theta'[i]=\thetaz[i]-\alpha$ for each dimension $i$ such that $x[i]>0$. For any dimension $i$ with $x[i]=0$ we can without loss of generality assume that $\theta'[i]\in\{|\thetaz[i]+\alpha|, |\thetaz[i]-\alpha|\}$.
\end{lemma}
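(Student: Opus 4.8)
The plan is to use Observation~\ref{obs:optimization} to reduce the adversary's maximization of $J(x,\theta)$ to a minimization of the inner product $x\cdot\theta$, and then to exploit the fact that this inner product is separable across coordinates while the feasible set $\Theta_\alpha$ is a box. By Observation~\ref{obs:optimization}, for a fixed recourse $x$ the quantity $J(x,\theta)$ depends on $\theta$ only through the inner product $x\cdot\theta$ (the distance term $\lambda\,\|x-\xz\|_1$ does not involve $\theta$), and it is a weakly decreasing function of $x\cdot\theta$. Hence maximizing $J(x,\theta)$ over $\theta\in\Theta_\alpha$ is equivalent to minimizing $x\cdot\theta$ over the same set, and it suffices to show that the claimed $\theta'$ is a minimizer of $x\cdot\theta$.

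Next I would spell out that $\Theta_\alpha$ is the $L^\infty$ ball of radius $\alpha$ around $\thetaz$, i.e., the box constraint $\thetaz[i]-\alpha\le\theta[i]\le\thetaz[i]+\alpha$ for every coordinate $i$. Since the objective $x\cdot\theta=\sum_i x[i]\,\theta[i]$ is separable and the feasible region is a product of intervals, the minimization decouples into $d$ independent scalar problems, one per coordinate. For coordinate $i$ I minimize the linear function $x[i]\,\theta[i]$ over the interval $[\thetaz[i]-\alpha,\thetaz[i]+\alpha]$: when $x[i]>0$ this function is strictly increasing in $\theta[i]$, so its unique minimizer is the left endpoint $\thetaz[i]-\alpha$; when $x[i]<0$ it is strictly decreasing, so the unique minimizer is the right endpoint $\thetaz[i]+\alpha$. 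This yields exactly the two cases in the statement, and it applies uniformly to every coordinate, including the fixed intercept coordinate (whose $\theta$-value still ranges over the full $\pm\alpha$ interval).

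For a coordinate with $x[i]=0$ the term $x[i]\,\theta[i]$ is identically zero, so every feasible value of $\theta[i]$ is equally good for the adversary; this is the only place where the minimizer is not unique, and we may therefore pin $\theta'[i]$ to either endpoint $\thetaz[i]\pm\alpha$ without affecting optimality, which justifies the final ``without loss of generality'' clause. The only point requiring genuine care is the reduction in the first step: because $g$ is merely non-decreasing (not strictly increasing) and $\ell$ is decreasing, $J$ is only \emph{weakly} decreasing in $x\cdot\theta$, so a priori the maximizer of $J$ over $\Theta_\alpha$ need not be unique. This is harmless for the statement, since the coordinate-wise minimizer of $x\cdot\theta$ identified above is always among the maximizers of $J(x,\cdot)$, which is all the lemma claims; I would simply phrase the conclusion as selecting this particular optimal adversarial response rather than asserting uniqueness of the argmax.
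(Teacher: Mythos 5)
Your proposal is correct and follows essentially the same route as the paper: both reduce the adversary's problem via Observation~\ref{obs:optimization} to minimizing the inner product $x\cdot\theta$ over the $L^\infty$ box and then settle each coordinate at the appropriate endpoint $\thetaz[i]\pm\alpha$ (the paper phrases this as an exchange argument by contradiction, while you argue directly from separability, but the mathematics is identical). Your added remark about $J$ being only weakly decreasing in the inner product, so that the stated $\theta'$ is merely \emph{a} maximizer rather than \emph{the} unique one, is a fair and harmless refinement of the same argument.
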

\begin{proof}
For any dimension $i$ with $x[i]=0$, it is easy to verify that no matter what the value of $\theta'[i]$ is, the contribution of $x[i]\cdot \theta'[i]$ to the inner product $x\cdot \theta'$ is zero, so we can indeed without loss of generality assume that $\theta'[i]\in\{|\thetaz[i]+\alpha|, |\thetaz[i]-\alpha|\}$. Now, assume that $x[i]< 0$, yet $\theta'[i]< \thetaz[i]+\alpha$, and consider an alternative response $\theta''$ such that $\theta''[i]=\thetaz[i]+\alpha$ and $\theta''[j]=\theta'[j]$ for all other dimensions $j\neq i$. Clearly, $\theta''\in \Theta_{\alpha}$, since $|\theta''[i]-\thetaz[i]|= \alpha$ and $|\theta''[j]-\thetaz[j]|\leq \alpha$ for all other dimensions $j\neq i$ as well, by the fact that $\theta'\in \Theta_\alpha$. Therefore, it suffices to prove that $x\cdot \theta'' < x\cdot \theta'$, as this would contradict the fact that $\theta'=\arg\max_{\theta\in \Theta_\alpha} J(x,\theta)$. To verify that this is indeed the case, note that 
\begin{align*}
x\cdot \theta' - x\cdot \theta'' &= x[i] \cdot \theta'[i] - x[i]\cdot \theta''[i] \\
                                &= x[i] \cdot (\theta'[i] - \theta''[i])\\
                                & >0,
\end{align*}
where the first equation use the fact that $\theta'$ and $\theta''$ are identical for all dimensions except $i$ and the inequality uses the fact that $x[i]<0$ and $\theta'[i]<\theta''[i]$. A symmetric argument can be used to show that $\theta'[i]=\thetaz[i]-\alpha$ for each dimension $i$ such that $x[i]>0$.
\end{proof}


Lemma~\ref{lem:adversary} shows that for any recourse $x$, an adversarial response that minimizes $x\cdot\newtheta$ is $\newtheta=\thetaz-\alpha \cdot \sign(x)$. Our next lemma shows how the adversarial response to the initial point $\xz$, (i.e., $\thetaz-\alpha \cdot \sign(\xz)$) determines the direction toward which each dimension of $\xz$ should be changed (if at all).

\begin{lemma}\label{lem:directions}
For any optimal recourse $\robx\in  \arg \min_{\newx\in \mathcal{X}}~\max_{\newtheta\in \Theta_\alpha} J(\newx,\newtheta)$ and every coordinate $i$, it must be that $\robx$ raises the value of the $i$-th dimension only if the adversary's best response to its original value is positive, and it lowers it only if the adversary's best response to its original value is negative. Using Lemma~\ref{lem:adversary}, we can formally define this as:
\begin{align*}
\robx[i]> \xz[i] ~~&\text{ only if }~~ \thetaz[i]-\alpha \cdot \sign(\xz[i]) >0\\
\robx[i]< \xz[i] ~~&\text{ only if }~~ \thetaz[i]-\alpha \cdot \sign(\xz[i]) <0.
\end{align*}
\end{lemma}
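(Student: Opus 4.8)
The plan is to prove the contrapositive of each implication by a single-coordinate exchange argument, leaning on the two facts already in hand. By Observation~\ref{obs:optimization}, the objective $\max_{\newtheta\in\Theta_\alpha}J(\newx,\newtheta)$ is an increasing function of the cost $\|\newx-\xz\|_1$ and a decreasing function of the worst-case inner product $\min_{\newtheta\in\Theta_\alpha}\newx\cdot\newtheta$; in particular, any competitor that has strictly smaller cost and a weakly larger worst-case inner product is strictly better. By Lemma~\ref{lem:adversary}, the adversary responds coordinatewise, setting $\newtheta[i]=\thetaz[i]-\alpha\cdot\sign(\newx[i])$, so editing a single coordinate of the recourse changes only that coordinate's contribution $\newx[i]\cdot\newtheta[i]$ to the worst-case inner product and leaves every other coordinate (and its adversarial response) untouched. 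This reduces the whole comparison to one dimension.

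I would first treat the implication ``$\robx[i]>\xz[i]$ only if $\thetaz[i]-\alpha\sign(\xz[i])>0$''. Suppose toward a contradiction that $\robx$ is optimal, $\robx[i]>\xz[i]$, yet $\thetaz[i]-\alpha\sign(\xz[i])\le 0$. Define $\tilde{x}$ to agree with $\robx$ everywhere except $\tilde{x}[i]=\xz[i]$, which strictly lowers the cost by $\robx[i]-\xz[i]>0$; it then suffices to show $\tilde{x}$ has a worst-case inner product at least that of $\robx$. When $\xz[i]$ and $\robx[i]$ share a sign, the adversary's coordinate-$i$ response is the common value $c:=\thetaz[i]-\alpha\sign(\xz[i])\le 0$, and since $\robx[i]>\xz[i]$ multiplying by the nonpositive $c$ gives $\xz[i]\cdot c\ge\robx[i]\cdot c$, so $\tilde{x}$'s contribution is weakly larger. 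The subcase $\robx[i]=0$ is immediate, its contribution being $0$ while $\tilde{x}$'s is $\xz[i](\thetaz[i]+\alpha)\ge 0$.

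The one genuinely delicate case — and the step I expect to be the main obstacle — is when the coordinate crosses zero, i.e.\ $\xz[i]<0$ but $\robx[i]>0$, since there the adversary's best response flips from $\thetaz[i]+\alpha$ to $\thetaz[i]-\alpha$ and one cannot hold the adversary fixed along the move. The resolution is to note that the hypothesis $\thetaz[i]-\alpha\sign(\xz[i])=\thetaz[i]+\alpha\le 0$ forces $\thetaz[i]\le-\alpha$, so the post-crossing response $\thetaz[i]-\alpha\le-2\alpha<0$ as well; hence $\robx[i](\thetaz[i]-\alpha)<0$ while $\tilde{x}$'s contribution $\xz[i](\thetaz[i]+\alpha)\ge 0$, and again $\tilde{x}$ wins. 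In every case $\tilde{x}$ has strictly smaller cost and weakly larger worst-case inner product, so by Observation~\ref{obs:optimization} it strictly improves on $\robx$, contradicting optimality. The second implication follows by a fully symmetric argument with the roles of positive and negative exchanged, and the degenerate coordinates with $\xz[i]=0$ are handled analogously, noting that a move away from zero in a direction whose adversarial response is nonpositive can only shrink the inner product while enlarging the cost.
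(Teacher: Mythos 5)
Your proposal is correct and follows essentially the same route as the paper's proof: argue by contradiction, replace the offending coordinate of $\robx$ with $\xz[i]$ to get a competitor of strictly smaller cost, and use Observation~\ref{obs:optimization} together with the coordinatewise adversary of Lemma~\ref{lem:adversary} to show the worst-case inner product does not decrease. The only difference is presentational: where you split into same-sign, zero, and sign-crossing subcases, the paper collapses these into one line by bounding $\robx[i]\cdot\theta^*[i]\le \robx[i]\cdot(\thetaz[i]-\alpha\cdot\sign(\xz[i]))$ via the feasibility of the adversary's response to $\xz[i]$.
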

\begin{proof}
Assume that for some dimension $i$ we have $\robx[i] > \xz[i]$ even though $\thetaz[i]-\alpha \cdot \sign(\xz[i])<0$, and let $\newx$ be the recourse such that $\newx[i]=\xz[i]$ while $\newx[j]=\robx[j]$ for all other coordinates, $j\neq i$. If $\theta^*$ is the adversary's best response to $\robx$ and $\theta'$ is the adversary's best response to $\newx$, then the difference between the inner product of $\newx\cdot \theta'$ and $\robx\cdot \theta^*$ is:
\begin{align*}
\newx\cdot \theta' - \robx\cdot \theta^* &= \newx[i] \cdot \theta'[i] - \robx[i]\cdot \theta^*[i] \\
                                    &= \xz[i] \cdot (\thetaz[i]-\alpha \cdot \sign(\xz[i])) - \robx[i]\cdot \theta^*[i]\\
                                    &\geq \xz[i] \cdot (\thetaz[i]-\alpha \cdot \sign(\xz[i])) - \robx[i]\cdot  (\thetaz[i]-\alpha \cdot \sign(\xz[i]))\\
                                    &= (\xz[i]-\robx[i]) \cdot (\thetaz[i]-\alpha \cdot \sign(\xz[i]))\\
                                    &>0,
\end{align*}
where the first equation uses the fact that $\newx[j]=\robx[j]$ for all $j\neq i$, the second equation uses the fact that $\newx[i]=\xz[i]$ and the fact that the adversary's best response to $\xz[i]$ is $\thetaz[i]-\alpha \cdot \sign(\xz[i])$, and the subsequent inequality uses the fact that the product $\robx[i] \cdot \theta^*[i]$ is at most $\robx[i] \cdot (\thetaz[i]-\alpha \cdot \sign(\xz[i]))$ since the adversary's goal is to minimize this product and adversary's best response to $\robx[i]$ will do at least as well as the best response to $\xz[i]$ (which is a feasible, even if sub-optimal, response for the adversary). 

We have shown that the inner product achieved by $\newx$ would be greater than that of $\robx$, while the cost of $\newx$ is also strictly less than $\robx$, since $\newx$ keeps the $i$-th coordinate unchanged. Therefore, $\max_{\newtheta\in \Theta_\alpha} J(\newx,\newtheta) < \max_{\newtheta\in \Theta_\alpha} J(\robx,\newtheta)$, contradicting the assumption that $\robx\in  \arg \min_{\newx\in \mathcal{X}}~\max_{\newtheta\in \Theta_\alpha} J(\newx,\newtheta)$. A symmetric argument leads to a contradiction if we assume that $\robx[i] < \xz[i]$ even though $\thetaz[i]-\alpha \cdot \sign(\xz[i])>0$.
\end{proof}

We now prove a lemma regarding the sequence of $|\theta'[i]|$ values of the dimensions that the while loop of Algorithms~\ref{alg:l1-linf2} changes.
\begin{lemma}\label{lem:sequence}
Let $j_k$ denote the dimension chosen in line~\ref{step:dimension} of Algorithm~\ref{alg:l1-linf2} during the $k$-th execution of its while-loop, and let $v_k$ denote the value of $|\theta'[j_k]|$ at a point in time (note that $\theta'$ changes over time). The sequence of $v_k$ values are decreasing with $k$.
\end{lemma}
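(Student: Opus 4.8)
The plan is to reduce the statement to a monotonicity property of a single running quantity. Since line~\ref{step:dimension} sets $j_k=\argmax_{j\in\textsc{Active}}|\theta'[j]|$, at the start of the $k$-th pass of the while-loop we have $v_k=\max_{j\in\textsc{Active}}|\theta'[j]|$, so the lemma is exactly the assertion that this running maximum is non-increasing in $k$. First I would observe that one pass of the loop alters the state in only a limited way: it either applies the update and hits the \textsc{break} on line~\ref{step:terminate} (ending the loop, so no later $v_{k+1}$ needs comparing), or it sets $\newx[j_k]=0$ and then \emph{either} removes $j_k$ from \textsc{Active} \emph{or} overwrites $\theta'[j_k]$ via line~\ref{step:adv_change}. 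The key structural point is that no coordinate other than $j_k$ has its $\theta'$-value modified during the pass, and every such $j\neq j_k$ already satisfied $|\theta'[j]|\le v_k$ by maximality of $j_k$.

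Given this, $v_{k+1}\le v_k$ splits into two cases. If $j_k$ leaves \textsc{Active}, then $v_{k+1}$ is a maximum over a strict subset of the previous active set whose entries are unchanged, so $v_{k+1}\le v_k$ immediately. The substantive case is a retained update on line~\ref{step:adv_change}, and here I would invoke Lemma~\ref{lem:adversary}: before the update coordinate $j_k$ carries the adversary's response to a point of sign $\sign(\xz[j_k])$, namely $\thetaz[j_k]-\alpha\,\sign(\xz[j_k])$, while line~\ref{step:adv_change} replaces it with the response to the opposite sign, $\thetaz[j_k]+\alpha\,\sign(\xz[j_k])$. A direct comparison of $|\thetaz[j_k]-\alpha\,\sign(\xz[j_k])|$ and $|\thetaz[j_k]+\alpha\,\sign(\xz[j_k])|$ shows that the update strictly \emph{decreases} the magnitude exactly when $\thetaz[j_k]$ has the opposite sign to $\xz[j_k]$, in which case (as $|\thetaz[j_k]|>\alpha$ there) it drops by precisely $2\alpha$ to $v_k-2\alpha$. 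Combining the two cases with the fact that all other entries are $\le v_k$ gives $v_{k+1}\le v_k$.

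The main obstacle I anticipate is ruling out a retained update that would \emph{raise} $|\theta'[j_k]|$, i.e.\ the configuration where $\thetaz[j_k]$ shares the sign of $\xz[j_k]$ with $|\thetaz[j_k]|<\alpha$, for which the raw magnitude computation above goes the wrong way. In this configuration the adversary's response flips sign as $\newx[j_k]$ crosses $0$ in \emph{either} direction, so by the marginal reasoning of Observation~\ref{obs:optimization} together with Lemma~\ref{lem:directions} the optimal value of this coordinate is exactly $\newx[j_k]=0$; I would use this to argue that the step on line~\ref{step:Delta} never produces a genuine sign flip there, so line~\ref{step:adv_change} is not reached for such a coordinate, and a retained update therefore occurs only in the ``beneficial continuation'' regime $\sign(\thetaz[j_k])=-\sign(\xz[j_k])$, $|\thetaz[j_k]|>\alpha$, where the $2\alpha$ drop holds. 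Matching this beneficial-continuation condition to the sign-test guarding line~\ref{step:adv_change} is the delicate part; once it is established, the remainder is bookkeeping, and the selected magnitudes $v_k$ form the required non-increasing sequence.
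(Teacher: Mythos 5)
Your skeleton coincides with the paper's: read $v_k$ as the running maximum of $|\theta'[j]|$ over \textsc{Active}, observe that a pass of the while-loop either terminates via line~\ref{step:terminate}, deactivates $j_k$ (so the next maximum is over a subset of unchanged values), or overwrites only the single entry $\theta'[j_k]$, and then show a retained overwrite can only lower $|\theta'[j_k]|$. Your computation that the retained update replaces $\thetaz[j_k]-\alpha\sign(\xz[j_k])$ by $\thetaz[j_k]+\alpha\sign(\xz[j_k])$ and drops the magnitude by exactly $2\alpha$ when $\thetaz[j_k]$ opposes the sign of $\xz[j_k]$ (and the guard forces $|\thetaz[j_k]|>\alpha$ there) is exactly the paper's, and you correctly isolate the one configuration where the arithmetic goes the wrong way.

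The gap is in how you dispose of that configuration. You argue that when $\thetaz[j_k]$ shares the sign of $\xz[j_k]$ with $|\thetaz[j_k]|<\alpha$, line~\ref{step:Delta} ``never produces a genuine sign flip'' because the robustly optimal value of that coordinate is $0$. But line~\ref{step:Delta} does not solve the robust problem: it computes $\argmin_{\Delta} J(\newx+\Delta e_{j_k},\newtheta)$ against the \emph{current, fixed} $\newtheta$, in which $\newtheta[j_k]=\thetaz[j_k]-\alpha\cdot\sign(\xz[j_k])$ is a fixed nonzero number that does not flip as $\newx[j_k]$ crosses zero. For a small enough $\lambda$ this myopic optimum overshoots past zero---that is precisely the situation the else-branch and the clamp to $\newx[j_k]=0$ exist to handle---so the sign test fires, the guard $\sign(\thetaz[j_k])=\sign(\thetaz[j_k]+\alpha\cdot\sign(\xz[j_k]))$ \emph{passes} (both arguments are nonnegative in this configuration), and $|\theta'[j_k]|$ would move from $\alpha-|\thetaz[j_k]|$ to $\alpha+|\thetaz[j_k]|$, i.e., increase. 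So the mechanism you invoke does not close the case; the appeal to the robust optimum being zero conflates the adversarial best response with the fixed model that line~\ref{step:Delta} actually optimizes against. The paper dispatches this case by a different (and terse) claim: that the algorithm would only have driven a positive $\xz[i]$ down to zero if $\thetaz[i]<0$, after which the guard forces $\thetaz[i]+\alpha<0$ and the $2\alpha$ drop follows. To complete your argument you need a reason, grounded in what line~\ref{step:Delta} computes or in Lemma~\ref{lem:directions} applied to the algorithm's own trajectory, why the flip-and-retain branch is entered only when $\sign(\thetaz[j_k])$ opposes $\sign(\xz[j_k])$; as written, the ``delicate part'' you flag remains open.
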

\begin{proof}
Note that in the $k$-th iteration of the while-loop, line~\ref{step:dimension} of Algorithm~\ref{alg:l1-linf2} chooses $j_k$ so that $j_k = \arg\max_{j\in \textsc{Active}}|\theta'[j]|$, based on the values of $\theta'$ at the beginning of that iteration. As a result, if $\theta'$ remains the same throughout the execution of the algorithm (which would happen if $\sign(\robx)=\sign(\xz)$, i.e., if none of the recourse coordinates changes from positive to negative or vice versa), then the lemma is clearly true. On the other hand, if the recourse ``flips signs'' for some dimension $i$, i.e., $\sign(\robx[i])\neq \sign(\xz[i])$, this could lead to a change of the value of $\theta'[i]$. Specifically, as shown in Lemma~\ref{lem:adversary} and implemented in line~\ref{step:adv_change} of the algorithm, the adversary changes $\theta'[i]$ to $\thetaz[i] + \alpha \cdot \sign(\xz[i])$. If that transition causes the sign of $\theta'[i]$ to change, then dimension $i$ becomes inactive and the algorithm will not consider it again in the future. If the sign of $\theta'[i]$ remains the same, then we can show that its absolute value would drop after this change, so even if it is considered in the future, it would still satisfy the claim of this lemma. To verify that its absolute value drops, assume that $\xz[i]>0$, suggesting that the algorithm has so far lowered its value to 0, which would only happen if $\thetaz[i]<0$ (otherwise, this change would be decreasing the inner product). Since $\xz[i]>0$, the new value of $\theta'[i]$ is equal to $\thetaz[i] + \alpha$, and since this remains negative, like $\thetaz[i]$, we conclude that its absolute value decreased. A symmetric argument can be used for the case where $\xz[i]<0$.
\end{proof}

We are now ready to prove our main theoretical result (the proof of Theorem~\ref{thm:opt-l1-linf}), showing that Algorithm~\ref{alg:l1-linf2} always returns an optimal robust recourse.

\begin{proof}[Proof of Theorem~\ref{thm:opt-l1-linf}]
To prove the optimality of the recourse $\robx$ returned by Algorithm~\ref{alg:l1-linf2}, i.e., the fact that $\robx \in \arg \min_{\newx\in \mathcal{X}}~\max_{\newtheta\in \Theta_\alpha} J(\newx,\newtheta)$, we assume that this is false, i.e., that there exists some other recourse $x^*\in  \arg \min_{\newx\in \mathcal{X}}~\max_{\newtheta\in \Theta_\alpha} J(\newx,\newtheta)$ such that $\max_{\newtheta\in \Theta_\alpha} J(x^*,\newtheta) < \max_{\newtheta\in \Theta_\alpha} J(\robx,\newtheta)$, and we prove that this leads to a contradiction.

Note that since $x^*\in  \arg \min_{\newx\in \mathcal{X}}~\max_{\newtheta\in \Theta_\alpha} J(\newx,\newtheta)$, it must satisfy Lemma~\ref{lem:directions}. Also, note that the way that Algorithm~\ref{alg:l1-linf2} generates $\robx$ also satisfies the conditions of Lemma~\ref{lem:directions} (the choice of $\Delta$ in line~\ref{step:Delta} would never lead to a recourse of higher cost without improving the inner product), so we can conclude that if $x^*$ and $\xz$ were to change the same coordinate they would both do so in the same direction, i.e., 
\[\sign(x^*[i]-\xz[i]) = \sign(\robx[i]-\xz[i]).\]

Having established that for every coordinate $i$ the values of $x^*[i]$ and $\robx[i]$ will either both be at most $\xz[i]$ or both be at least $\xz[i]$, the rest of the proof performs a case analysis by comparing how far from $\xz[i]$ each one of them moves:
\begin{itemize}
    \item \textbf{Case 1: $\|x^*- \xz\|_1 = \|\robx- \xz\|_1$.} Since $x^*\neq \robx$, it must be that $|x^*[i]-\xz[i]| > |\robx[i]-\xz[i]|$ for some $i$ and $|x^*[j]-\xz[j]| < |\robx[j]-\xz[j]|$ for some $j$. To get a contradiction for this case as well, we will consider an alternative recourse $\newx$ that is identical to $x^*$ except for dimensions $i$ and $j$, each of which is moved $\delta$ closer to the values of $\robx[i]$ and $\robx[j]$, respectively, for some arbitrarily small constant $\delta >0$. Formally,
    \begin{equation*}
    \newx[i] = x^*[i] + \delta \cdot \sign(\robx[i]-x^*[i]) ~~~~~~ \text{and} ~~~~~~  \newx[j] = x^*[j] + \delta \cdot \sign(\robx[j]-x^*[j]). 
    \end{equation*}
    Note that $x^*$ and $\newx$ both have the same price since they only differ in $i$ and $j$ and 
    \begin{align*}
    |x^*[i]-\xz[i]|+|x^*[j]-\xz[j]| &= |\newx[i]-\xz[i]|+\delta+|\newx[j]-\xz[j]|-\delta \\ 
                                        &= |\newx[i]-\xz[i]|+|\newx[j]-\xz[j]|.
    \end{align*}
    We let $\delta$ be small enough so that the adversary's response to $x^*$ and $\newx$ is the same; for this to hold it is sufficient that a value of $x^*$ that is strictly positive does not become strictly negative in $\newx$, or vice versa. If we let $\theta'$ denote this adversary, then we have
    \begin{align*}
    \newx\cdot \theta' - x^*\cdot \theta' &= \left|(\newx[j]-x^*[j]) \cdot \theta'[j]\right| -\left|(\newx[i]-x^*[i]) \cdot \theta'[i]\right|\\
                                       &= \delta \cdot |\theta'[j]| - \delta \cdot |\theta'[i]|\\
                                       &= \delta \cdot (|\theta'[j]| -  |\theta'[i]|),
\end{align*}
where the first equality uses the fact that $x^*$ and $\newx$ differ only on $i$ and $j$, and the fact that if we replace recourse $x^*$ with $\newx$, then the change of $\delta$ on the $j$-th coordinate increases the distance from $\xz[j]$ and thus increases the inner product, while the change of $\delta$ on the $i$-th coordinate decreases the distance from $\xz[i]$ and thus decreases the inner product. The second equality uses the fact that the change on both coordinates $i$ and $j$ is equal to $\delta$.

To conclude with a contradiction, it suffices to show that $|\theta'[j]| > |\theta'[i]|$, as this would imply $\newx\cdot \theta' > x^*\cdot \theta'$, contradicting the fact that $x^*\in  \arg \min_{\newx\in \mathcal{X}}~\max_{\newtheta\in \Theta_\alpha} J(\newx,\newtheta)$, since $\newx$ would require the same cost as $x^*$ but it would yield a greater inner product. We consider three possible scenarios: \emph{i)} If Algorithm~\ref{alg:l1-linf2} in line~\ref{step:dimension} chose to change dimension $i$ facing adversary $\theta'[i]$ before considering dimension $j$ and adversary $\theta'[j]$, then the fact that $|x^*[i]-\xz[i]| > |\robx[i]-\xz[i]|$ implies that the algorithm did not change coordinate $i$ as much as $x^*$ and it must have terminated after that via line~\ref{step:terminate}; this would suggest that dimension $j$ and adversary $\theta[j]$ would never be reached after that, contradicting the fact that $|x^*[j]-\xz[j]| < |\robx[j]-\xz[j]|$. \emph{ii)} If Algorithm~\ref{alg:l1-linf2} in line~\ref{step:dimension} chose to change dimension $j$ facing adversary $\theta'[j]$ and later on also considered dimension $i$ and adversary $\theta'[i]$, then Lemma~\ref{lem:sequence} suggests that $|\theta'[j]| > |\theta'[i]|$, once again leading to a contradiction. Finally, \emph{iii)} if Algorithm~\ref{alg:l1-linf2} in line~\ref{step:dimension} chose to change dimension $j$ facing adversary $\theta'[j]$ and never ended up considering dimension $i$ even though $|\theta'[j]| < |\theta'[i]|$, this suggests that $i$ was removed from the \textsc{Active} set during the execution of the algorithm, which implies that $\robx[i]=0$ and $|\thetaz[i]<\alpha$, so moving further away from $\xz[i]$ would actually hurt the inner product because the adversary can flip the sign of $\newtheta[i]$ via a change of $\alpha$. The fact that $x^*$ actually moved dimension $i$ further away then again contradicts the fact that $x^*\in  \arg \min_{\newx\in \mathcal{X}}~\max_{\newtheta\in \Theta_\alpha} J(\newx,\newtheta)$.

    \item \textbf{Case 2: $\|x^*- \xz\|_1 < \|\robx- \xz\|_1$.} In this case, we can infer that for some $i$ we have $|x^*[i]-\xz[i]| < |\robx[i]-\xz[i]|$, i.e., $x^*$ determined that the increase of the inner product achieved by moving $x^*[i]$ further away from $\xz[i]$ and closer to $\robx[i]$ was not worth the cost suffered by this increase. However, note that as we discussed in Observation~\ref{obs:optimization}, $J(\cdot)$ is a decreasing function of the inner product. Also note that, since Algorithm~\ref{alg:l1-linf2} changes a coordinate of the recourse only if it increases the inner product, there must be some point in time during the execution of the algorithm when the inner product of $\newx\cdot \theta'$ was at least as high as the inner product of $x^*$ with the adversarial response to $x^*$. Nevertheless, line~\ref{step:Delta} determined that this change would decrease the objective value $J(\cdot)$. If we specifically consider the last dimension $j$ changed by the algorithm, using Lemma~\ref{lem:sequence}, we can infer that the value of $|\theta'[j]|$ at the time of this change was less than the value of $|\theta'[i]|$ for the dimension $i$ satisfying $|x^*[i]-\xz[i]| < |\robx[i]-\xz[i]|$; this is due to the fact that the algorithm chose to change $i$ weakly earlier than $j$. As a result, since line~\ref{step:Delta} determined that the increase of cost was outweighed by the increase in the inner product even though $|\theta'[j]|\leq |\theta'[i]|$, the inner product is greater, and $J(\cdot)$ is convex in the latter, this implies that increasing the value of $x^*[i]$ would also decrease the objective, thus leading to a contradiction of the fact that $x^*\in  \arg \min_{\newx\in \mathcal{X}}~\max_{\newtheta\in \Theta_\alpha} J(\newx,\newtheta)$.

    \item \textbf{Case 3: $\|x^*- \xz\|_1 > \|\robx- \xz\|_1$.} This case is similar to the one above, but rather than arguing that $x^*$ missed out on further changes that would have led to an additional decrease of the objective, we instead argue that $x^*$ went too far with the changes it made. Specifically, there must be some $i$ such that $|x^*[i]-\xz[i]| > |\robx[i]-\xz[i]|$, i.e., $x^*$ determined that the increase of the inner product achieved by moving $x^*[i]$ further away from $\xz[i]$ than $\robx[i]$ did was worth the cost suffered by this increase. Since the cost of $x^*$ is greater than the cost of $\robx$, it must be the case that its inner product is greater. Therefore, line~\ref{step:Delta} of the algorithm determined that moving $\robx[i]$ further away from $\xz[i]$ would not lead to an improvement of the objective even for a smaller inner product. Once again, the convexity of $J(\cdot)$ with respect to the inner product combined with the aforementioned facts implies that this increase must have hurt $x^*$ as well, leading to a contradiction.\qedhere
\end{itemize}
\end{proof}

\section{Omitted Details from Section~\ref{sec:exp}}
\label{sec:exp-appx}
In this section, we provide additional results and analysis that were omitted from Section~\ref{sec:exp} due to space constraints. In Section~\ref{sec:app-experimental-details}, we provide additional details on the running time of our algorithm as well as what values were chosen for some of the hyperparameters. Section~\ref{sec:app-trade-offs} provides error bars for the robustness consistency tradeoffs generated by our algorithm. Section~\ref{sec:app-para-effects} details how parameter changes affect our results. Section~\ref{sec:app-actionable} studies the effect of post-processing and actionability of the generated recourse on performance. In Section~\ref{sec:exp-app-baseline-smooth}, we provide baselines for our smoothness analysis in Section~\ref{sec:findings}. Section~\ref{sec:exp-validity-appx} studies the trade-off between cost and validity when there is uncertainty in the value of $\alpha$.
Finally, Section~\ref{sec:app-larger-data} provides additional results for a larger dataset.

\begin{table}[ht!]
    \renewcommand{\arraystretch}{1.5}
    \centering
    \setlength{\tabcolsep}{2.5pt}
    \begin{tabular}{|c|c|c|}
        \hline
        Model & Dataset & $\lambda$ \\
        \hline
        \multirow[m]{3}{*}{LR}
            & Synthetic Data & 1.0 \\\cline{2-3}
            & German Credit Data & 0.5 - 0.7 \\\cline{2-3}
            & Small Business Data & 1.0 \\
        \hline
        \multirow[m]{2}{*}{NN}
            & Synthetic Data & 1.0 \\\cline{2-3}
            & German Credit Data & 0.1 - 0.2 \\\cline{2-3}
            & Small Business Data & 1.0 \\
        \hline
    \end{tabular}
    \caption{$\lambda$ that maximize the validity with respect to the original model $\thetaz$ for each dataset. The other choices of parameters are mentioned in Section~\ref{sec:app-experimental-details}.}
    \label{tab:lambdas}
\end{table}

\subsection{Additional Experimental Details}
\label{sec:app-experimental-details}
The experiments were conducted on two laptops: an Apple M1 Pro and a 2.2 GHz 6-Core Intel Core i7. Average runtime (across datasets/model) to generate a robust recourse for Algorithm~\ref{alg:l1-linf}, ROAR~\citep{UpadhyayJL21}, and RBR~\citep{NguyenBN+22} are 0.001, 1,0, and 40 seconds, respectively. The total runtime of Algorithm~\ref{alg:l1-linf} to generate Figure~\ref{fig:trade-off} was 45-60 minutes, depending on the subfigure. 

In our robustness versus consistency experiments in Section~\ref{sec:findings}, we choose $\alpha = 0.5$ and find the $\lambda$ that maximizes the validity with respect to the original model $\thetaz$ in each round of cross-validation. The range of $\lambda$ values found to maximize the $\thetaz$ validity for each setting is reported in Table~\ref{tab:lambdas}.

\begin{figure}[ht!]
        \centering
       \begin{subfigure}[b]{0.45\textwidth}
           \centering
           \includegraphics[width=\textwidth]{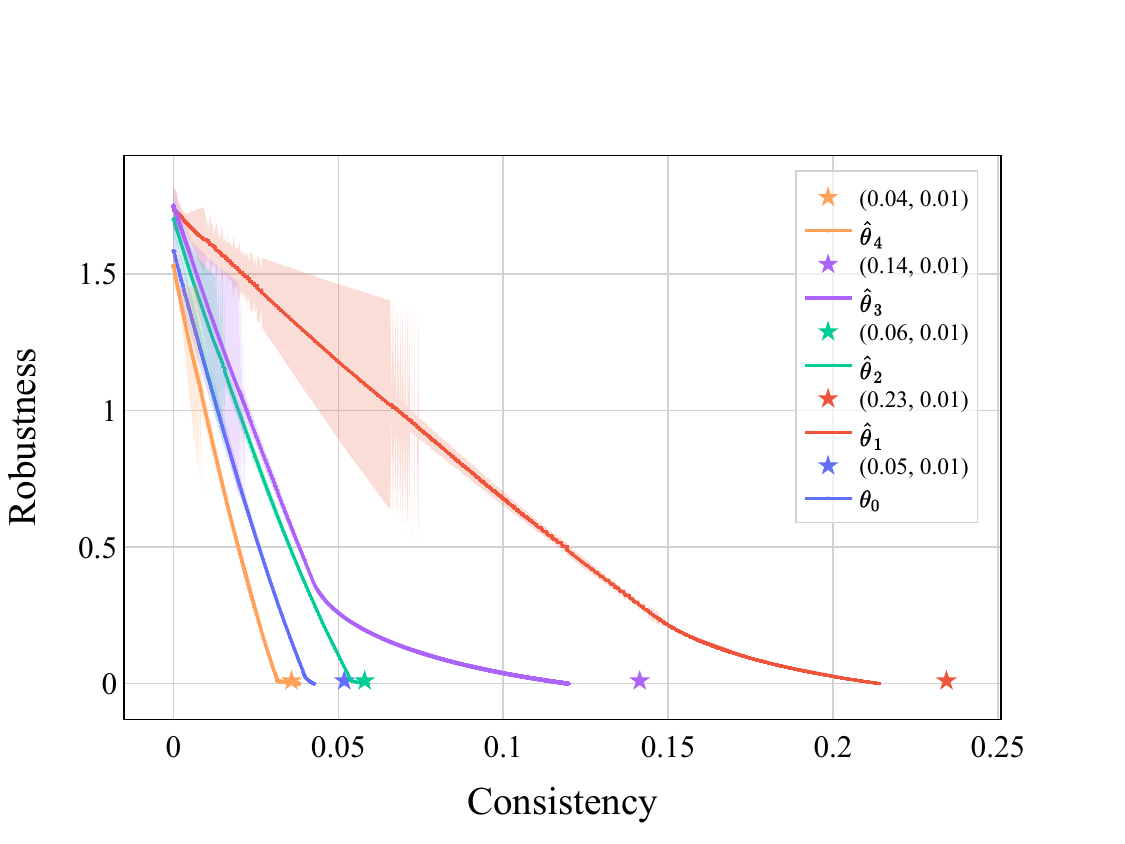}
           \caption{Synthetic Dataset, Logistic Regression}
       \end{subfigure}
            \begin{subfigure}[b]{0.45\textwidth}
           \centering
           \includegraphics[width=\textwidth]{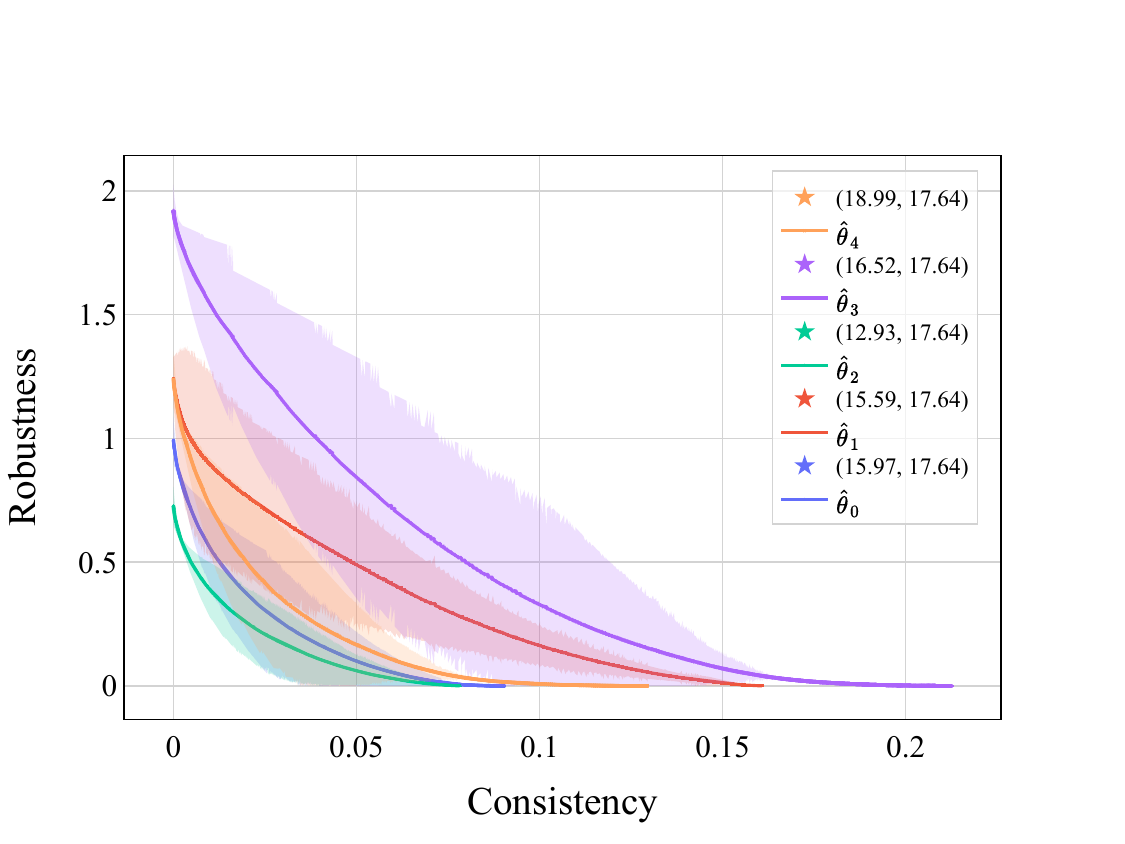}
           \caption{Synthetic Dataset, Neural Network}
       \end{subfigure}
       \\
       \begin{subfigure}[b]{0.45\textwidth}
           \centering
           \includegraphics[width=\textwidth]{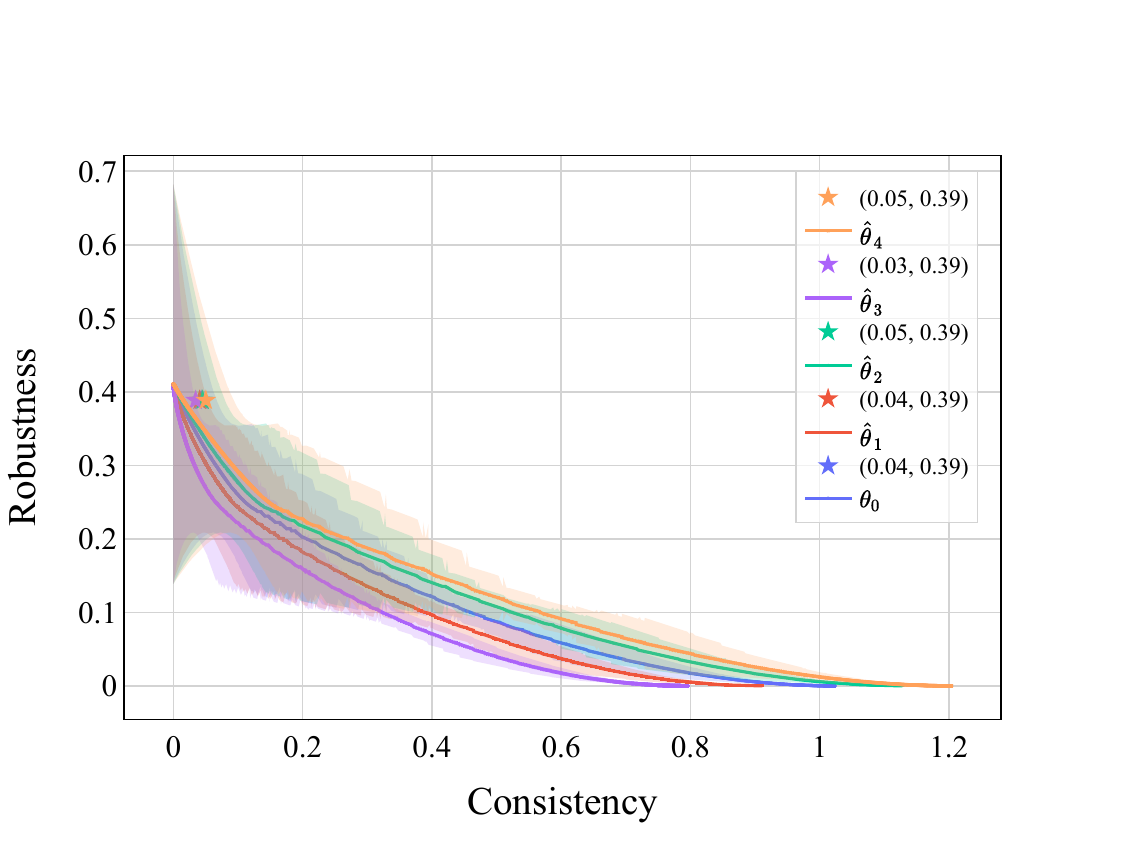}
           \caption{German Credit Dataset, Logistic Regression}
       \end{subfigure}
       \begin{subfigure}[b]{0.45\textwidth}
           \centering
           \includegraphics[width=\textwidth]{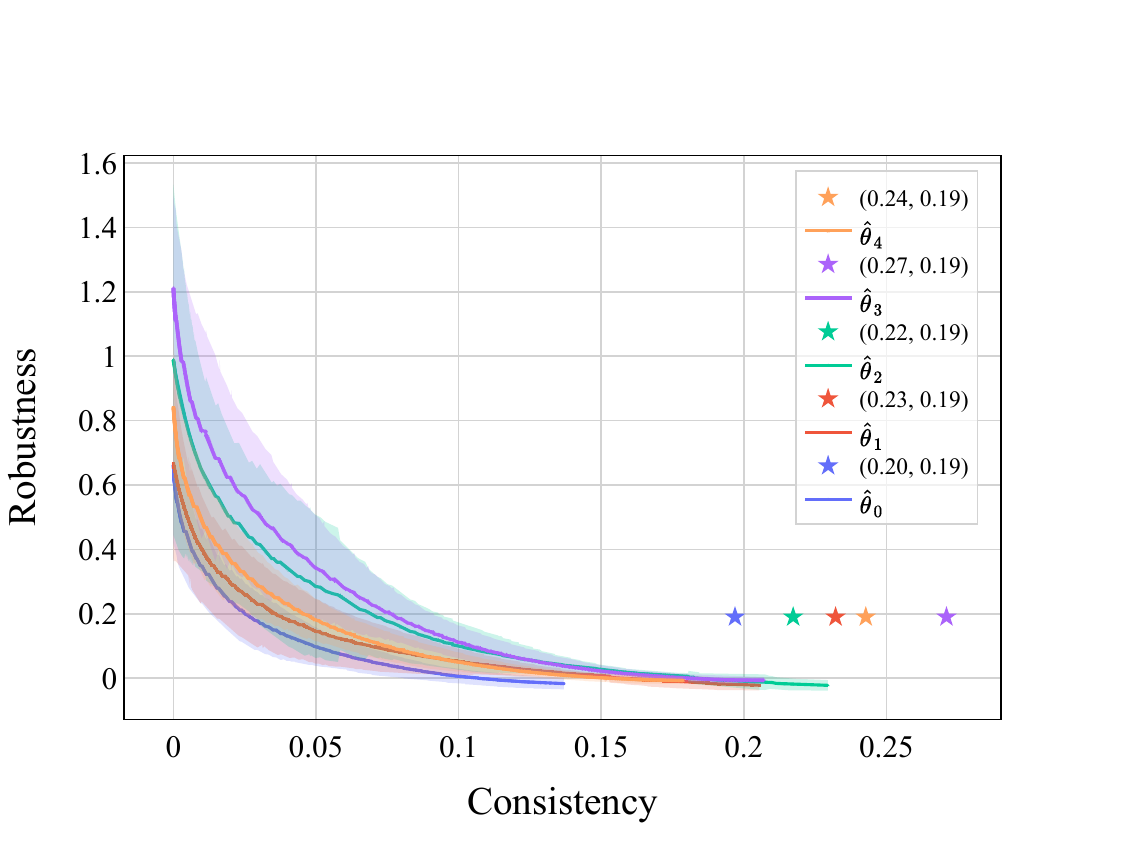}
           \caption{German Credit Dataset, Neural Network}
       \end{subfigure}
       \\
       \begin{subfigure}[b]{0.45\textwidth}
           \centering
           \includegraphics[width=\textwidth]{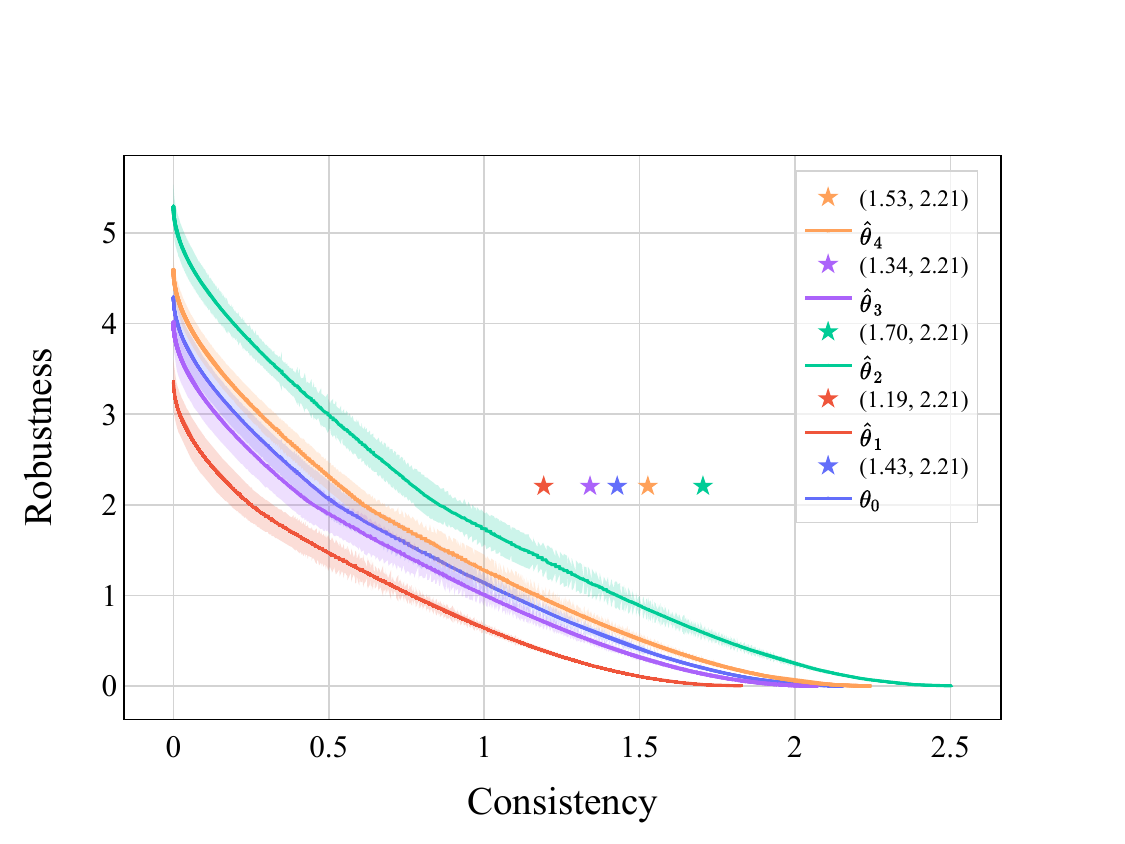}
           \caption{Small Business Dataset, Logistic Regression}
       \end{subfigure}
       \begin{subfigure}[b]{0.45\textwidth}
           \centering
           \includegraphics[width=\textwidth]{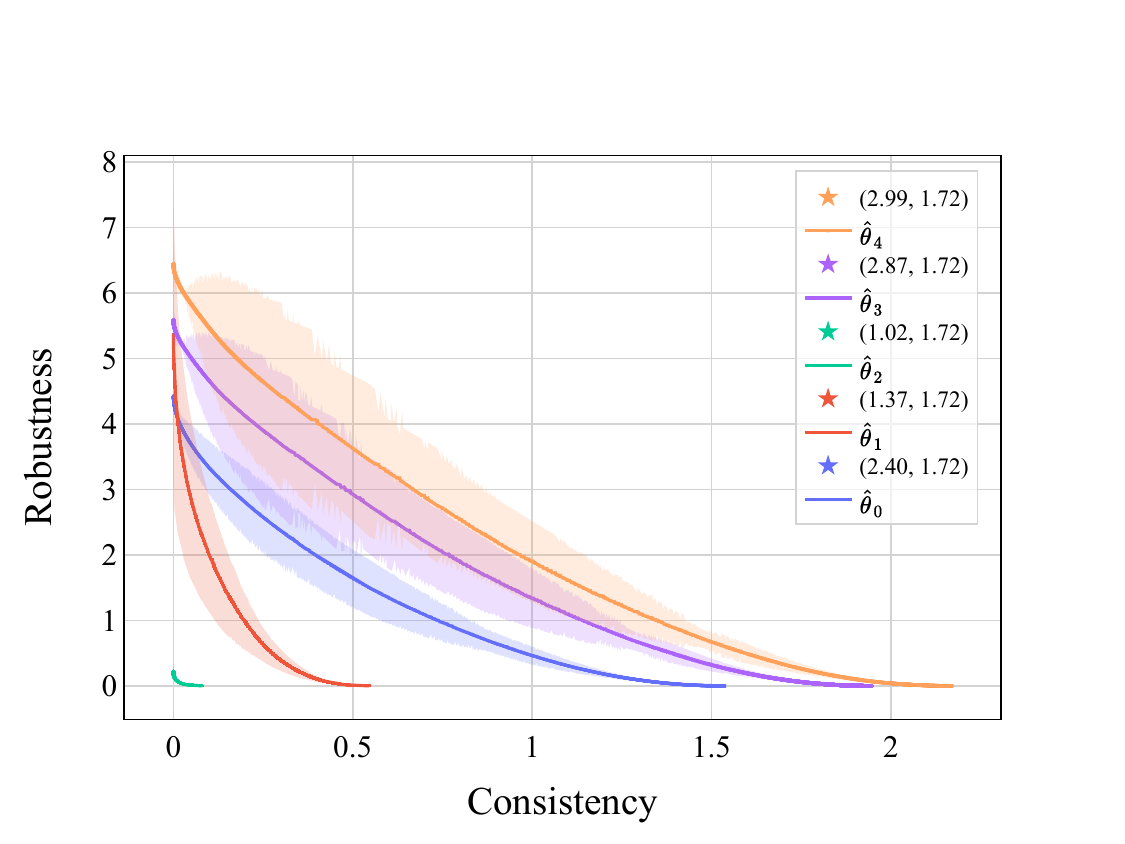}
           \caption{Small Business  Dataset, Neural Network}
      \end{subfigure}
      \caption{The trade-off between robustness and consistency for $\alpha=0.5$ with error bars for robustness: logistic regression (left) and neural network (right). Rows correspond to datasets: synthetic (top), German (middle), and Small Business (bottom). In each subfigure, each curve shows the trade-off for different predictions. The robustness and consistency of ROAR solutions at $\beta=1$ are mentioned in parentheses and depicted by stars. Missing stars are outside the range of the figure. \label{fig:trade-off-rob-err}}
      
\end{figure}

In our experiment on smoothness in Section~\ref{sec:findings}, we created the future model using a modified dataset similar to~\citep{UpadhyayJL21}. To produce the altered synthetic data, we employed the same method outlined in Section \ref{sec:dataset}, but we changed the mean of the Gaussian distribution for class 0. The new distribution is \( x \sim N(\mu'_0, \Sigma_y) \), where $\mu'_0$ is equal to $\mu_0$ + $[\alpha, 0]^T$, while $\mu'_1$ remained unchanged at $\mu_1$. We used this new distribution to learn a model for the correct prediction.
The German credit dataset~\citep{germanuci} is available in two versions, with the second one~\citep{germanupdate} fixing coding errors found in the first. This dataset exemplifies a shift due to data correction. We used the second dataset to learn the model for the correct prediction. The Small Business Administration dataset \citep{sba}, which contains data on 2,102 small business loans approved in California from 1989 to 2012, demonstrates temporal shifts. We split this dataset into two parts: data points before 2006 form the original dataset, while those from 2006 onwards constitute the shifted dataset. We used the shifted dataset to learn a model for the correct prediction.

To generate the predictions in our smoothness experiment in Section~\ref{sec:findings}, we define $\epsilon$ as half the distance between the original model $\thetaz$, and the shifted model $\pred_*$, which we use as the correct prediction for the future model. Perturbations of $\pm\epsilon$ and $\pm2\epsilon$ are then applied to each dimension of the $\pred_*$. For linear models, we use $\epsilon = 0.12$ for the Synthetic dataset, $\epsilon = 0.16$ for the German dataset, and $\epsilon = 0.43$ for the Small Business Administration dataset. For non-linear models, the amount of perturbation is determined by each instance in the dataset by using the LIME approximation to provide recourse. More details can be found in our code. In all cases, the perturbed values are clamped to ensure they remain within the $\alpha = 1$ in terms of $L^1$ distance from the original model $\thetaz$.

In our cost versus worst-case validity experiments in Section~\ref{sec:findings-comp}, motivated by reasons for data shift and given access to different versions of the datasets, we follow prior work~\citep{UpadhyayJL21, NguyenBN+22} and compute a model on a shifted version of the dataset and measure validity against this new model. For the synthetic dataset, the shifted data is achieved by changing the mean and variance in the data distribution. For the German Credit dataset, the data shift is due to data correction. For the Small Business Administration dataset, the shift is temporal. See~\citep{UpadhyayJL21} for more details.

\subsection{Error Bars for Robustness-Consistency Trade-off}
\label{sec:app-trade-offs}

This section provides additional details about the experiments regarding the trade-off between robustness and consistency. In particular, we replicate the performance of Algorithm~\ref{alg:l1-linf}, which is used to generate~Figure~\ref{fig:trade-off}, but also include error bars. These error bars are calculated for the robustness (Figure~\ref{fig:trade-off-rob-err}) and consistency costs (Figure~\ref{fig:trade-off-con-err}) when averaging is done over all the data points in the test set that require recourse as well as the folds.

\begin{figure}[ht!]
        \centering
       \begin{subfigure}[b]{0.45\textwidth}
           \centering
           \includegraphics[width=\textwidth]{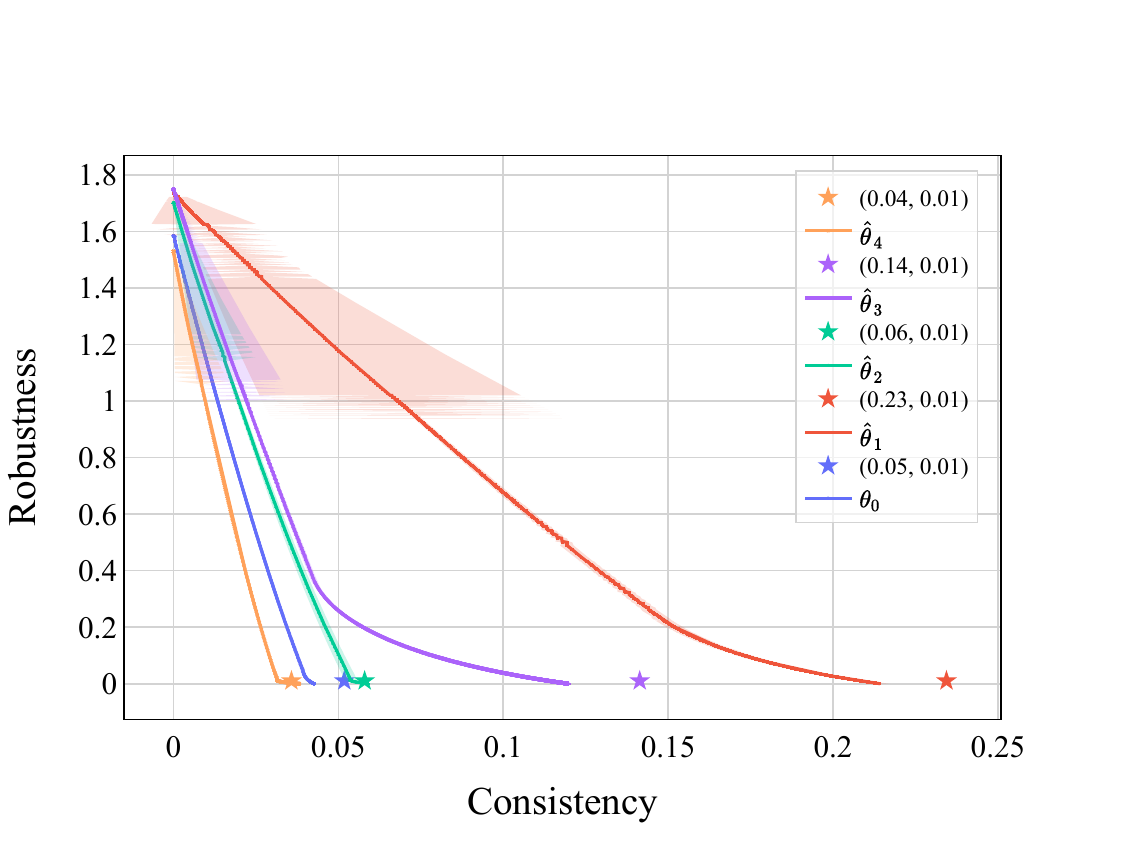}
           \caption{Synthetic Dataset, Logistic Regression}
       \end{subfigure}
            \begin{subfigure}[b]{0.45\textwidth}
           \centering
           \includegraphics[width=\textwidth]{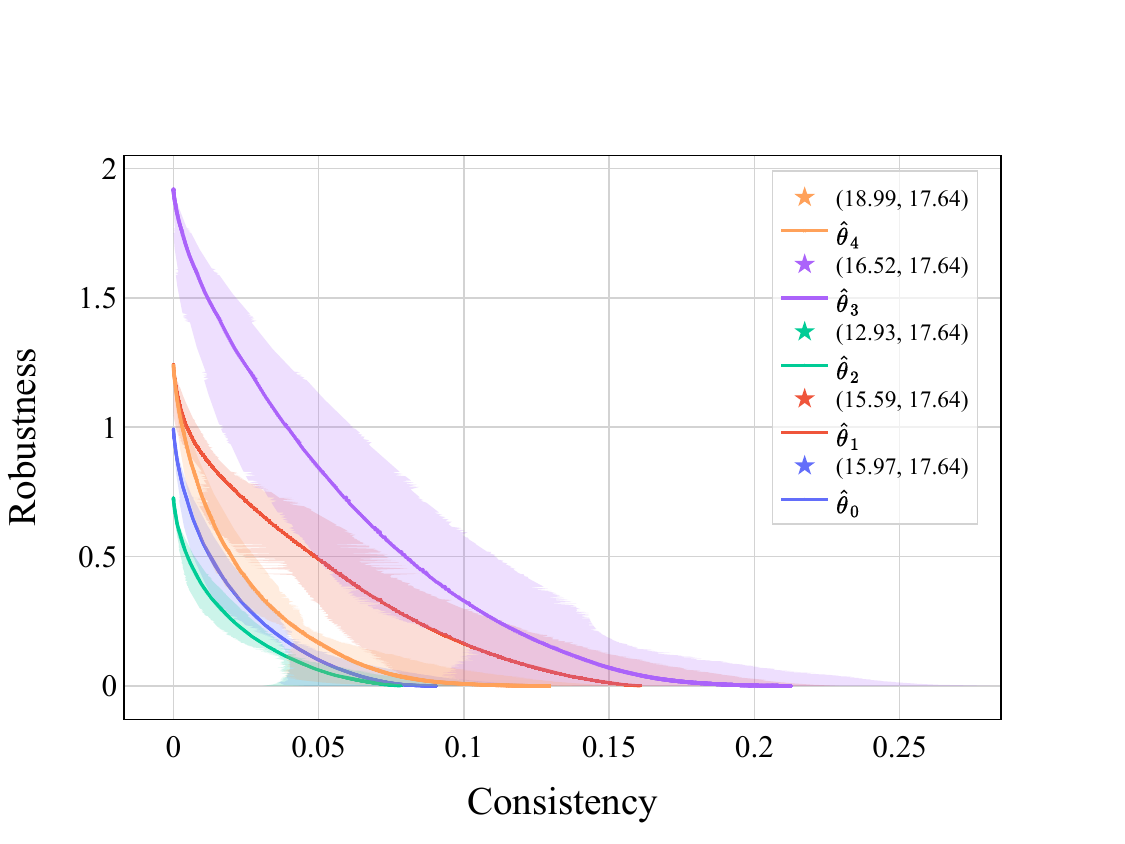}
           \caption{Synthetic Dataset, Neural Network}
       \end{subfigure}
       \\
       \begin{subfigure}[b]{0.45\textwidth}
           \centering
           \includegraphics[width=\textwidth]{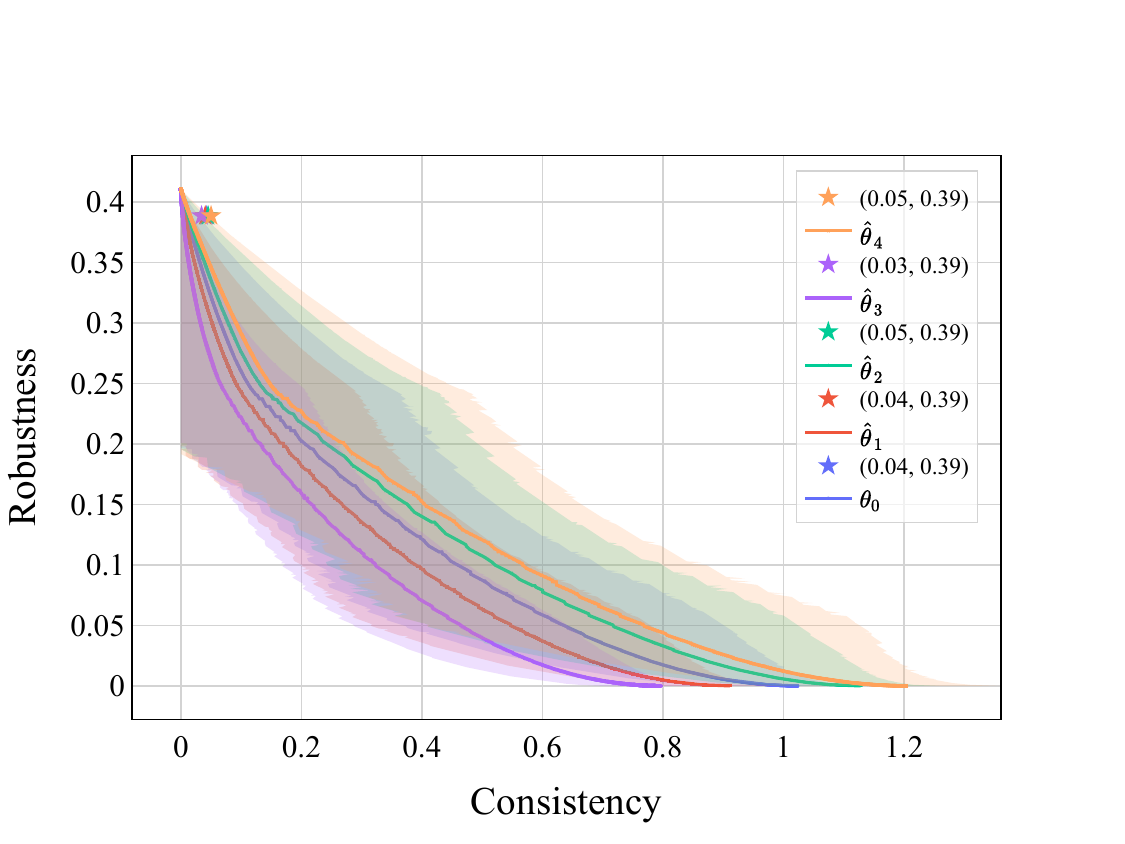}
           \caption{German Credit Dataset, Logistic Regression}
       \end{subfigure}
       \begin{subfigure}[b]{0.45\textwidth}
           \centering
           \includegraphics[width=\textwidth]{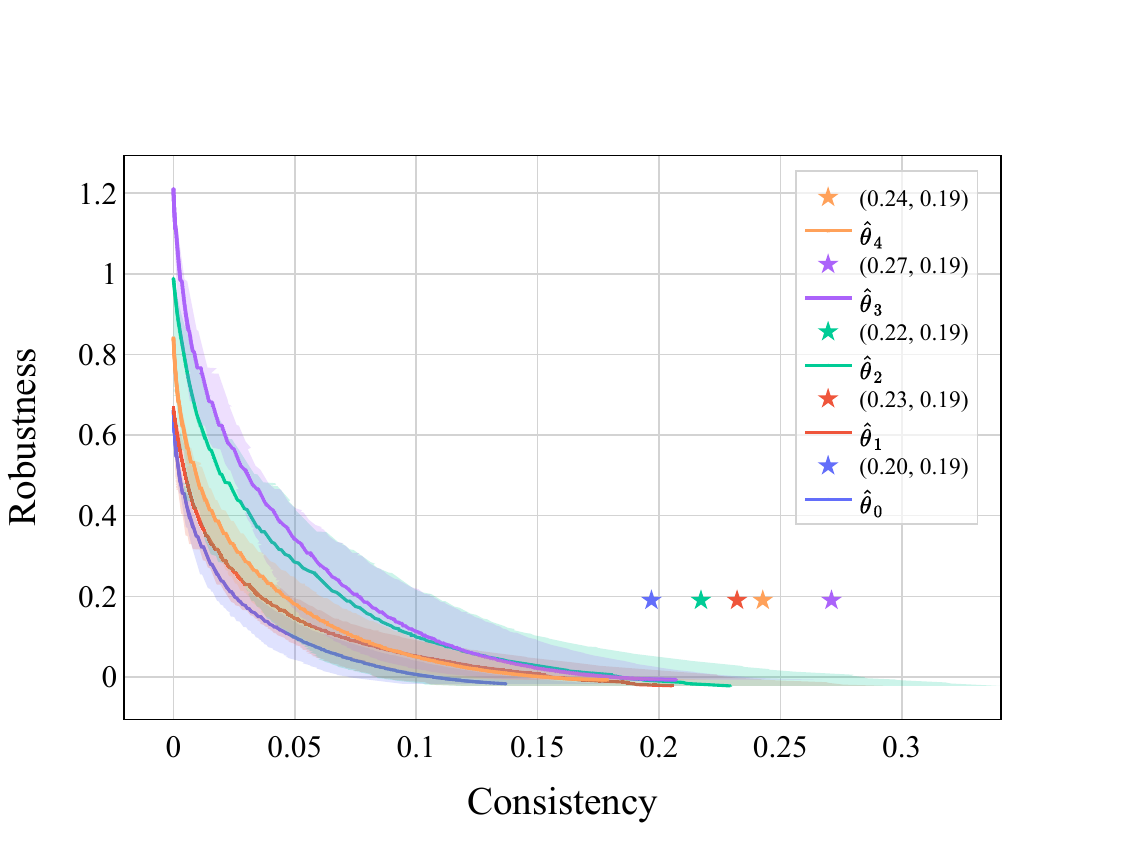}
           \caption{German Credit Dataset, Neural Network}
       \end{subfigure}
       \\
       \begin{subfigure}[b]{0.45\textwidth}
           \centering
           \includegraphics[width=\textwidth]{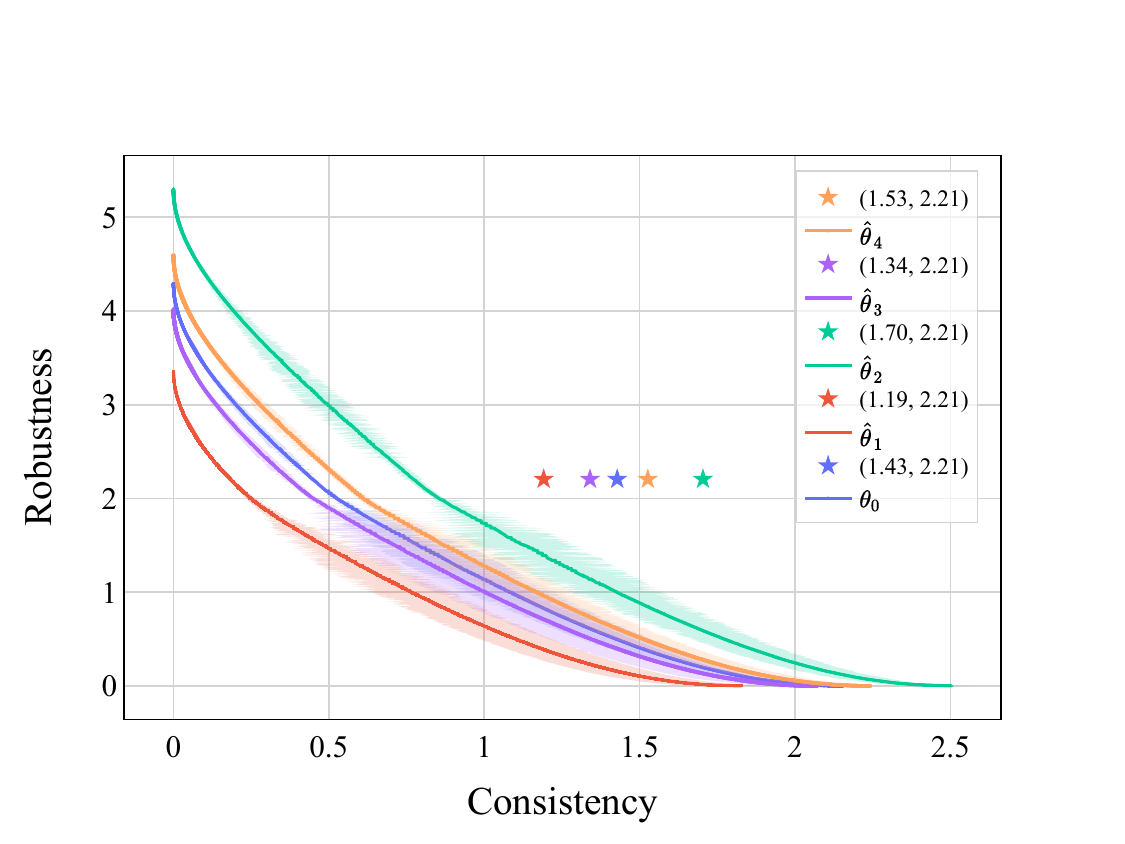}
           \caption{Small Business Dataset, Logistic Regression}
       \end{subfigure}
       \begin{subfigure}[b]{0.45\textwidth}
           \centering
           \includegraphics[width=\textwidth]{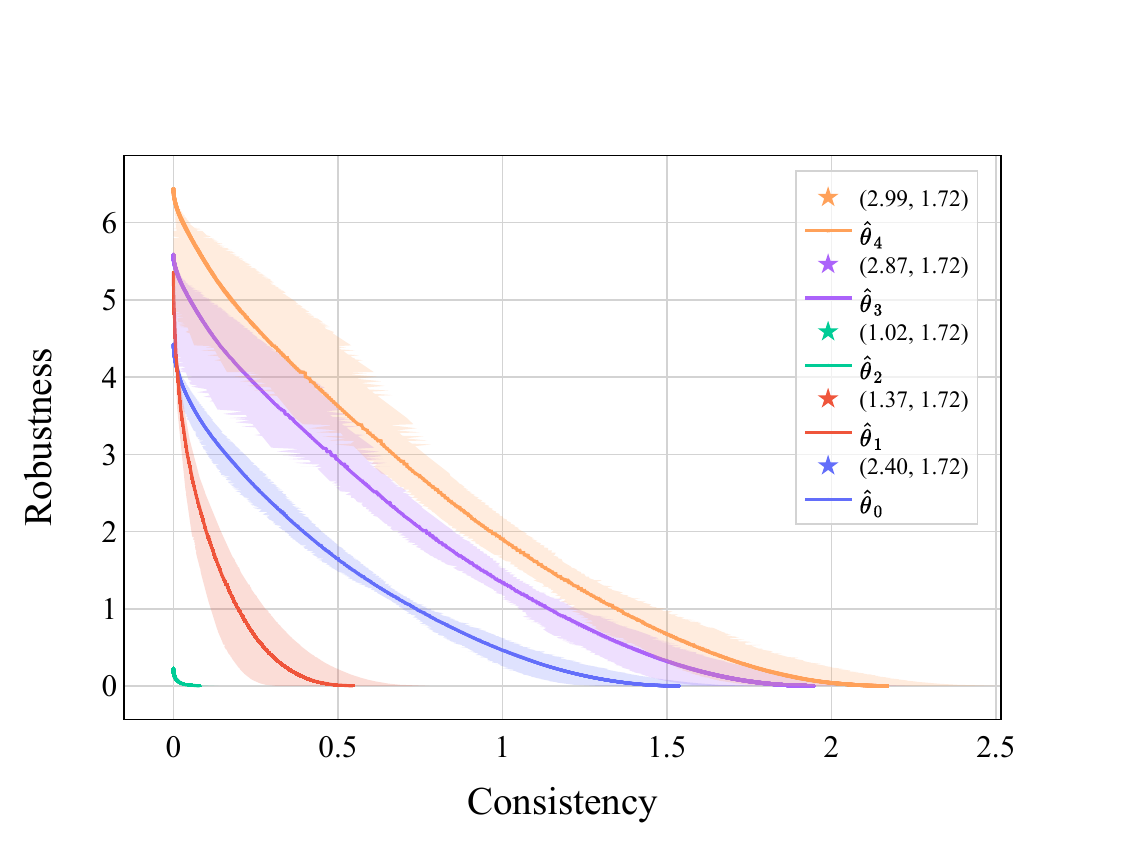}
           \caption{Small Business  Dataset, Neural Network}
      \end{subfigure}
      \caption{The trade-off between robustness and consistency for $\alpha=0.5$ with error bars for consistency: logistic regression (left) and neural network (right). Rows correspond to datasets: synthetic (top), German (middle), and Small Business (bottom). In each subfigure, each curve shows the trade-off for different predictions. The robustness and consistency of ROAR solutions at $\beta=1$ are mentioned in parentheses and depicted by stars. Missing stars are outside the range of the figure.  }
      \label{fig:trade-off-con-err}
\end{figure}

\subsection{Effect of the Parameters}\label{sec:app-para-effects}
In the experiments on the trade-off between robustness and consistency in Section~\ref{sec:findings}, we used $\alpha = 0.5$ and a $\lambda$ that maximizes the validity of recourse with respect to this $\alpha$. In this section, we see how varying $\alpha$ can affect the results. In particular, in Figures~\ref{fig:rob_con_tradeoff_alpha0.1}~and~\ref{fig:rob_con_tradeoff_alpha1.0}, we replicated the trade-offs of our algorithm, which is presented in Figure~\ref{fig:trade-off} in Section~\ref{sec:findings} with $\alpha=0.1$ and $\alpha = 1$, respectively. Again, for each choice of $\alpha$, we selected a $\lambda$ that maximizes the validity of recourse with respect to this $\alpha$. We generally observe that increasing $\alpha$ increases both the robustness and consistency costs.

\begin{figure}[ht!]
        \centering
       \begin{subfigure}[b]{0.45\textwidth}
           \centering
           \includegraphics[width=\textwidth]{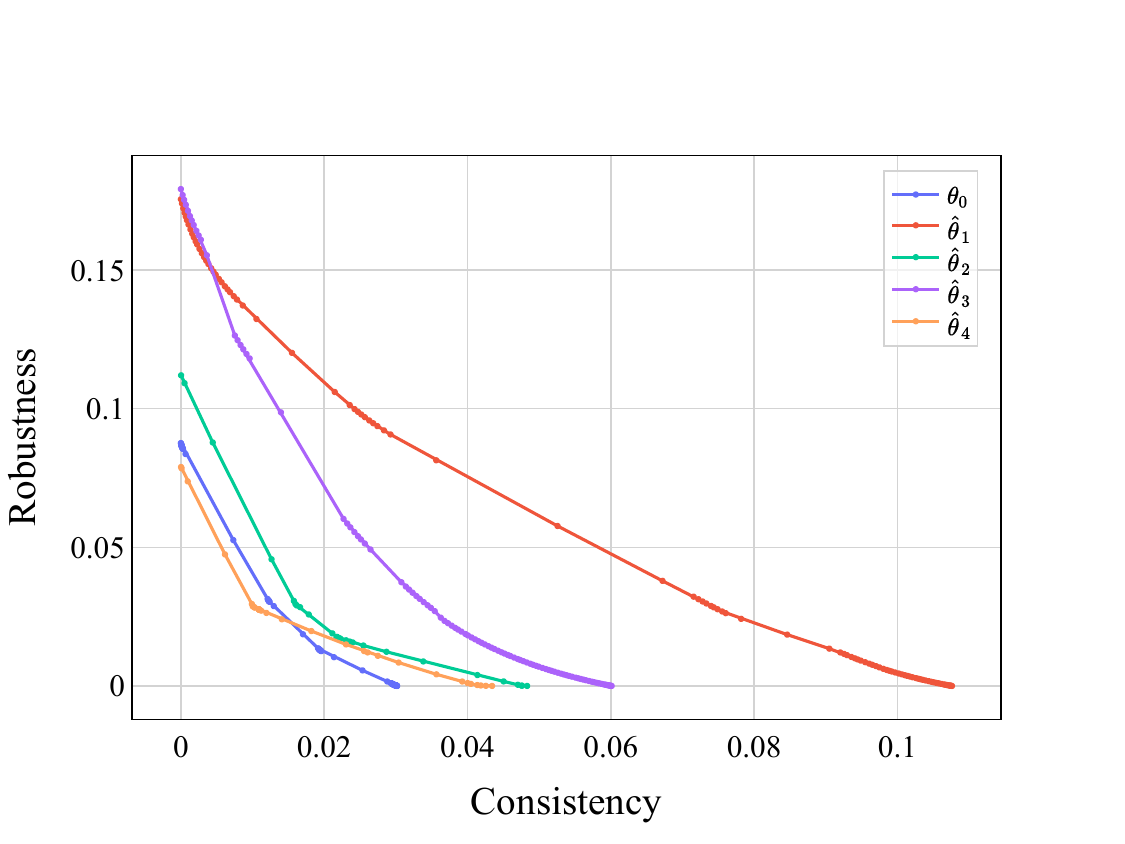}
           \caption{Synthetic Dataset, Logistic Regression}
       \end{subfigure}
            \begin{subfigure}[b]{0.45\textwidth}
           \centering
           \includegraphics[width=\textwidth]{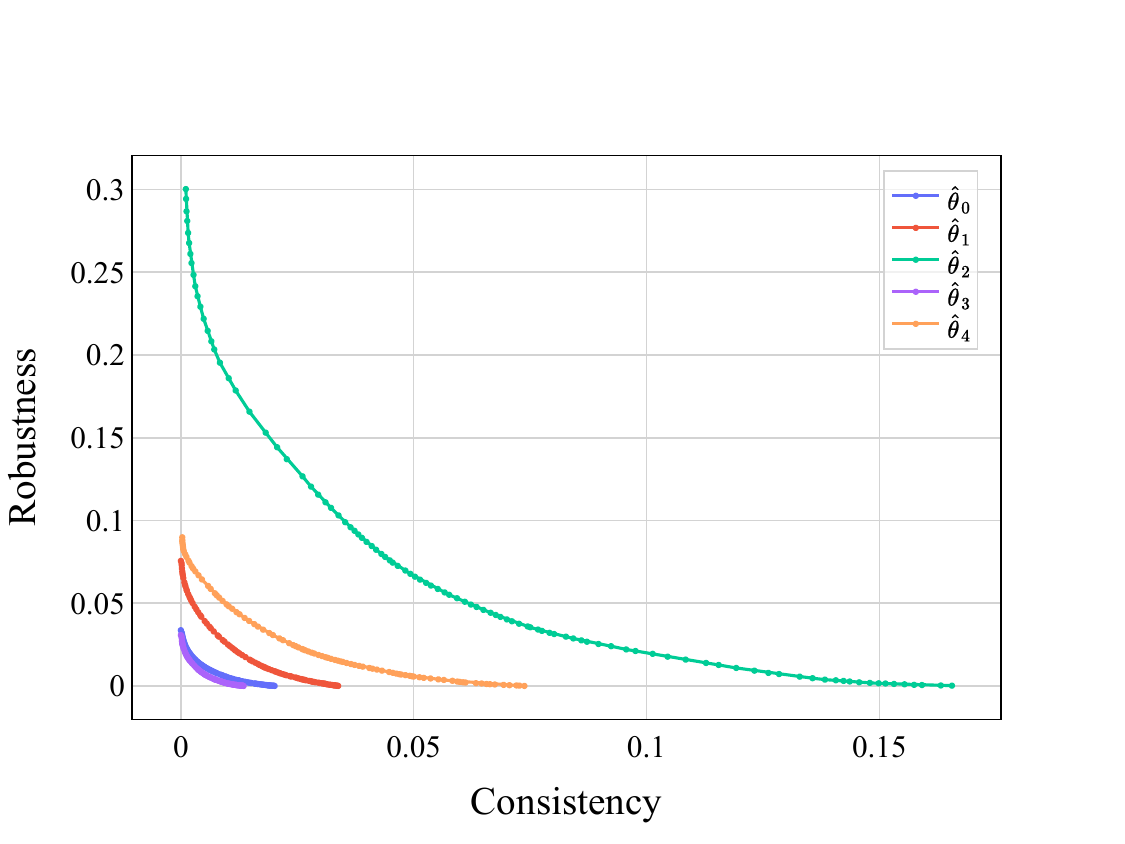}
           \caption{Synthetic Dataset, Neural Network}
       \end{subfigure}
       \\
       \begin{subfigure}[b]{0.45\textwidth}
           \centering
           \includegraphics[width=\textwidth]{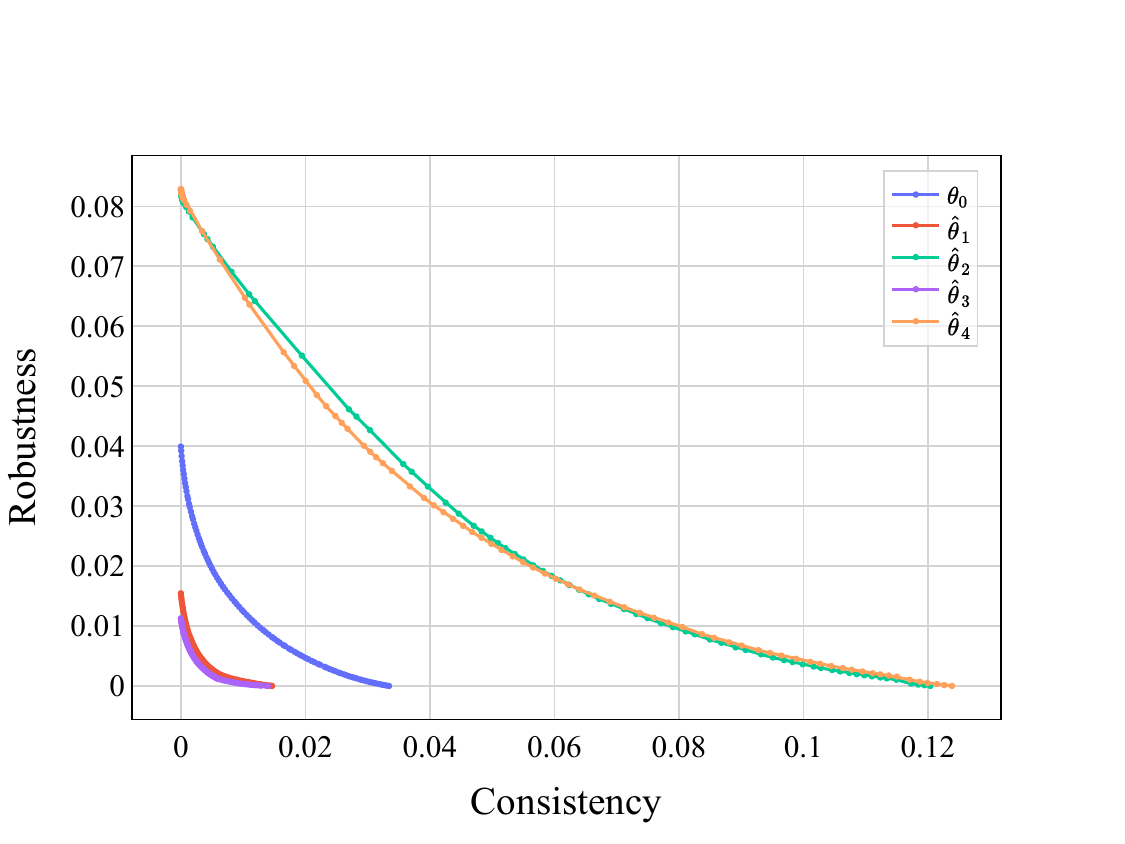}
           \caption{German Credit Dataset, Logistic Regression}
       \end{subfigure}
       \begin{subfigure}[b]{0.45\textwidth}
           \centering
           \includegraphics[width=\textwidth]{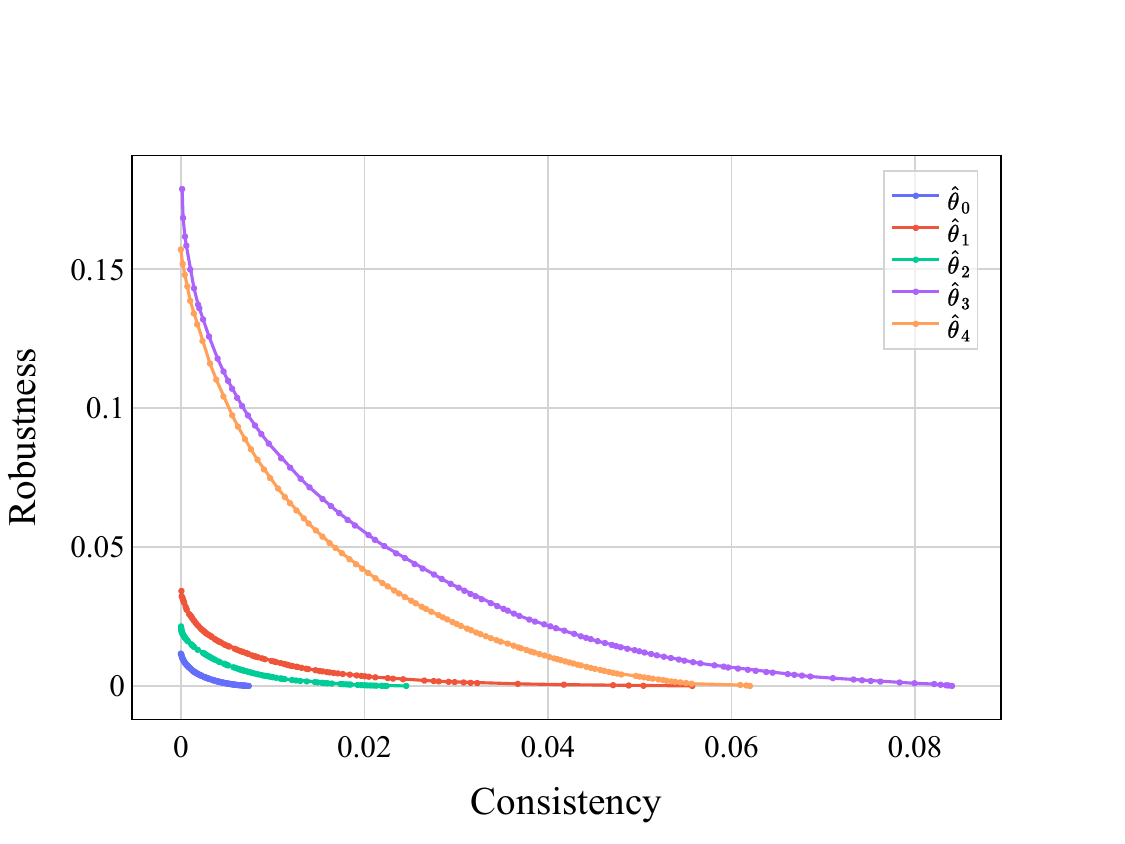}
           \caption{German Credit Dataset, Neural Network}
       \end{subfigure}
       \\
       \begin{subfigure}[b]{0.45\textwidth}
           \centering
           \includegraphics[width=\textwidth]{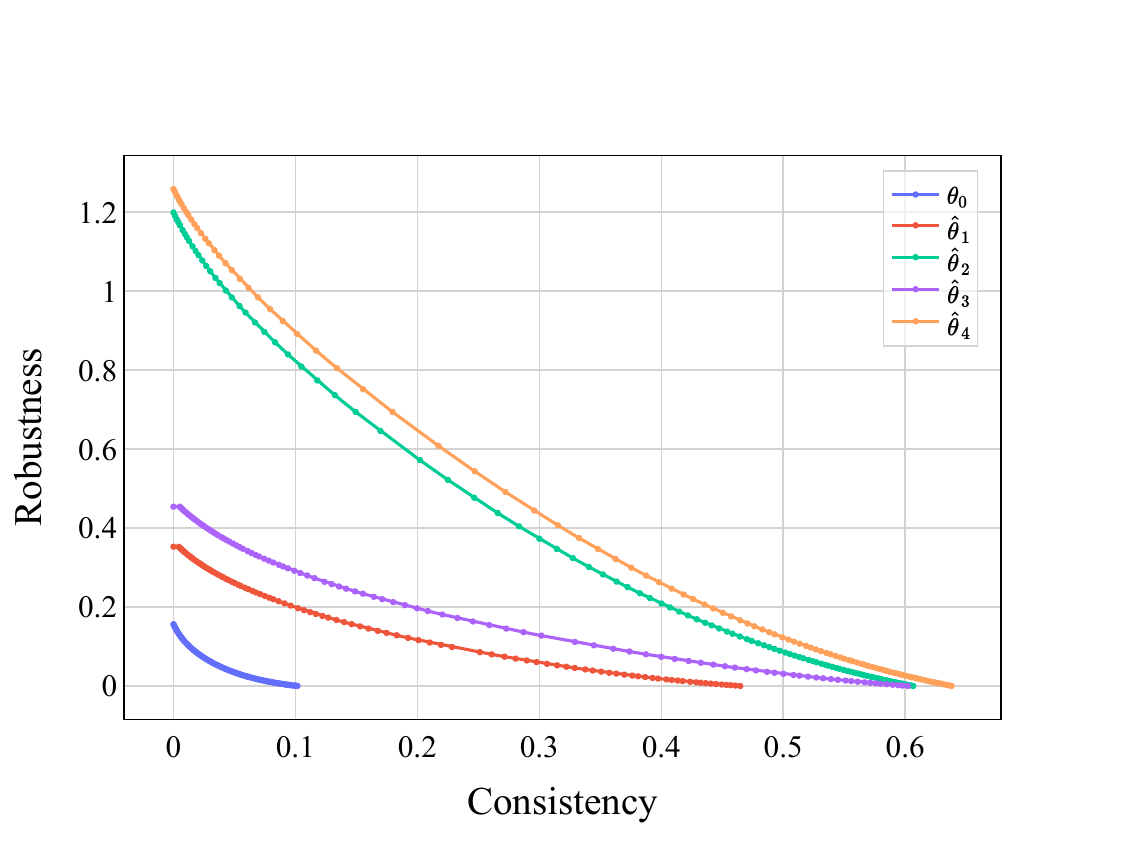}
           \caption{Small Business Dataset, Logistic Regression}
       \end{subfigure}
       \begin{subfigure}[b]{0.45\textwidth}
           \centering
           \includegraphics[width=\textwidth]{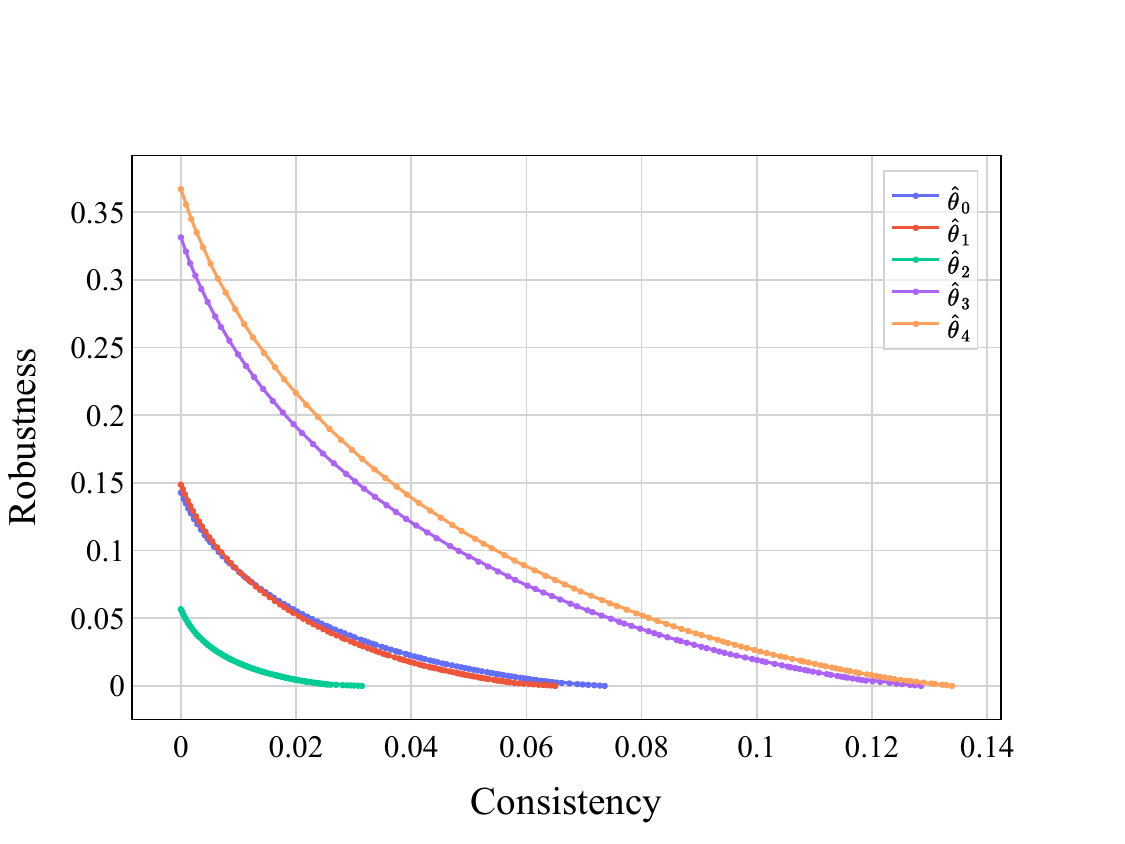}
           \caption{Small Business  Dataset, Neural Network}
      \end{subfigure}
      \caption{The trade-off between robustness and consistency for $\alpha=0.1$: logistic regression (left) and neural network (right). Rows correspond to datasets: synthetic (top), German (middle), and Small Business (bottom). In each subfigure, each curve shows the trade-off for different predictions as mentioned in the legend.}
      \label{fig:rob_con_tradeoff_alpha0.1}
\end{figure}

\begin{figure}[t!]
        \centering
       \begin{subfigure}[b]{0.45\textwidth}
           \centering
           \includegraphics[width=\textwidth]{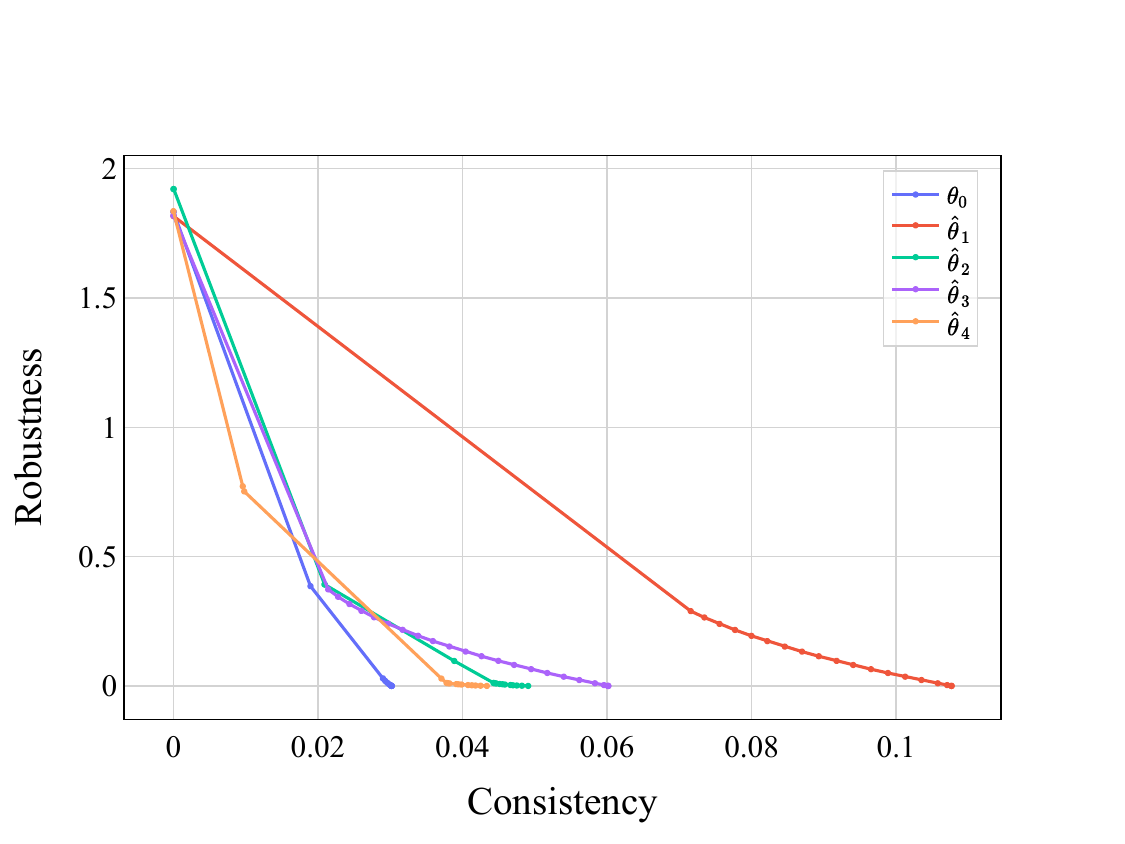}
           \caption{Synthetic Dataset, Logistic Regression}
       \end{subfigure}
            \begin{subfigure}[b]{0.45\textwidth}
           \centering
           \includegraphics[width=\textwidth]{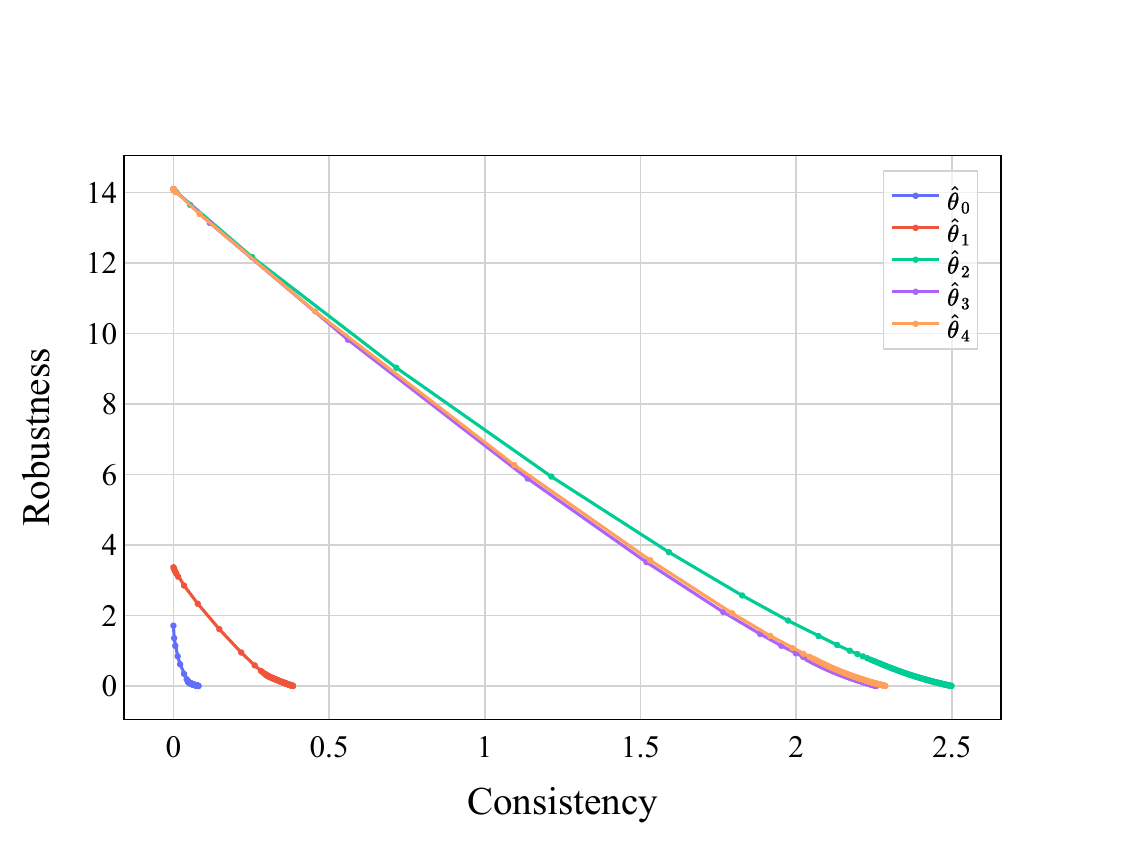}
           \caption{Synthetic Dataset, Neural Network}
       \end{subfigure}
       \\
       \begin{subfigure}[b]{0.45\textwidth}
           \centering
           \includegraphics[width=\textwidth]{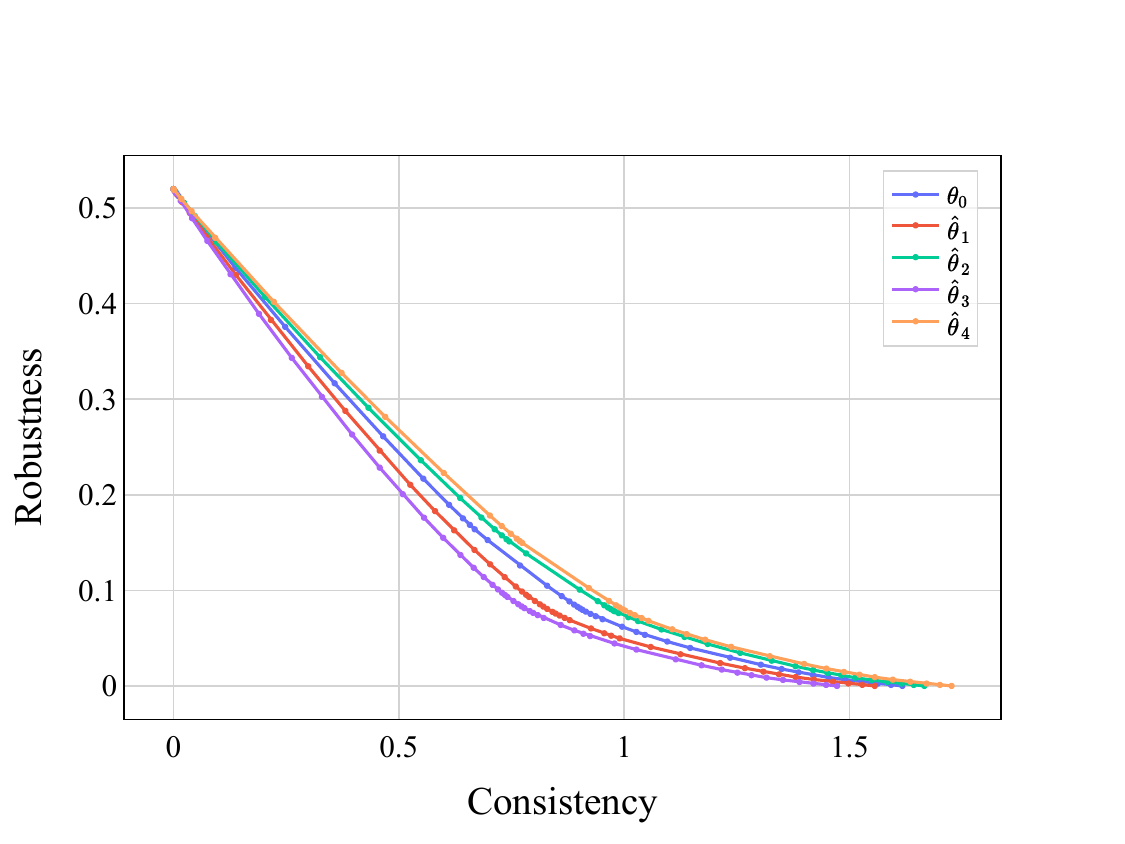}
           \caption{German Credit Dataset, Logistic Regression}
       \end{subfigure}
       \begin{subfigure}[b]{0.45\textwidth}
           \centering
           \includegraphics[width=\textwidth]{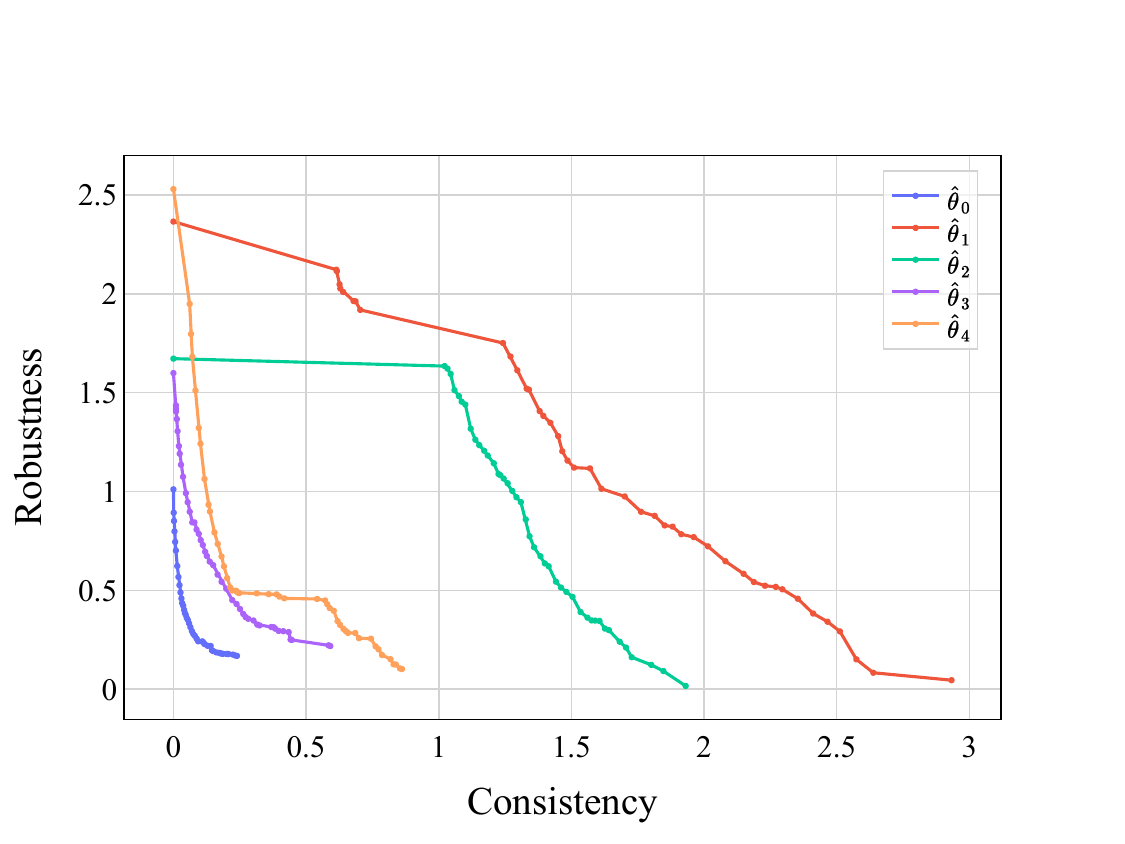}
           \caption{German Credit Dataset, Neural Network}
       \end{subfigure}
       \\
       \begin{subfigure}[b]{0.45\textwidth}
           \centering
           \includegraphics[width=\textwidth]{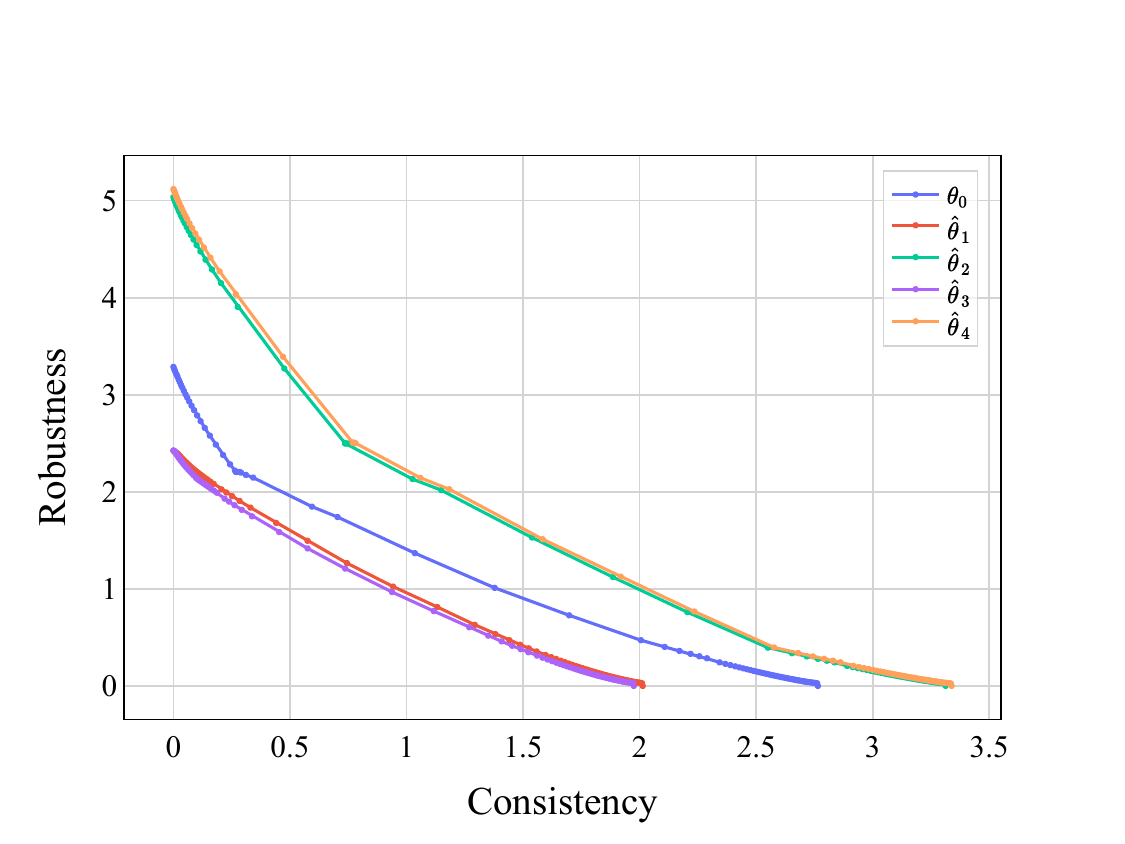}
           \caption{Small Business Dataset, Logistic Regression}
       \end{subfigure}
       \begin{subfigure}[b]{0.45\textwidth}
           \centering
           \includegraphics[width=\textwidth]{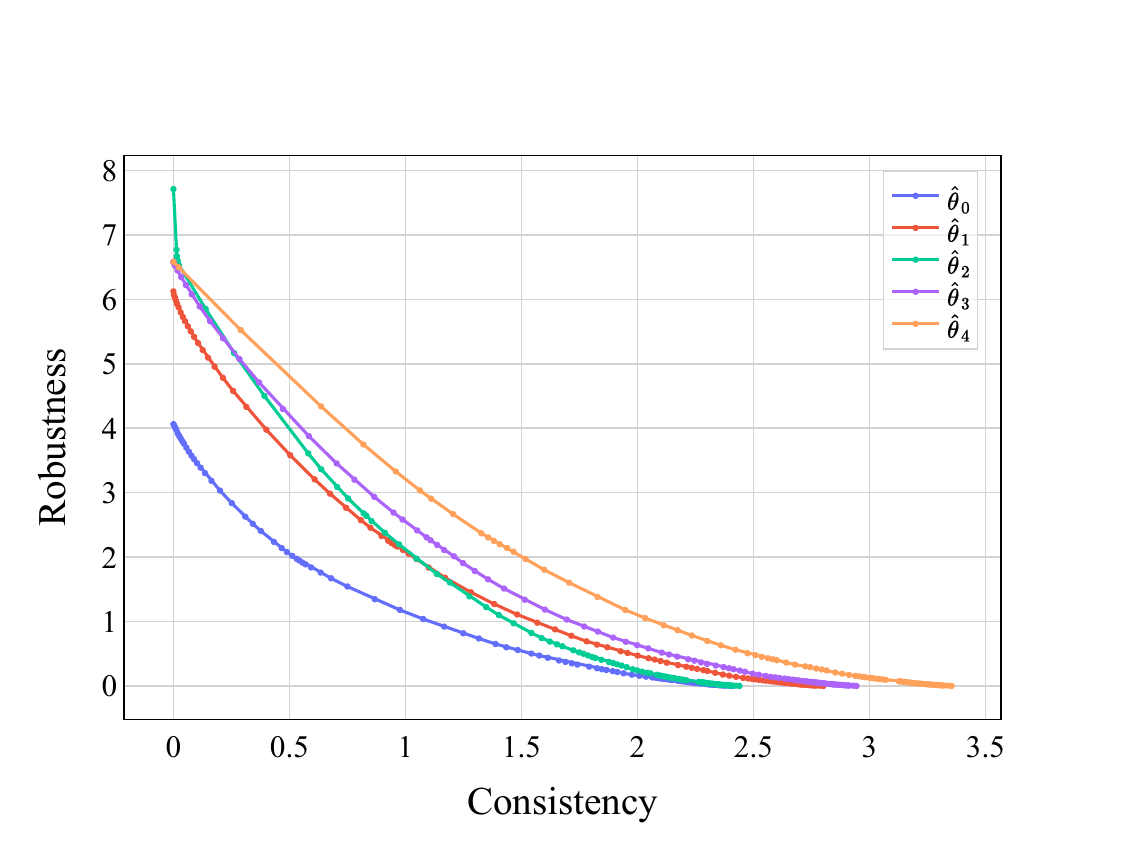}
           \caption{Small Business  Dataset, Neural Network}
      \end{subfigure}
      \caption{The trade-off between robustness and consistency for $\alpha=1$: logistic regression (left) and neural network (right). Rows correspond to datasets: synthetic (top), German (middle), and Small Business (bottom). In each subfigure, each curve shows the trade-off for different predictions as mentioned in the legend.}
      \label{fig:rob_con_tradeoff_alpha1.0}
\end{figure}

\subsection{Actionable Features and Post-processing}
\label{sec:app-actionable}
Similar to prior work~\citep{UpadhyayJL21, NguyenBN+22}, our algorithm does not directly handle categorical features, and the recourse provided by our algorithm can violate potential constraints or restrictions that might exist on some features. In this section, we study how the performance of our algorithm can be affected under these situations.

In particular, in this section, we repeat the experiments on robustness-consistency trade-off from Section~\ref{sec:findings} but post-process the recourses generated by Algorithm~\ref{alg:l1-linf} to satisfy categorical and actionable features. In particular, we first remove all non-actionable or immutable features~\citep{GuldoganZSP023} from the dataset, compute the recourses using Algorithm~\ref{alg:l1-linf}, and then project the recourse to the feasible region to satisfy the constraints posed by categorical features.

Since the synthetic dataset does not include any categorical or immutable features, we exclude it from our analysis. The robustness-consistency trade-off for both logistic regression and neural network models on the German Credit dataset is depicted in Figure~\ref{fig:trade-off-actionable}. Comparing the left panel with no post-processing to the right panel with post-processing, we observe that the achievable trade-off after post-processing becomes slightly worse (e.g., consistency values do not reach 0 in Figure~\ref{fig:rob_con_nn_german_alg1_actionable}), but the degradation in performance is generally very small.
\begin{figure}[ht!]
        \centering
        \begin{subfigure}[b]{0.45\textwidth}
            \centering
            \includegraphics[width=\textwidth]{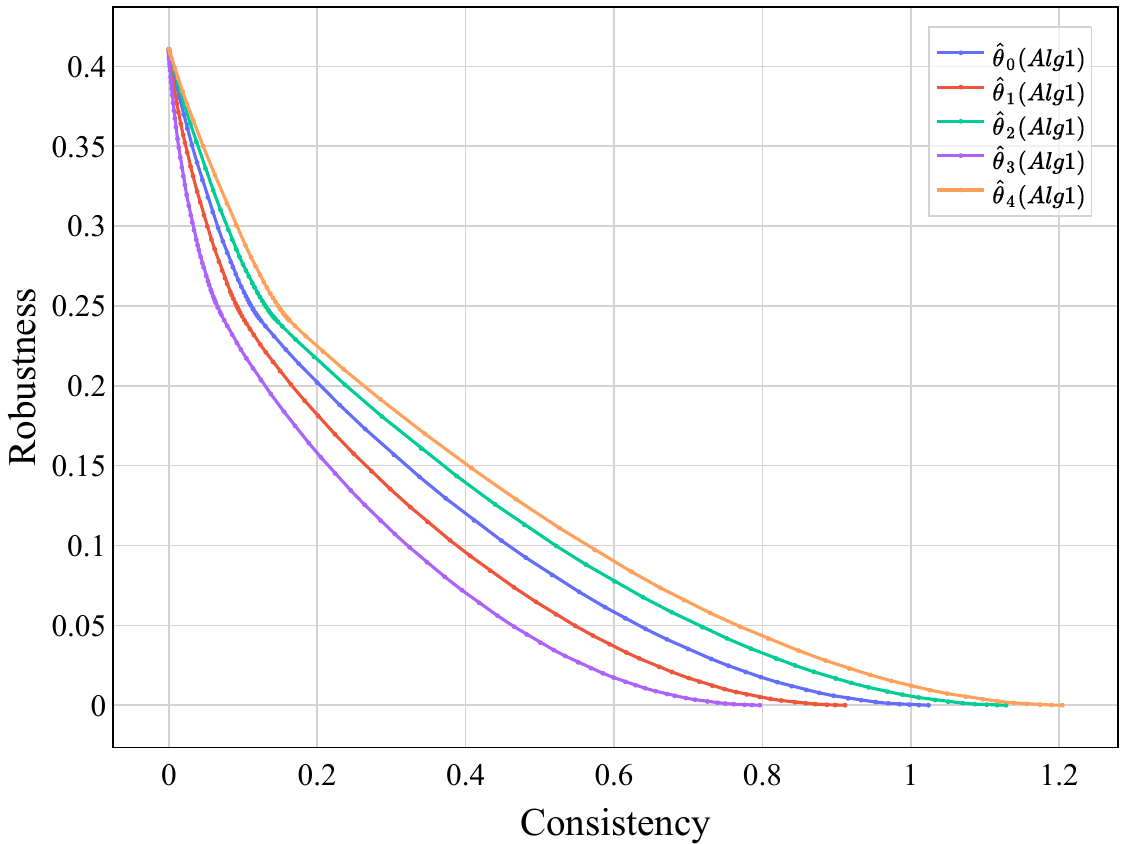}
            \caption{Logistic Regression}
            \label{fig:rob_con_lr_german_alg1}
        \end{subfigure}
        \begin{subfigure}[b]{0.45\textwidth}
            \centering
            \includegraphics[width=\textwidth]{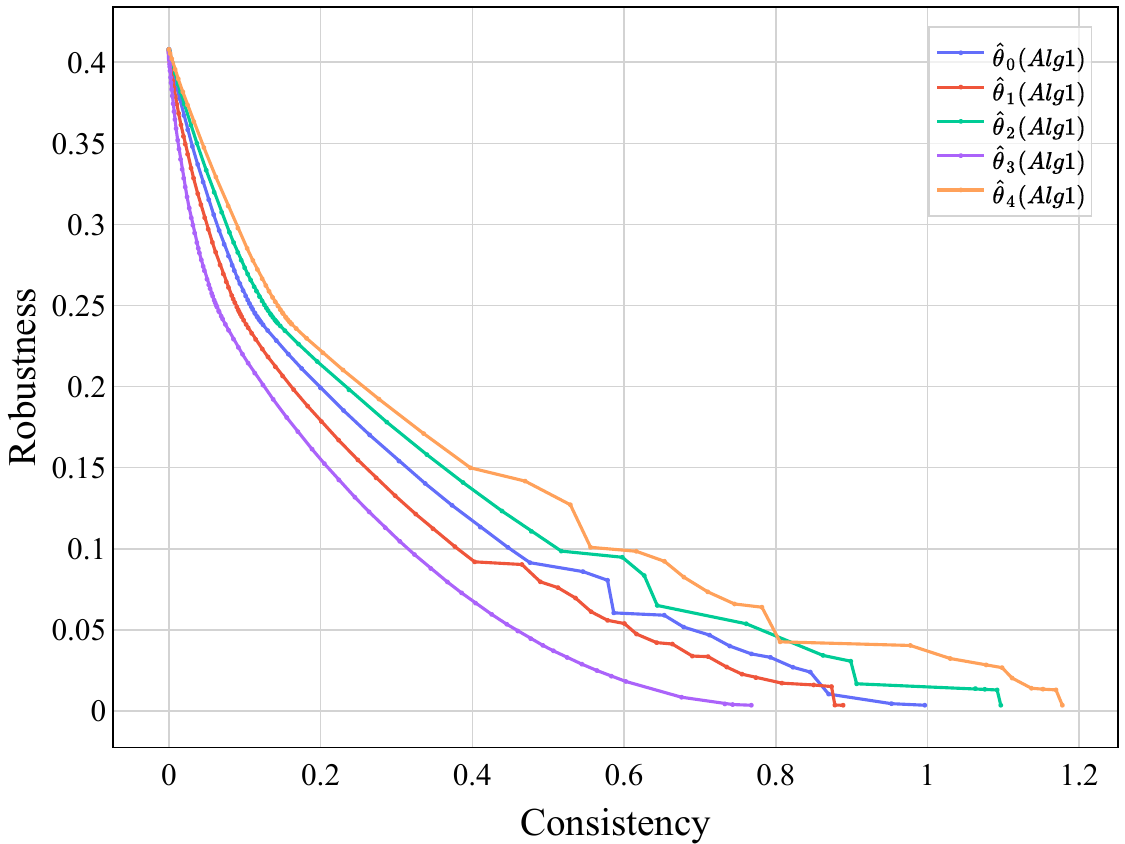}
            \caption{Logistic Regression, Actionable}
            \label{fig:rob_con_lr_german_alg1_actionable}
       \end{subfigure}
       \\
       \begin{subfigure}[b]{0.45\textwidth}
            \centering
            \includegraphics[width=\textwidth]{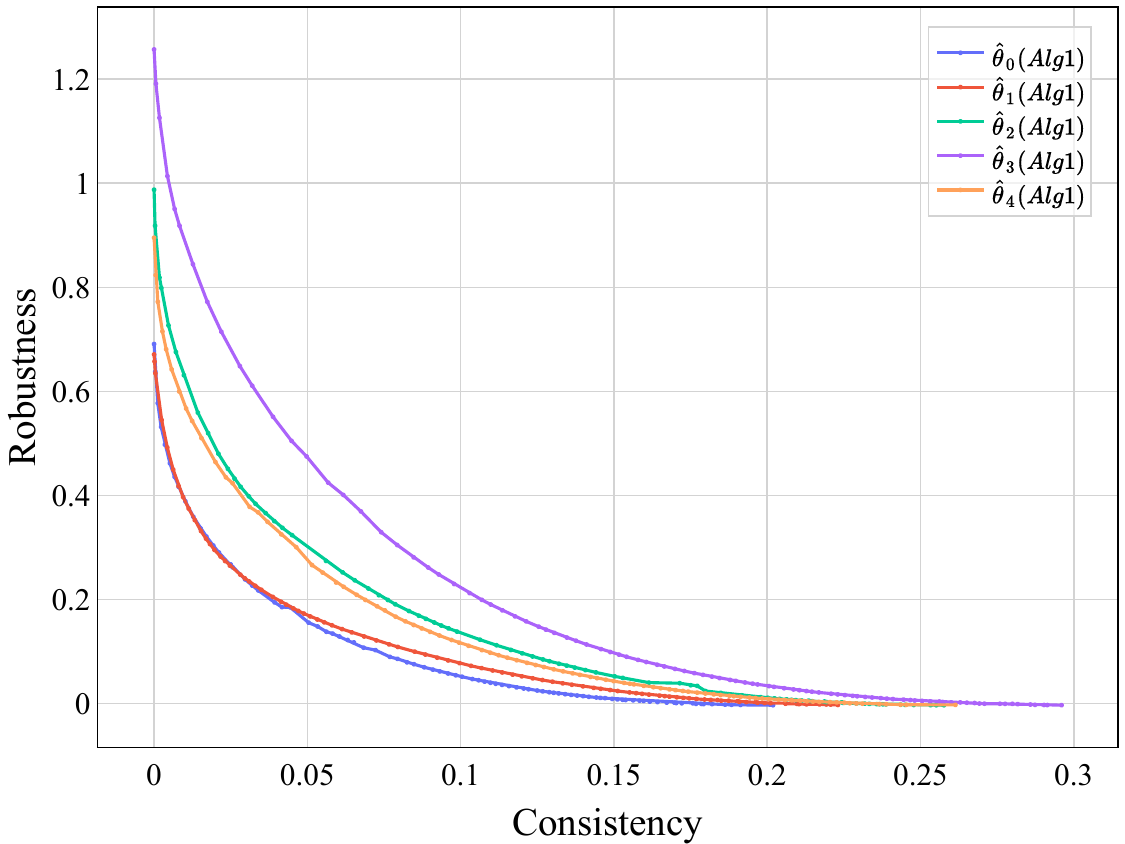}
            \caption{Neural Network}
            \label{fig:rob_con_nn_german_alg1}
        \end{subfigure}
        \begin{subfigure}[b]{0.45\textwidth}
            \centering
            \includegraphics[width=\textwidth]{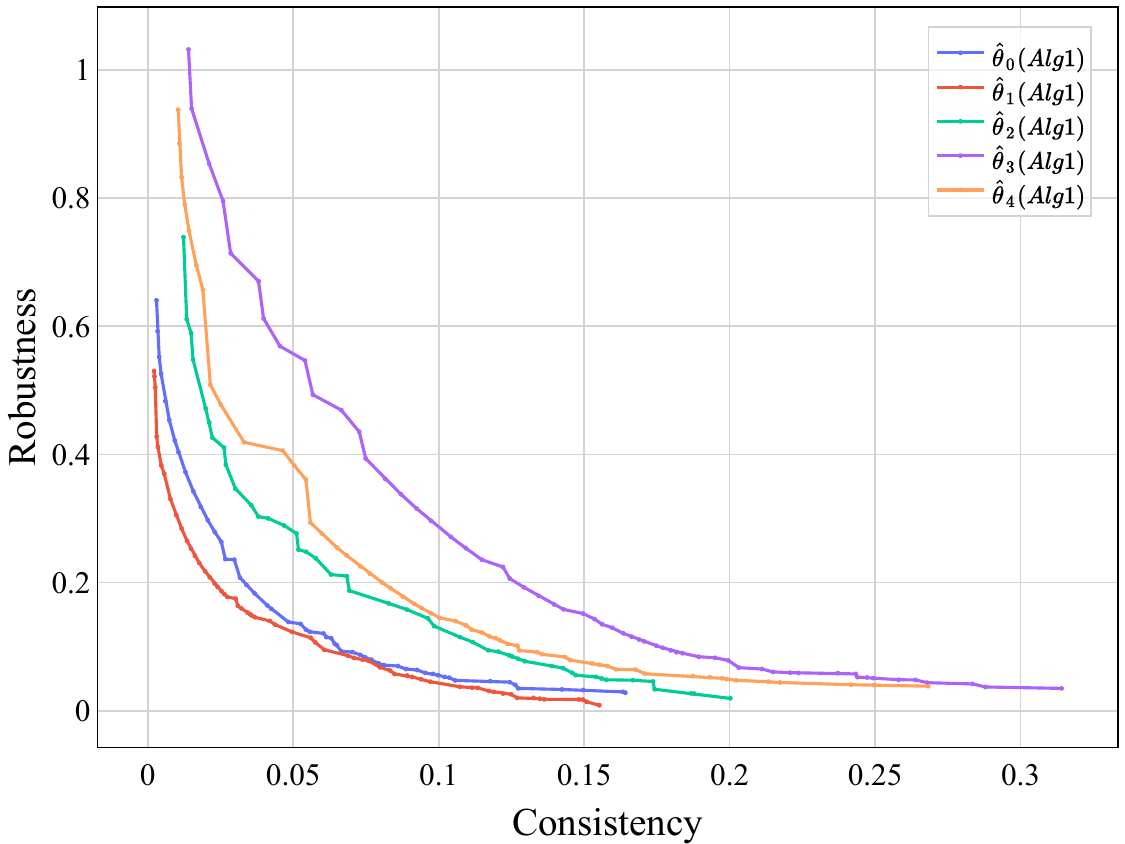}
            \caption{Neural Network, Actionable}
            \label{fig:rob_con_nn_german_alg1_actionable}
        \end{subfigure}
        \caption{The trade-off between robustness and consistency for $\alpha=0.5$ for German Credit Dataset: logistic regression (top) and neural network (bottom). The left panel corresponds to recourses provided by Algorithm~\ref{alg:l1-linf}, while the right panel corresponds to recourses provided to only actionable features and post-processed for categorical features. In each subfigure, each curve shows the trade-off for different predictions. \label{fig:trade-off-actionable}}
      
\end{figure}

\subsection{Baselines for Smoothness Results}
\label{sec:exp-app-baseline-smooth}
In this section, we revisit the smoothness results in Section~\ref{sec:findings} by including a baseline. For any $\beta$, the optimization problem in Equation~\ref{eq:rob-cons-trade} can be solved by the type of gradient-based techniques that are common in adversarial training~\citep{MadryMSTV18}. Since ROAR is one such algorithm, we can modify it by changing its objective function and use it as a baseline to Algorithm~\ref{alg:l1-linf}.

The results are presented in Figure~\ref{fig:smoothness-baseline}. In each subfigure, the left panel represents the smoothness of Algorithm~\ref{alg:l1-linf} (exactly as it is depicted in Figure~\ref{fig:smoothness}) and the right panel depicts the smoothness of ROAR. There are a couple of interesting observations: First of all, similar to our algorithm, following $\pred^*$ at $\beta=0$ will result in the lowest smoothness value, though compared to following other predictions though unlike our algorithm, the smoothness in this situation can be higher than 0. On the other hand, at the other extreme, the smoothness of ROAR converges for different predictions since, in this case, the prediction is ignored. Moreover, the smoothness of ROAR for each prediction also exhibits non-monotone behavior as a function of $\beta$. However, the $\beta$ at which the non-monotonicity starts to emerge is different for ROAR compared to our algorithm. Finally, we generally observe that the smoothness of ROAR using the same prediction and $\beta$ is generally much worse than our algorithm (note that the Y axis has different scales in the left and right panels).

\begin{figure*}[ht!]
        \centering
       \begin{subfigure}[b]{0.45\textwidth}
           \centering
           \includegraphics[width=\textwidth]{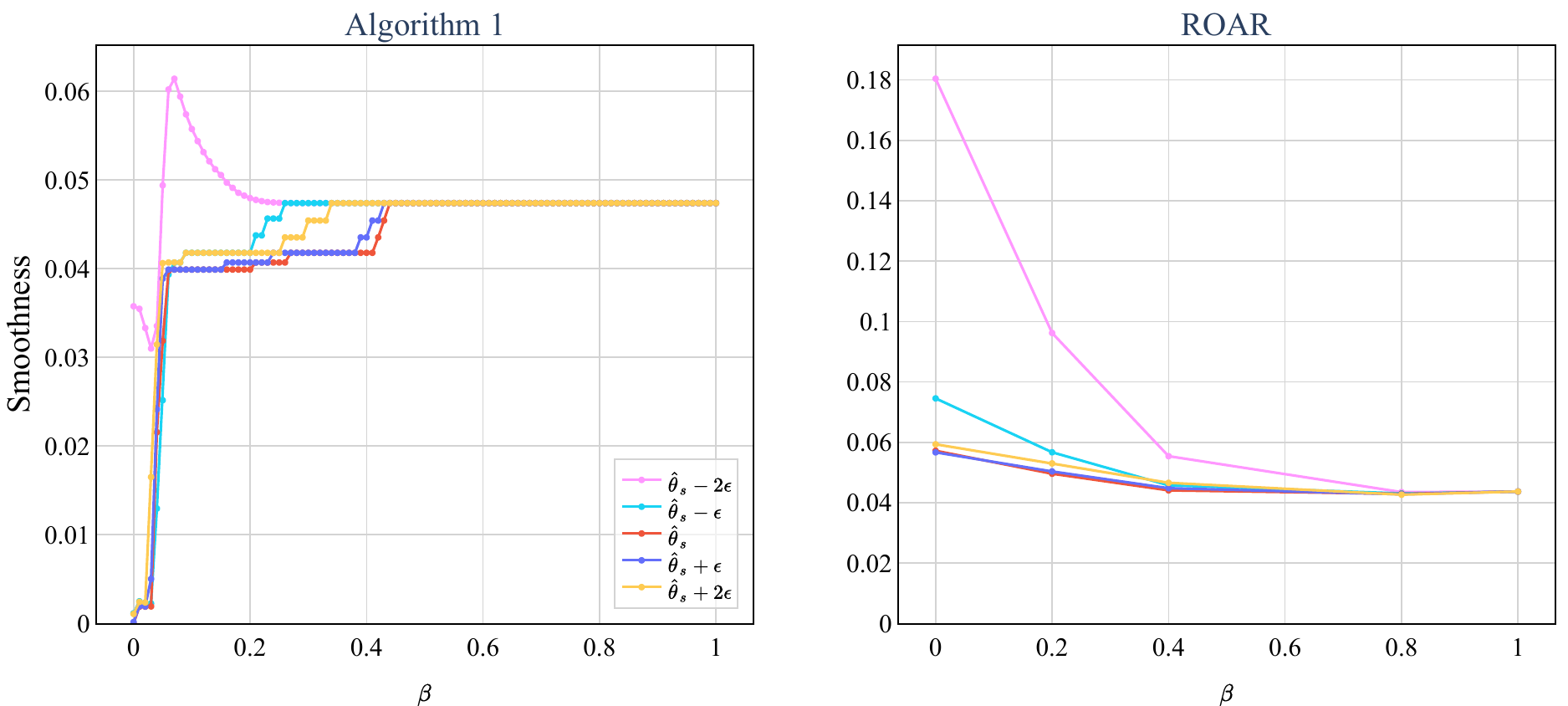}
           \caption{Synthetic Dataset, Logistic Regression}
        \label{fig:predictive_performance_lr_synthetic_comparison}
       \end{subfigure}
            \begin{subfigure}[b]{0.45\textwidth}
           \centering
           \includegraphics[width=\textwidth]{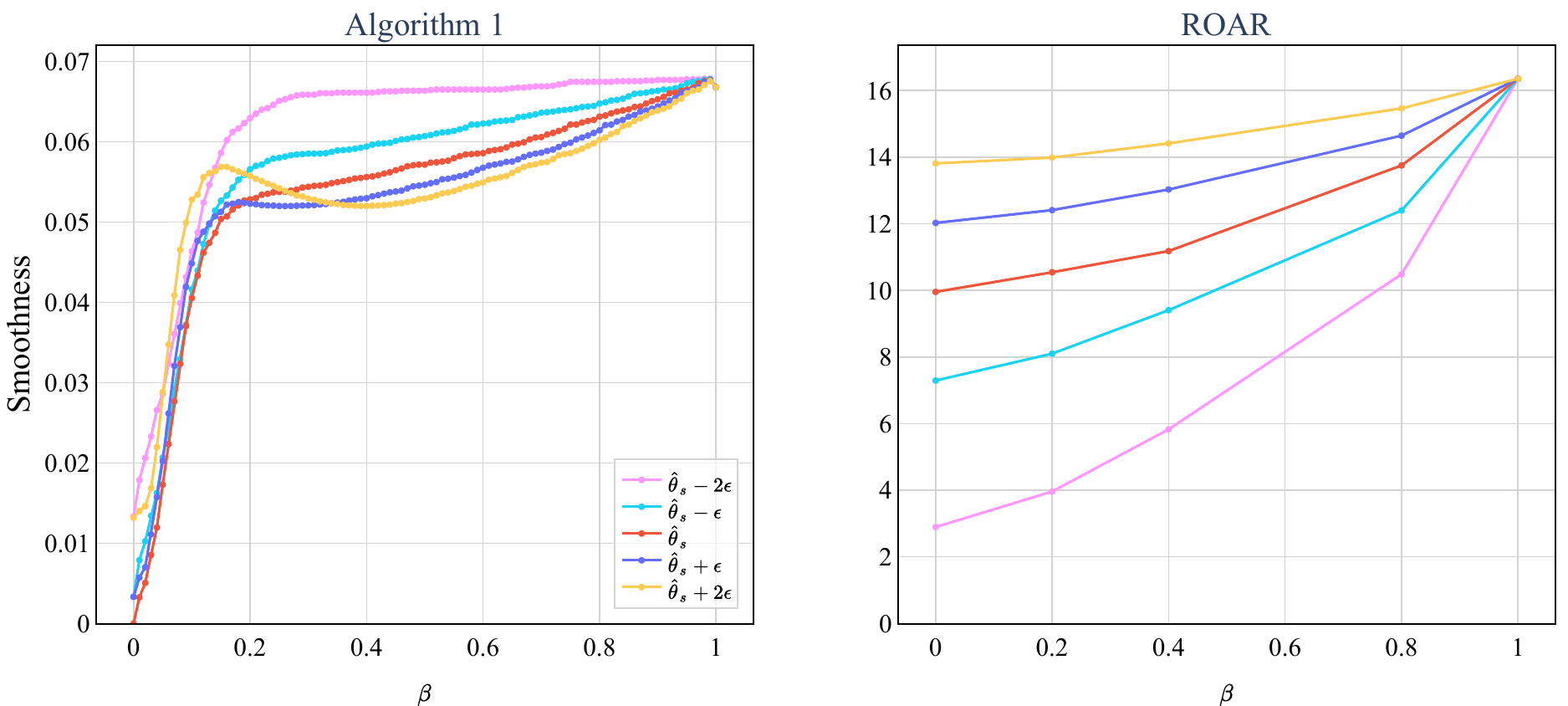}
           \caption{Synthetic Dataset, Neural Network}
           \label{fig:predictive_performance_nn_synthetic_comparison}
       \end{subfigure}
       \begin{subfigure}[b]{0.45\textwidth}
           \centering
           \includegraphics[width=\textwidth]{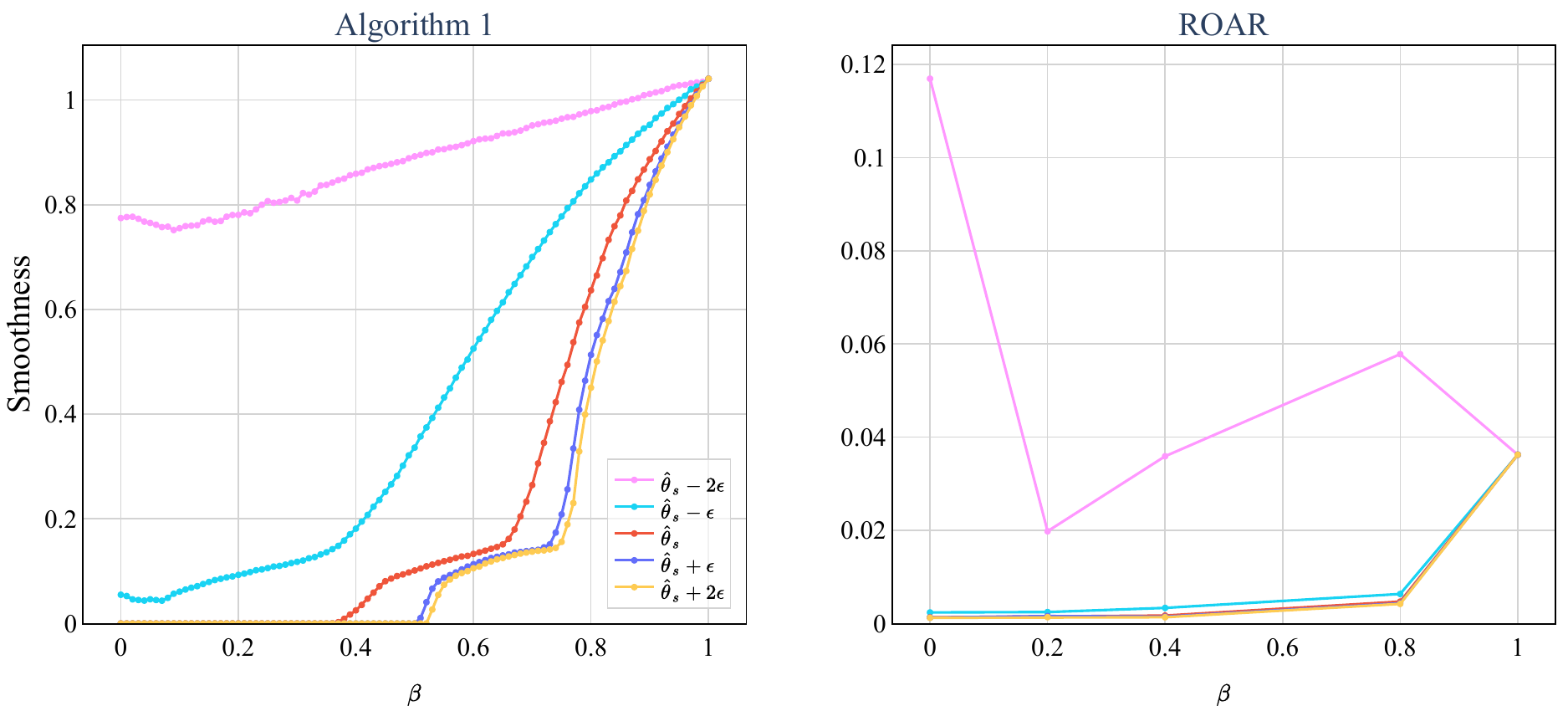}
           \caption{German Credit Dataset, Logistic Regression}
           \label{fig:predictive_performance_lr_comparison}
       \end{subfigure}
       \begin{subfigure}[b]{0.45\textwidth}
           \centering
           \includegraphics[width=\textwidth]{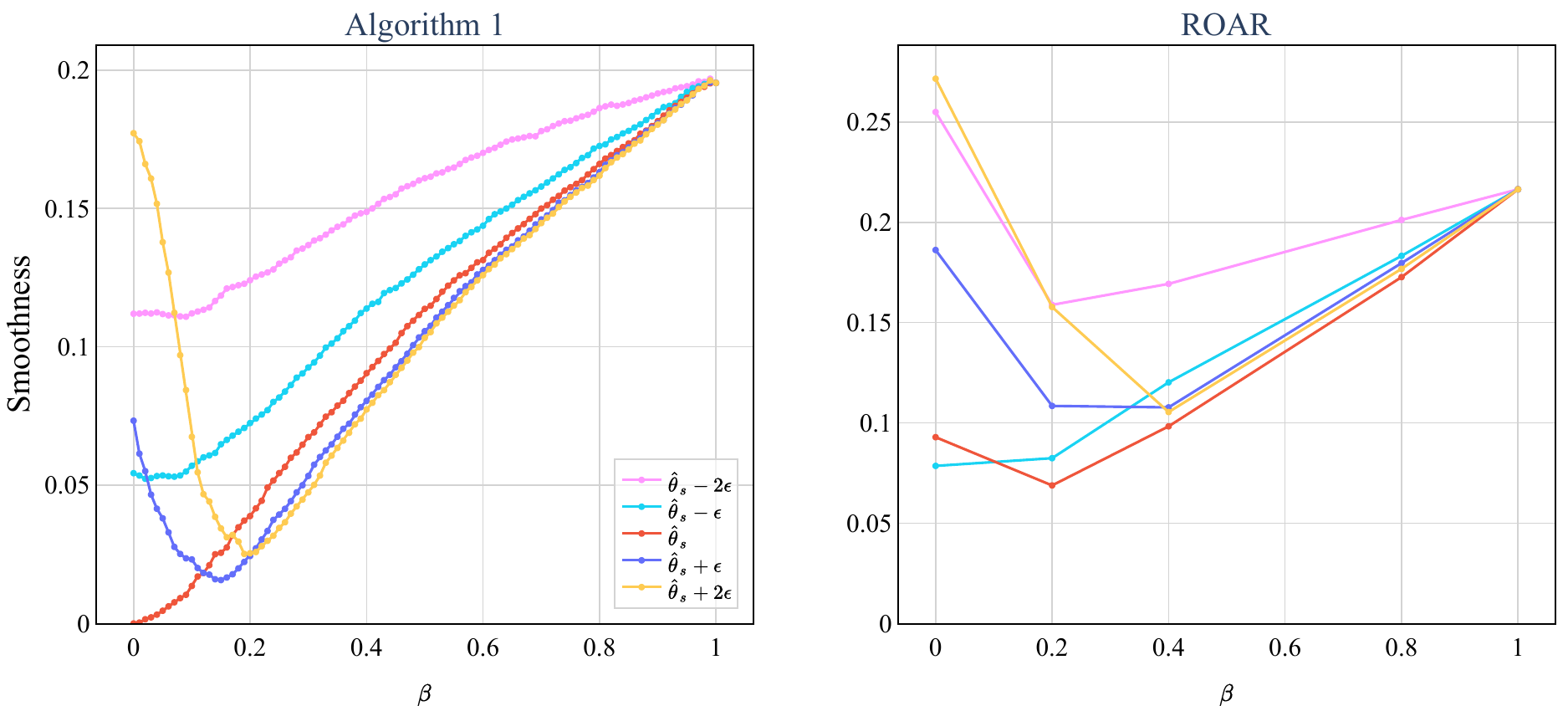}
           \caption{German Credit Dataset, Neural Network}
           \label{fig:predictive_performance_nn_german_comparison}
       \end{subfigure}
       \begin{subfigure}[b]{0.45\textwidth}
           \centering
           \includegraphics[width=\textwidth]{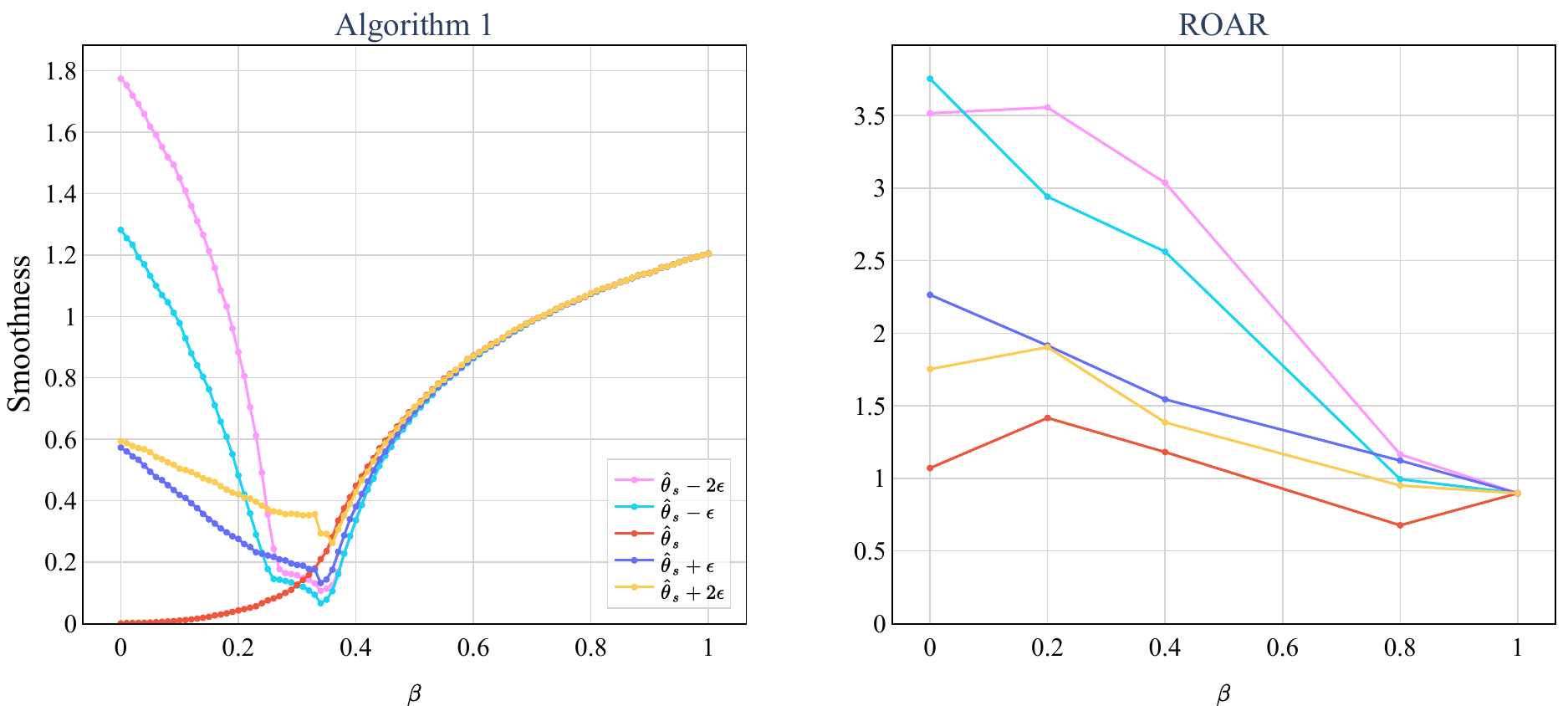}
           \caption{Small Business Dataset, Logistic Regression}
           \label{fig:predictive_performance_lr_sba_comparison}
       \end{subfigure}
       \begin{subfigure}[b]{0.45\textwidth}
           \centering
           \includegraphics[width=\textwidth]{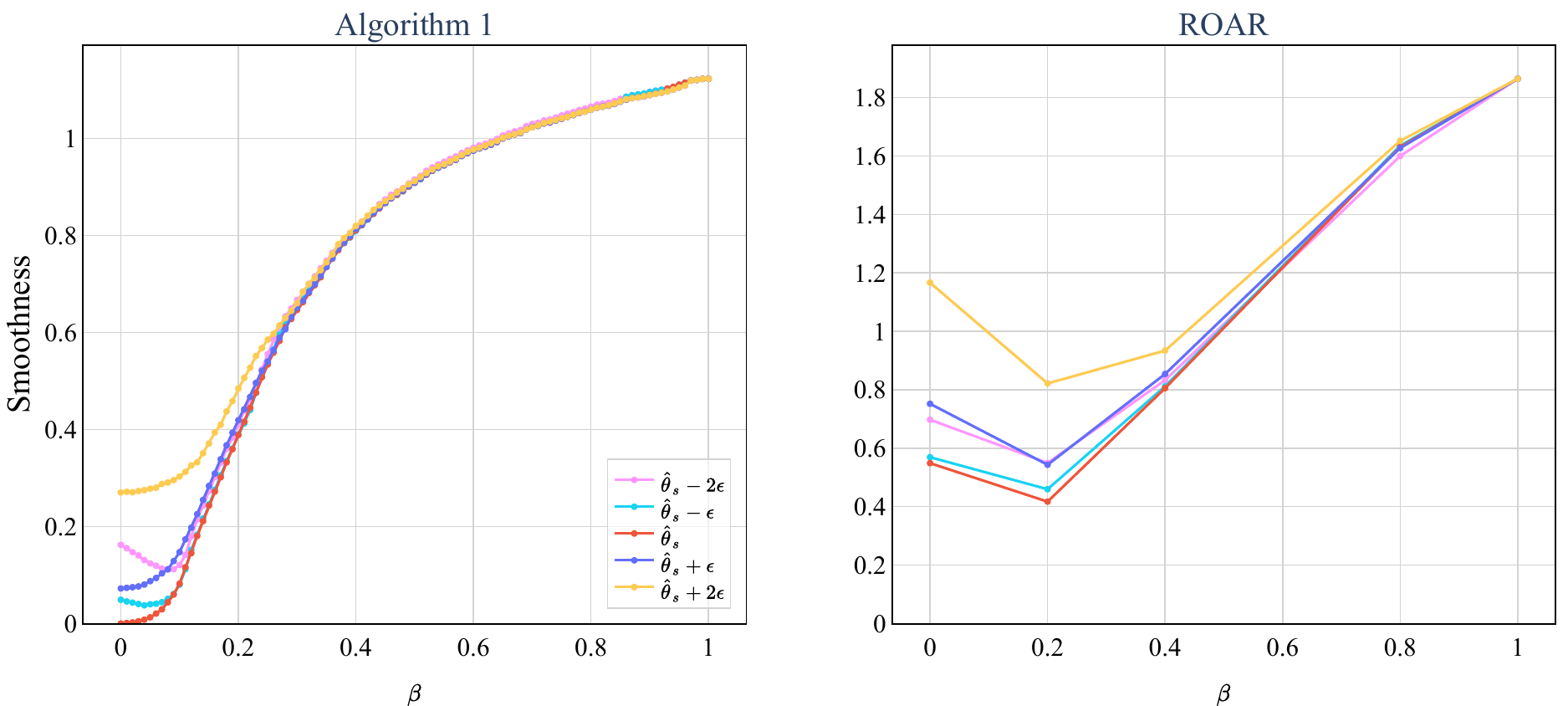}
           \caption{Small Business  Dataset, Neural Network}
           \label{fig:predictive_performance_nn_sba_comparison}
      \end{subfigure}
          \caption{The smoothness analysis for predictions with different accuracies: rows and columns correspond to different datasets and models as indicated in the sub-caption. In each subfigure, each curve corresponds to a different prediction and tracks the smoothness as a function of $\beta$ for the given prediction. In each subfigure, the left corresponds to Algorithm~\ref{alg:l1-linf} and the right corresponds to ROAR.\label{fig:smoothness-baseline}}
\end{figure*}

\subsection{Computing Robust Recourse}
\label{sec:exp-validity-appx}

\begin{figure*}[ht!]
        \centering
       \begin{subfigure}[b]{0.45\textwidth}
           \centering
           \includegraphics[width=\textwidth]{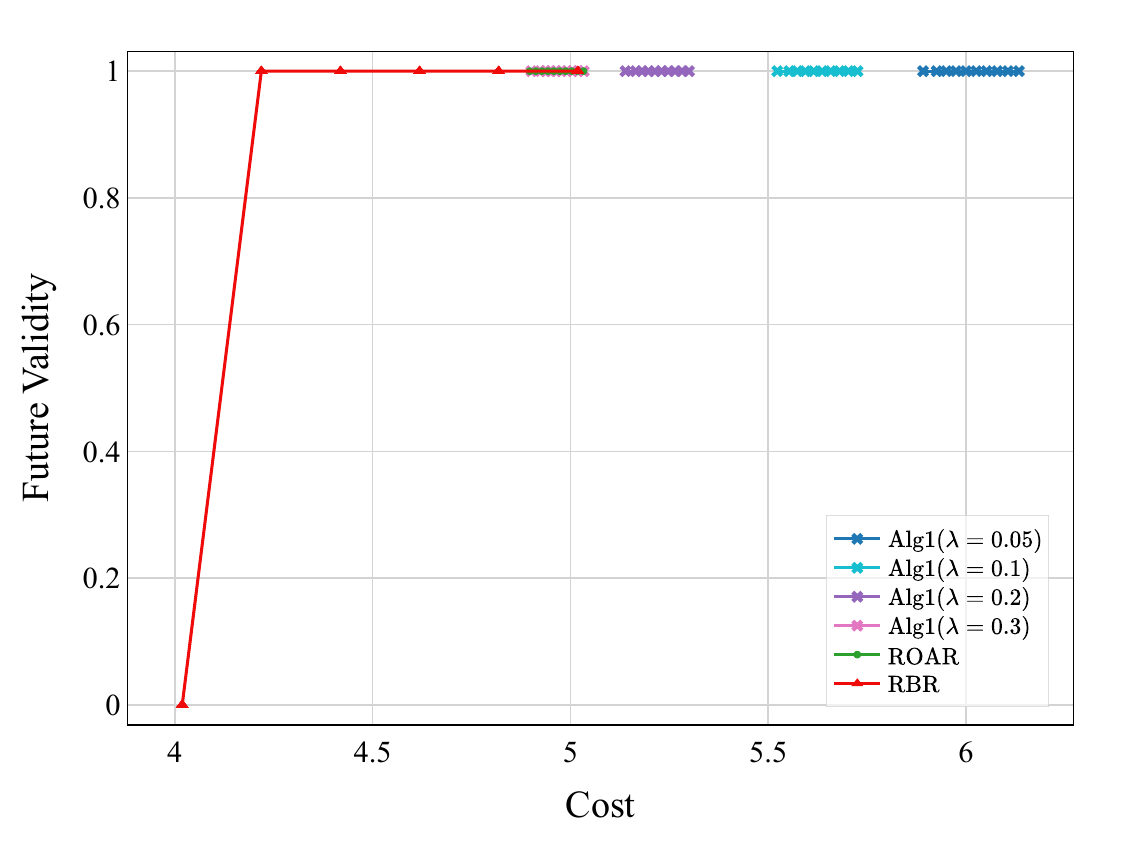}
           \caption{Synthetic Dataset, Logistic Regression}
           \label{fig:cost_validity_tradeoff_lr_synthetic}
       \end{subfigure}
            \begin{subfigure}[b]{0.45\textwidth}
           \centering
           \includegraphics[width=\textwidth]{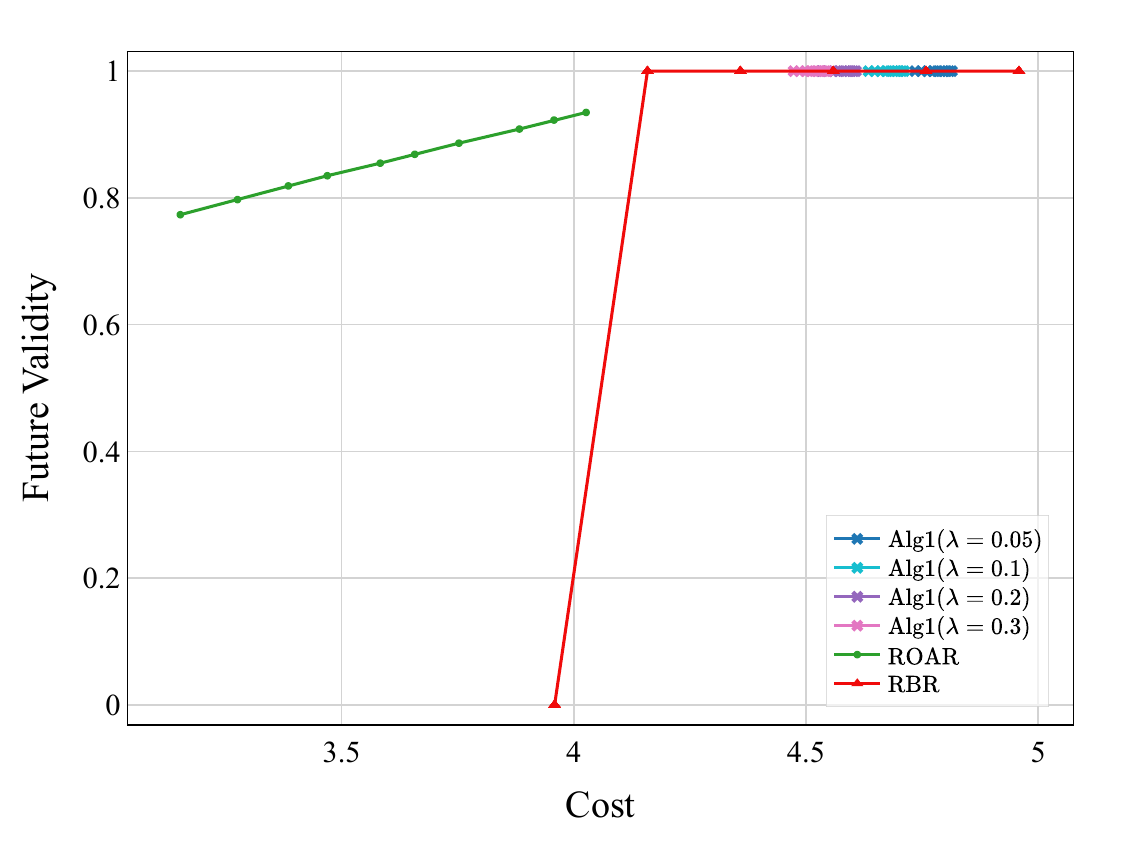}
           \caption{Synthetic Dataset, Neural Network}
           \label{fig:cost_validity_tradeoff_nn_synthetic}
       \end{subfigure}
       \begin{subfigure}[b]{0.45\textwidth}
           \centering
           \includegraphics[width=\textwidth]{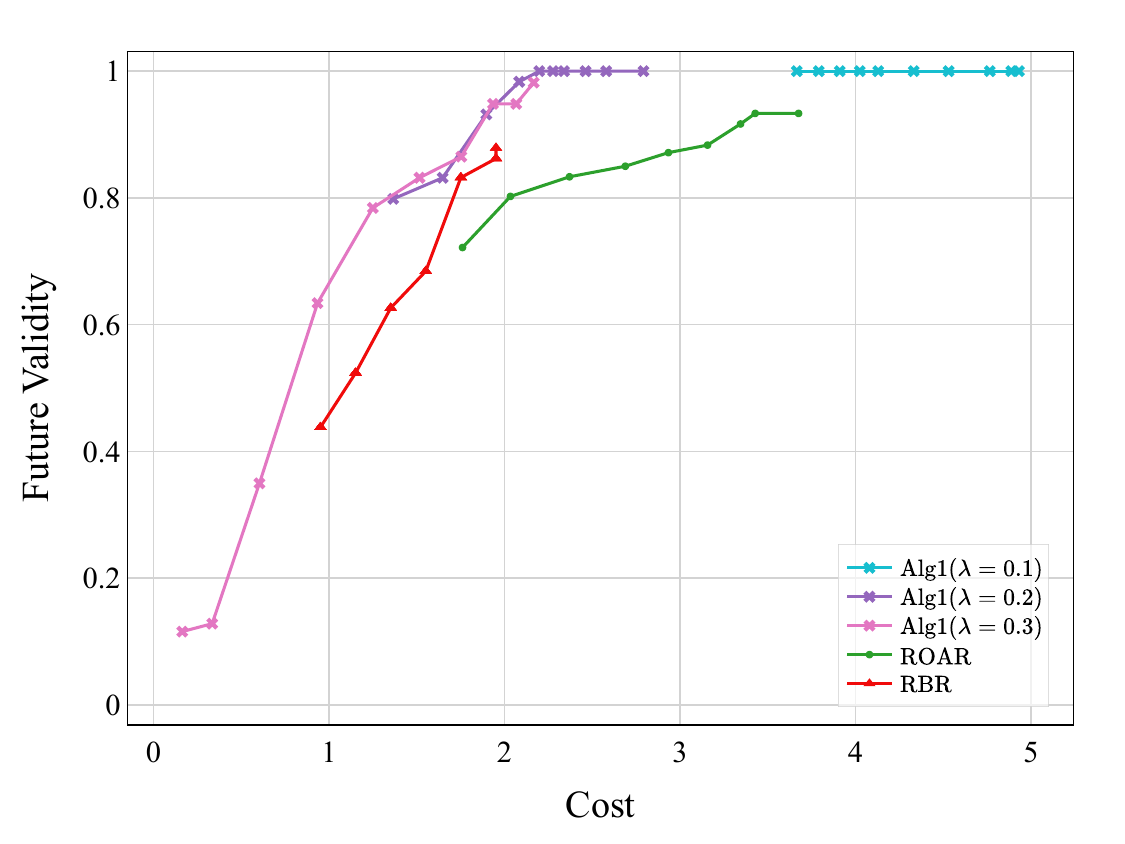}
           \caption{German Credit Dataset, Logistic Regression}
           \label{fig:cost_validity_tradeoff_lr_german}
       \end{subfigure}
       \begin{subfigure}[b]{0.45\textwidth}
           \centering
           \includegraphics[width=\textwidth]{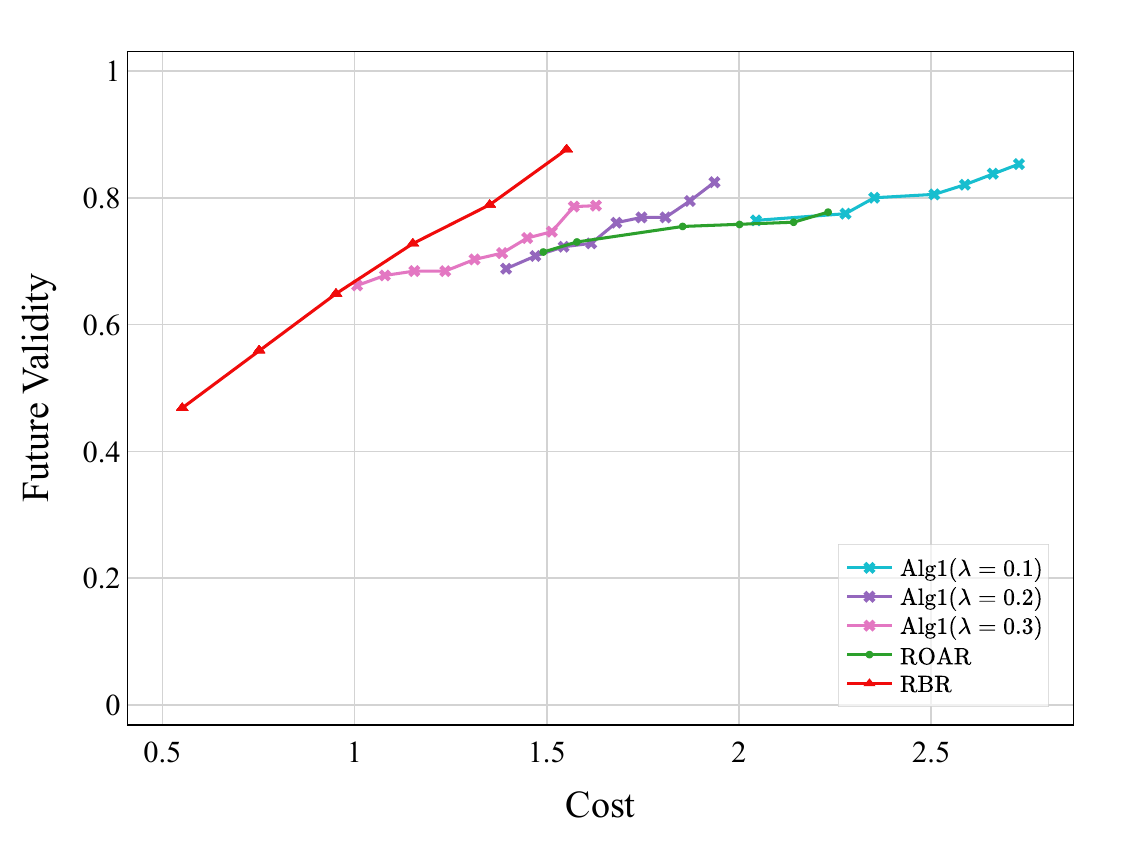}
           \caption{German Credit Dataset, Neural Network}
           \label{fig:cost_validity_tradeoff_nn_german}
       \end{subfigure}
       \begin{subfigure}[b]{0.45\textwidth}
           \centering
           \includegraphics[width=\textwidth]{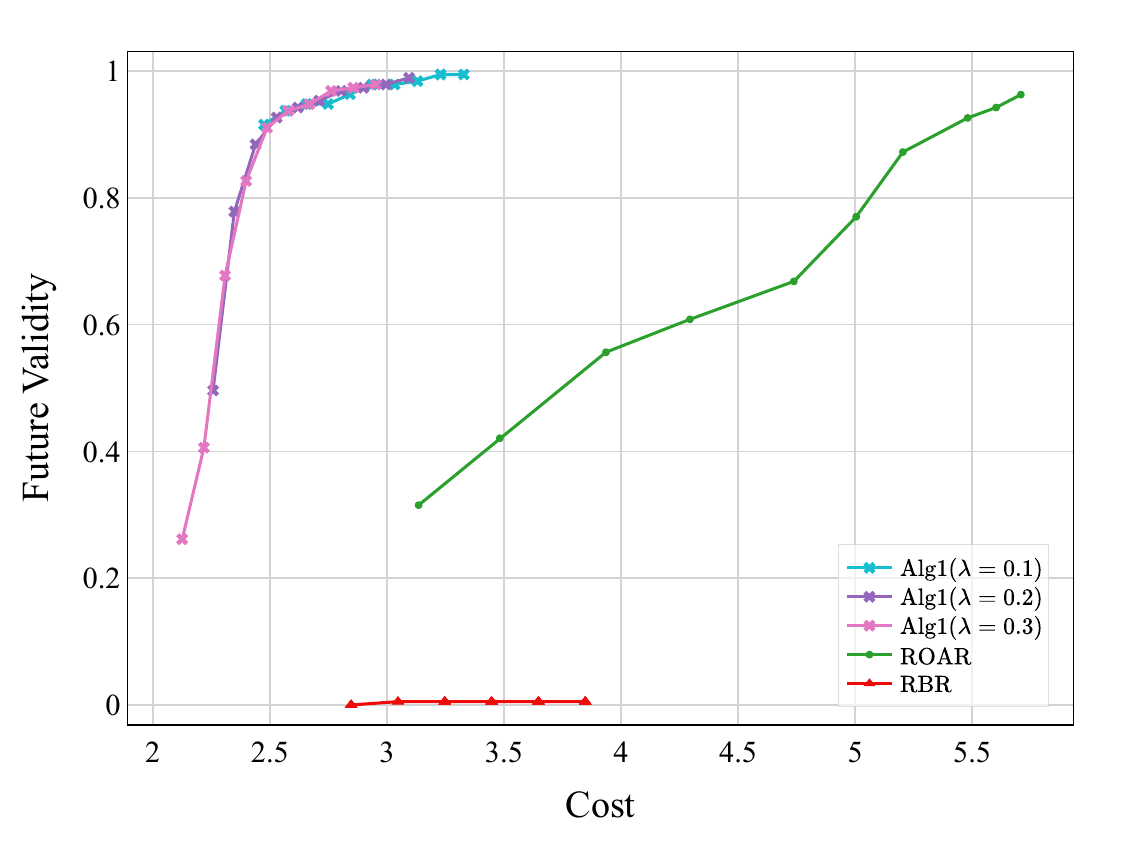}
           \caption{Small Business Dataset, Logistic Regression}
           \label{fig:cost_validity_tradeoff_lr_sba}
       \end{subfigure}
       \begin{subfigure}[b]{0.45\textwidth}
           \centering
           \includegraphics[width=\textwidth]{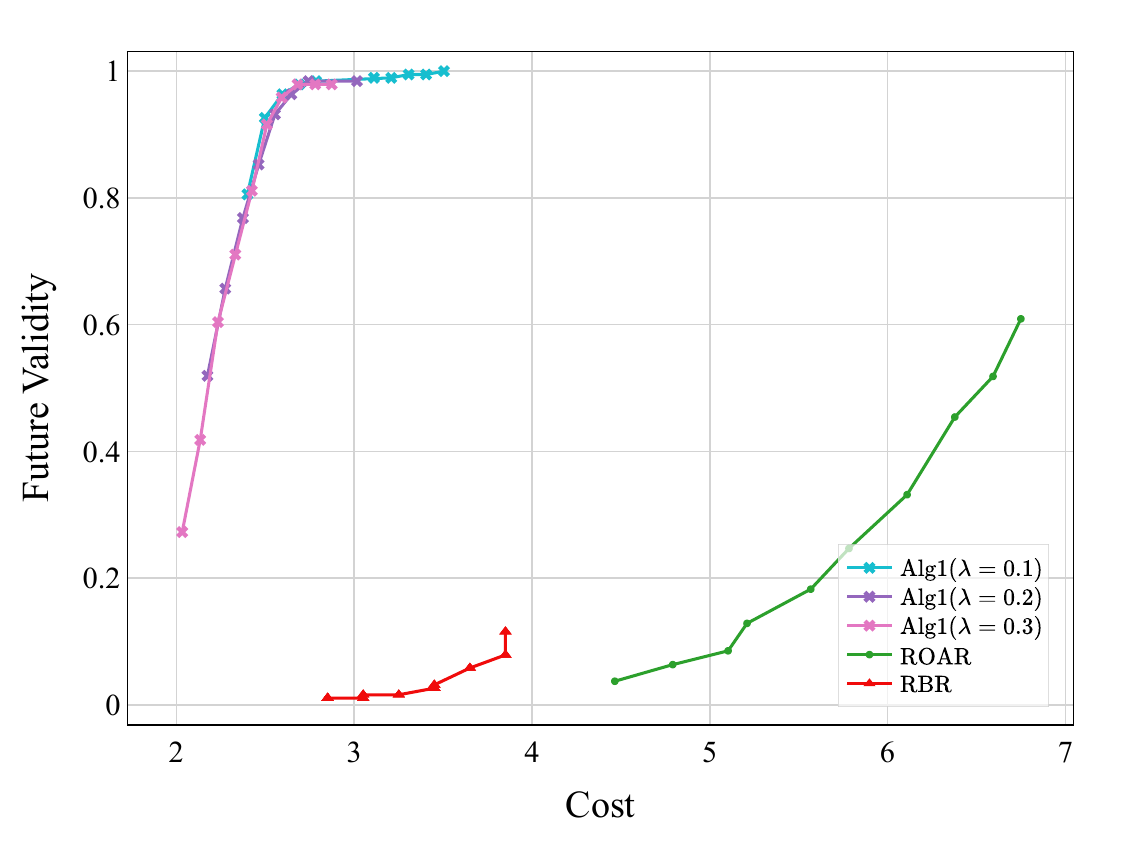}
           \caption{Small Business  Dataset, Neural Network}
           \label{fig:cost_validity_tradeoff_nn_sba}
      \end{subfigure}
          \caption{The trade-off between future validity and cost: rows and columns correspond to different datasets and models as indicated in the sub-caption. In each subfigure, curves show the trade-off for different algorithms specified in the legend.} 
 \label{fig:cost-validity-trade-off}
\end{figure*}
In this section, we repeat the experiments in Section~\ref{sec:findings-comp} but also include comparisons with RBR~\citep{NguyenBN+22}. Figure~\ref{fig:cost-validity-trade-off} depicts the trade-off between worst-case validity and cost for all datasets (rows) and models (columns). Since RBR does not have the same parameters as our algorithm and ROAR, the trade-off for RBR is obtained by replicating their experiments and setting the ambiguity sizes to $\epsilon_0, \epsilon_1 \in [0, 1]$ with increments of 0.5, and the maximum recourse cost $\delta = \|\xz - \robx\|_1 + \delta_+$ to $\delta_+ \in [0, 1]$ with increments of 0.2. 

\edit{For neural network models, the fidelity of LIME approximation for Synthetic, Small Business, and German datasets is 0.93, 0.54, and 0.02 when averaged over instances where recourse is provided. These fidelity values represent a wide range for the approximation quality of LIME. We see, in the right panel of Figure~\ref{fig:cost-validity-trade-off}, that our algorithm performs similarly to or better than RBR when fidelity is relatively reasonable, and only when fidelity is extremely low (German dataset) does its performance become dominated by RBR. For logistic regression models, in the left panel of Figure~\ref{fig:cost-validity-trade-off}, our algorithm generally finds very high validity, while this is not the case for RBR, which sometimes achieves validity even smaller than ROAR. We also observe that, to achieve lower or moderate validity values, $\lambda$ needs to be set higher than 0.3 for our algorithm.}

\subsection{Experiments on Larger Datasets}\label{sec:app-larger-data}
In this section, we provide experimental results for a much larger dataset (both in terms of the number of instances and number of features) compared to the datasets in Section~\ref{sec:exp}. The running time of our algorithm scales linearly with the number of instances for which recourse is provided. For each instance, the running time of our algorithm grows linearly in the number of features since the minimization problem in Line 13 of our algorithm can be solved analytically. For non-linear models, the cost of approximating the model with a linear function should be added to the price per instance.

We use the ACSIncome-CA~\citep{DingHMS21} dataset for experiments in this section. This dataset originally consisted of 195,665 data points and 10 features, 7 of which are categorical and have been one-hot encoded. However, to lower the runtime, we subsampled the dataset to include 50,000 data points and removed the categorical feature ``occupation (OCCP)\edit{''}, as it contains more than 500 different occupations. This resulted in more than 250 features after one-hot encoding.

Figure~\ref{fig:cost-validity-trade-off-large-dataset} depicts the trade-off between the cost and validity of recourse for both logistic regression and neural network models. The choices of parameters used for results in Figure~\ref{fig:cost-validity-trade-off-large-dataset} are the same as the results for Figure~\ref{fig:cost-validity-trade-off} in Section~\ref{sec:findings-comp}. We observe that even in a dataset with a much larger number of features, Algorithm~\ref{alg:l1-linf} can generate recourses with high validity, especially for logistic regression models. Similar to Figure~\ref{fig:cost-validity-trade-off}, achieving very high validity comes at a cost of higher implementation cost, which is higher than the cost required for smaller datasets. 

\begin{figure}[ht!]
        \centering
       \begin{subfigure}[b]{0.45\textwidth}
           \centering
           \includegraphics[width=\textwidth]{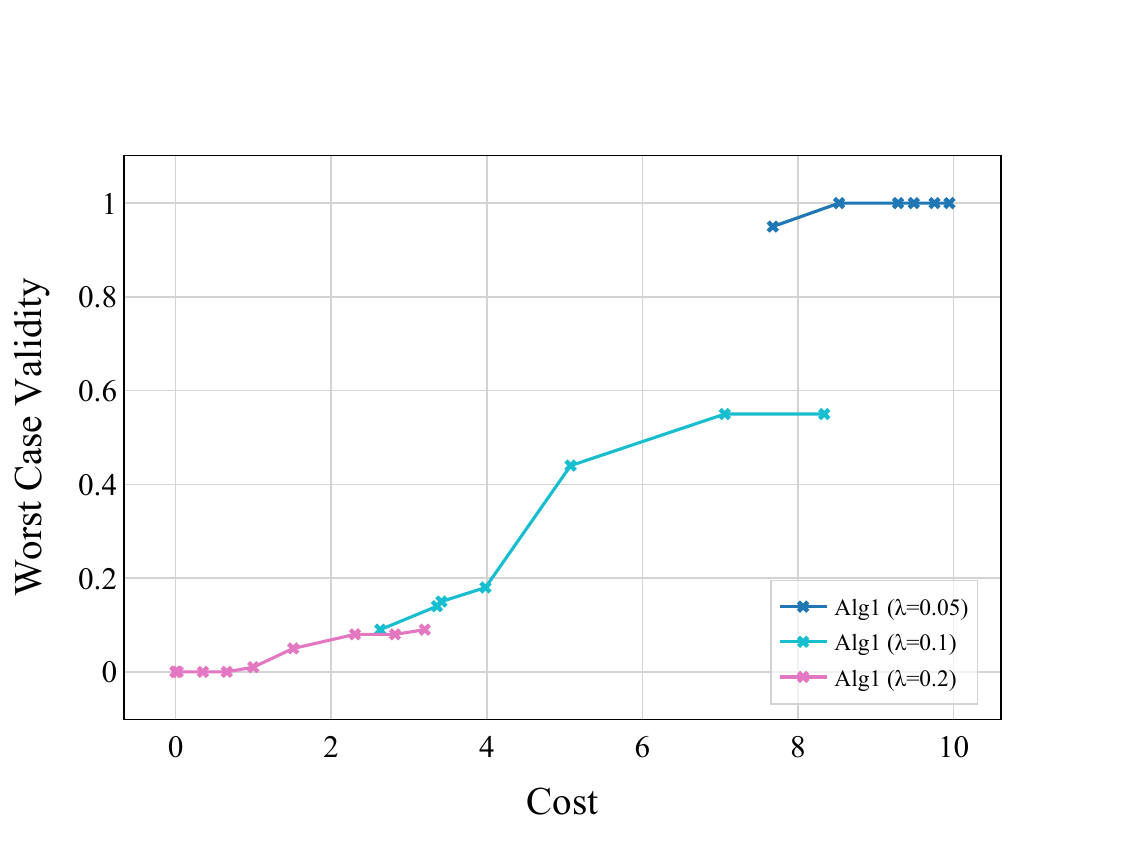}
           \caption{ACS Income Dataset, Logistic Regression}
       \end{subfigure}
            \begin{subfigure}[b]{0.45\textwidth}
           \centering
           \includegraphics[width=\textwidth]{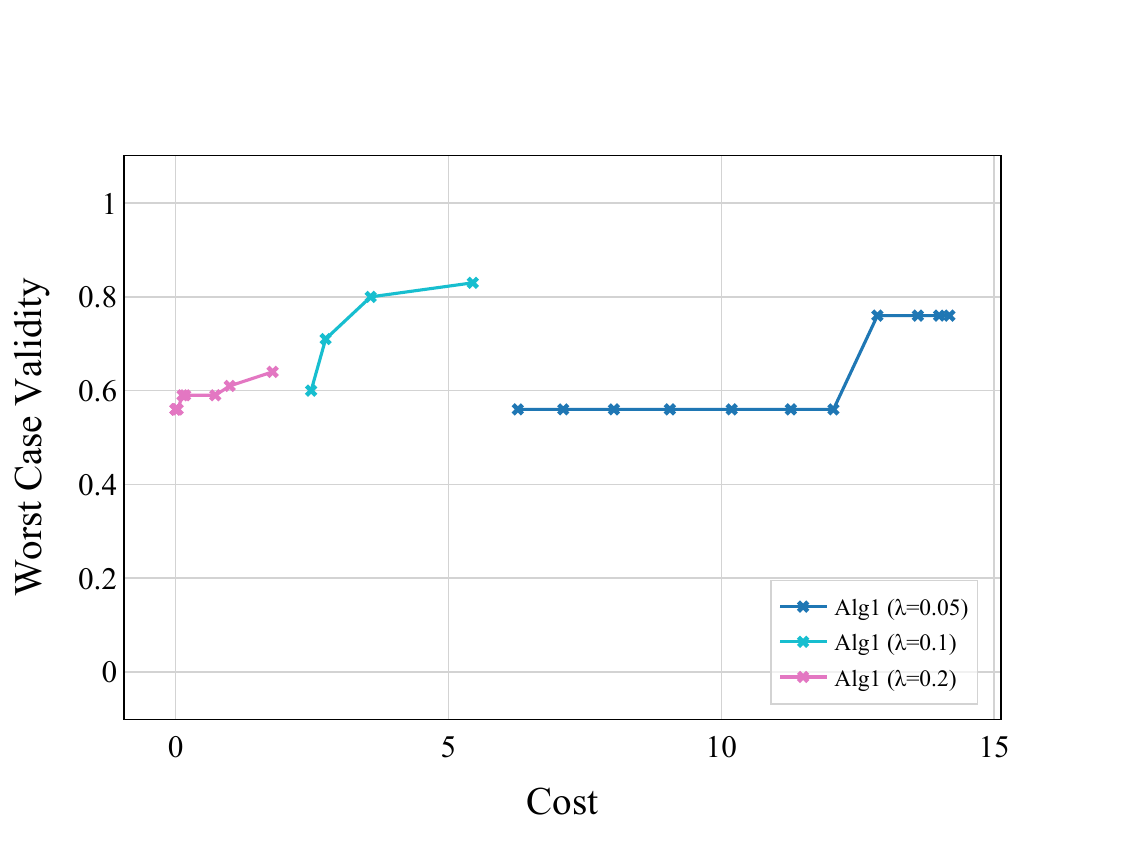}
           \caption{ACS Income Dataset, Neural Network}
       \end{subfigure}
      \caption{The trade-off between the worst-case validity and the cost of recourse. The left panel is for the logistic regression, while the right panel is for a neural network for the ACS Income dataset. In each subfigure, each curve shows the trade-off for different methods as mentioned in the legend.}
      \label{fig:cost-validity-trade-off-large-dataset}
\end{figure}


\end{document}